\newcommand{\sad}{\mu} %
\newcommand{\sd}{\mu} %
\newcommand{\be}{\mathbf{e}}
\newcommand{\RR}{\mathbb{R}}
\newcommand{\EE}{\mathbb{E}}
\newcommand{\PP}{\mathbb{P}}
\newcommand{\II}{\mathbb{I}}
\newcommand{\NN}{\mathbb{N}}
\newcommand{\cS}{\mathcal{S}}
\newcommand{\cA}{\mathcal{A}}
\newcommand{\cP}{\mathcal{P}}
\newcommand{\cR}{\mathcal{R}}
\newcommand{\cF}{\mathcal{F}}
\newcommand{\cE}{\mathcal{E}}
\newcommand{\cX}{\mathcal{X}}
\newcommand{\cO}{\mathcal{O}}
\newcommand{\cU}{\mathcal{U}}
\DeclareMathOperator*{\argmax}{arg\,max}
\DeclareMathOperator*{\argmin}{arg\,min}
\DeclareMathOperator*{\reg}{Reg}
\DeclareMathOperator*{\adareg}{AdaReg}
\renewcommand{\epsilon}{\ensuremath\varepsilon}
\DeclareSymbolFont{oldletters}{OML}{cmm}{m}{it}
\newtheorem{theorem}{Theorem}[section]
\newtheorem{lemma}[theorem]{Lemma}
\newtheorem{proposition}[theorem]{Proposition}
\theoremstyle{definition}
\newtheorem{definition}[theorem]{Definition}
\newtheorem{assumption}{Assumption}
\newtheorem{remark}[definition]{Remark}
\newcommand{\bmu}{\bar{\mu}}
\newcommand{\bpi}{\bar{\pi}}
\newcommand{\nz}{\text{nz}}
\newcommand{\z}{\text{z}}
\newcommand{\tcO}{\tilde{\cO}}
\begin{document}

\twocolumn[

\aistatstitle{Steering No-Regret Agents in MFGs under Model Uncertainty}

\aistatsauthor{ Leo Widmer \And Jiawei Huang \And  Niao He }

\aistatsaddress{
    \texttt{lewidmer@student.ethz.ch, \{jiawei.huang, niao.he\}@inf.ethz.ch} \\
    Department of Computer Science\\
    ETH Zürich
}
]

\allowdisplaybreaks

\begin{abstract}
    Incentive design is a popular framework for guiding agents' learning dynamics towards desired outcomes by providing additional payments beyond intrinsic rewards. However, most existing works focus on a finite, small set of agents or assume complete knowledge of the game, limiting their applicability to real-world scenarios involving large populations and model uncertainty. To address this gap, we study the design of steering rewards in Mean-Field Games (MFGs) with density-independent transitions, where both the transition dynamics and intrinsic reward functions are unknown. This setting presents non-trivial challenges, as the mediator must incentivize the agents to explore for its model learning under uncertainty, while simultaneously steer them to converge to desired behaviors without incurring excessive incentive payments. Assuming agents exhibit no(-adaptive) regret behaviors, we contribute novel optimistic exploration algorithms. Theoretically, we establish sub-linear regret guarantees for the cumulative gaps between the agents' behaviors and the desired ones. In terms of the steering cost, we demonstrate that our total incentive payments incur only sub-linear excess, competing with a baseline steering strategy that stabilizes the target policy as an equilibrium. Our work presents an effective framework for steering agents behaviors in large-population systems under uncertainty.

\end{abstract}

\section{INTRODUCTION}

Mean-Field Games (MFGs) \citep{huang2006large,lasry2007mean} are a widely-used and powerful framework to model the competition and cooperation of large population systems involving symmetric and interchangeable agents.
MFGs effectively capture the dynamics of many real-world scenarios, such as macro-economic models \citep{steinbacher_advances_2021}, road traffic systems \citep{chen2010trafficsystems}, autonomous vehicle systems \citep{dinneweth_multi-agent_2022} and auctions \citep{iyer2014mean}, and it has been successfully applied in those domains \citep{gomes_socio-economic_2014, cabannes_solving_2021, achdou_mean_2019, guo_learning_2021}.
Similar to the finite-agent systems \citep{Roughgarden2007-ts}, MFGs with self-interested agents may lead to undesirable collective behaviors.
Typically, the agents' learning dynamics may converge to equilibria where all the participants are worse off compared to other possible outcomes \citep{guo_mesob_2023}.

To address this dilemma, the field of incentive design explores methods to guide agents towards more favorable behaviors by modifying the reward structure. A widely-studied formulation, known as the \emph{steering problem} \citep{zhang_steering_2024,canyakmaz_steering_2024,huang_learning_2024}, assumes the presence of a mediator (incentive designer) outside the game, who can influence the agents' learning dynamics by providing additional steering rewards.
However, previous research on steering mainly focuses on either Extensive-Form Games \citep{zhang_steering_2024} or Markov Games \citep{canyakmaz_steering_2024,huang_learning_2024} with a limited number of agents.
The methods developed for those small-scale settings become intractable when applied to large-population scenarios, as the number of agents increases, which is known as the curse of multi-agency.

To address this gap, in this work, we study incentive design in Mean-Field Games (MFGs).
More concretely, we focus on the finite-horizon MFGs with density-independent transitions, a standard model in literature \citep{huang2006large,lasry2007mean,perolat_scaling_2021}.
In the steering problem setup, we play the role of the mediator, with access to a utility function dependent on the collective behavior of the agents through the population density.
During the interactions with the mediator, the agents are continuously learning and adapting. We assume the agents are self-interested no-adaptive-regret learners \citep{AdaptiveRegret}; similar no-regret assumptions have been widely adopted in previous literature \citep{camara_mechanisms_2020, ge2024principledsuperhumanaimultiplayer}.
Following previous works \citep{zhang_steering_2024,huang_learning_2024}, our primary goal is to design steering rewards that guide the agents towards desired policies (i.e., minimize the \emph{steering gap}), such that the resulting behaviors maximize the utility function. Meanwhile, the incentives paid by the mediator to the agents, referred to as the \emph{steering cost}, should remain low.

In practice, the mediator usually lacks knowledge of the transition dynamics and the intrinsic reward functions of the MFGs.
Therefore, in this work, we focus on the design of steering strategies without prior knowledge of the game model.
The model uncertainty makes the steering problem much more challenging, and requires the mediator to strategically balance the exploration and exploitation.
Typically, without knowledge of the MFG model, the mediator does not know which agents' behaviors (population densities) are feasible, let alone how to maximize the utility function.
Therefore, the mediator needs not only to steer the agents to explore the MFG for its own learning, but also ensure the agents converge to desired outcomes, while keeping the accumulative incentive payments affordable.
In summary, the key question we would like to address is:
\begin{center}
\emph{
    How can we design effective steering strategies for no-regret agents in MFGs under model uncertainty?
}
\end{center}

\textbf{Main contributions}\quad
We address the above open question by proposing novel exploration algorithms with provable guarantees.
We highlight our main contributions in the following. A summary of the main theorems in this paper can be found in Appx.~\ref{appx:summary}.

\begin{itemize}[leftmargin=*]
    \item Firstly, in Sec.~\ref{sec:steering_formulation}, we contribute the first formulation for steering in mean-field games, with details about the problem setting and learning objectives.

    \item Secondly, as preparation, in Sec.~\ref{chapter: steering}, we investigate how to steer the agents to a given density or policy.
    Notably, in Sec.~\ref{sec: policy incentivization}, we propose a novel steering strategy, which can guide the no-adaptive-regret agents towards any target policy without prior knowledge of the model.
    This method serves as the key ingredient of our steering algorithms in the following sections.
    
    \item Thirdly, in Sec.~\ref{sec: incentive design with no original reward} and Sec.~\ref{sec: incentive design with unknown reward}, we investigate strategic exploration methods for steering agents in MFGs under uncertainty.
    In Sec.~\ref{sec: incentive design with no original reward}, we start with the setting where the intrinsic reward is zero, and propose an optimism-based exploration algorithm, which guarantees that both the cumulative steering gap and cost only have sub-linear growth.
    Furthermore, in Sec.~\ref{sec: incentive design with unknown reward}, we extend our methods to the setting with non-zero and unknown intrinsic reward by integrating a pessimism-based reward estimation strategy. We establish sub-linear regret in steering gap, and show that the total steering cost is only sub-linearly worse, compared to a baseline strategy that stabilizes the target policy as an equilibrium by offsetting differences in intrinsic rewards.
\end{itemize}

\subsection{Closely Related Work}\label{sec:related_work}
Due to the limit of space, we only discuss closely related works here and defer the others to Appendix~\ref{appx:related_works}.

\textbf{Incentive Design in Multi-Agent Systems}\quad
The problem of incentive design broadly refers to the design of mechanisms for shaping the behavior of autonomous agents \citep{ehtamo2002recent, ratliff2019perspective}.
A recently popular framework for incentive design is known as the steering problem \citep{zhang_steering_2024,canyakmaz_steering_2024, huang_learning_2024}, which considers a repeated interaction between a mediator and learning agents.
All of them focuses on small-scale problems (e.g., Markov Games or Extensive-Form Games) and their proposed methods become intractable when extending to large-population setting, including MFGs.
Besides, \citet{canyakmaz_steering_2024, huang_learning_2024} consider the agents' learning dynamics to be memoryless, which is different from our no-regret assumptions.

Another related direction is contract design\footnote{To save space, we defer to Appx.~\ref{appx:related_works} more elaboration of the comparisons between our steering framework and contract design setting.} \citep{dellavigna2004contract}, which studies the interactions between a principal and agents when the two parties transact in the presence of private information.
The fundamental question is how the principal should design the incentives for the agents to maximize its own utility after deducting the payments to agents.
However, most of literature study the single-agent setting 
\citep{zhu2022sample, ho2014adaptive, scheid2024incentivized}, or focus on the computational aspects without addressing exploration under uncertainty \citep{dutting2023multi,castiglioni2023multi}.
\citep{carmona2021finite, elie2019tale} study contract design in the MFGs setting, but none of them consider model uncertainty.
Moreover, a common assumption in those works is that the agents always do the best response (or take equilibrium policies) to the principal's intervention, which is much stronger than our no-adaptive-regret assumption.

Besides, \citet{sanjari_incentive_2024} consider incentive design in a large-population setting, but they study the Stackelberg games with one leader and a large number of followers, which differs quite substantially to ours.
\citet{fu2018meanfieldleaderfollowergames} consider mean-field leader-follower games, however they assume knowledge of the dynamics, while we consider the steering problem without this knowledge.
Moreover, they study the dynamics where the agents cooperate together and optimally respond to the leader’s control signal. In contrast, we consider decentralized and self-interested agents with no-regret behaviors in maximizing individual interests.

\section{PRELIMINARIES}\label{chapter: preliminaries}\label{sec: MFG}\label{sec: eluder dimension}

\textbf{Mean-Field Games}\quad
We consider the MFG setting with a finite yet extremely large number of agents, each of which acts independently. In line with \citet{subramanian2022decentralized}, we refer to this setting as ``Decentralized-MFGs'', although their model allows diversity in action space and reward functions for agents.
\begin{definition}
    A \emph{Finite-Horizon Decentralized MFG} is defined by a tuple $M=(N,\cS,\cA, H,\PP_M,r_M,\sd_1)$, given the number of agents $N$; state and action spaces $\cS,\cA$ with sizes $S$ and $A$; horizon length $H$; initial state distribution $\sd_1\in\Delta_\cS$.
    $\PP_M:=\{\PP_{M,h}\}_{h=1}^H$ with $\PP_{M,h}:\cS\times\cA\to\Delta_\cS$ and $r_M:=\{r_{M,h}\}_{h=1}^H$ with $r_{M,h}:\cS\times\cA\times\Delta_{\cS\times\cA}\to[0,r_{\max}]$ denote the transition and reward function, respectively.
\end{definition}
In this paper, we focus on density-independent transition function, a common assumption in previous literature \citep{huang2006large,lasry2007mean,perolat_scaling_2021}.
For the reward function, we consider the general setup, where the rewards depend on the state-action density \citep{guo_learning_2021}.

We only focus on non-stationary Markovian policies, denoted by $\Pi:=\{\pi:=\{\pi_h\}_{h\in[H]}|\pi_h:\cS\rightarrow\Delta(\cA)\}$.
Given a model $M$, considering an agent taking policy $\pi\in\Pi$, we use $\sad^{\pi}_M:=\{\sad^{\pi}_{M,h}\}_{h=1}^H$
to denote its state-action density for each step $h\in[H]$.
Starting with $\sad^{\pi}_{M,1}(s,a)=\sd_1(s)\pi_1(a|s)$, for $1\leq h\leq H$, we have:
\begin{align*}
    \sad^{\pi}_{M,h+1}(s,a)=\pi_{h+1}(a|s)\sum_{s',a'}\PP_{M,h}(s|s',a')\sad^{\pi}_{M,h}(s',a').
\end{align*}

When $N$ agents take policies $\pi^1,...\pi^N\in\Pi$, respectively, the trajectory of agent $n\in[N]$ is specified by:
\begin{align}
    \textstyle s_1^n \sim \mu_1,~ \forall h\geq 1,~ &a_h^n \sim \pi^n_h(\cdot|s_h^n),
    ~s_{h+1}^n\sim\PP_{M,h}(\cdot|s_h^n,a_h^n),\nonumber
    \\ &r_h^n\gets r_{M,h}(s_h^n,a_h^n,\bar\sad_{M,h}).\label{eq:traj_generation}
\end{align}
where we use $\bar\sad_{M,h}=\frac1N\sum_{n=1}^N\sad^{\pi^{n}}_{M,h}$ to denote the \emph{population density} at step $h$.
We also assume $r_{M}$ is Lipschitz in the density, which is standard in previous works \citep{guo_learning_2021,yardim_policy_2022}.

\textbf{Other Notational Convention}\quad
For convenience, we implicitly treat $\sad^{\pi}_M$ as a vector in $\RR^{HSA}$ concatenated by $\{\mu_{M,h}^{\pi}\}_{h\in[H]}$.
We denote $\Psi_M:=\{\sad^{\pi}_M:\pi\in\Pi\}\subseteq\Delta_{\cS\times\cA}^H$ to be the set of all feasible state-action densities given $M$.
Note that $\Psi_M$ is a convex set (see Lem.~\ref{lem: state-action distribution set}), which implies $\bmu_M \in \Psi_M$.
If it is not necessary to distinguish what model $M$ we use, we omit it in the sub-scriptions, for example, $\sad^\pi/\Psi$ instead of $\sad^{\pi}_M/\Psi_M$.
We also omit $h$ in $s_h,a_h$ if it is clear from the context.
With slight abuse of notation, given a population density $\bmu:=\{\bmu_h\}_{h=1}^H\in\Delta^H_{\cS\times\cA}$ and a reward function $r$, we use $r(\bmu) \in \RR^{HSA}$ to denote the reward vector where $(r(\bmu))_{h,s,a}=r_h(s,a,\bmu_h)$.
In this way, given an arbitrary agent $n\in[N]$ taking policy $\pi$, its expected total return conditioning on population density $\bmu$ can be written as: $\EE_{\pi^n}[\sum_{h=1}^H r_h(s_h^n,a_h^n,\bmu_h)] = \langle r(\bmu), \mu^{\pi^n} \rangle$.

Given that this paper considers learning under uncertainty, we use $M^*$ to denote the true hidden mean-field model with transition $\PP^*$ and intrinsic reward $r^*$, in order to distinguish it from the estimated ones.

Besides, given a population density $\bmu$ in a model $M$, we will use $\bpi$ to denote the policy, which induces the population density (i.e., $\mu^{\bpi} = \bmu$), defined by: $\bpi_h(\cdot|s) := \bmu_{h}(s,\cdot) / \bmu_{h}(s)$ (or $\bpi_h(\cdot|s) = 1/A$ if $\bmu_{h}(\cdot)=0$).

\textbf{Reward Function Approximation and Eluder Dimension}\quad
In this paper, we consider the setting where the true intrinsic reward, denoted by $r^*$, is unknown. Note that the reward function depends on not only the state and action but also the density, which belongs to a high-dimensional continuous space.
Therefore, we consider function approximation for reward estimation with the standard realizability assumption.
\begin{assumption}\label{assump:realizability}
    A reward function class $\cR$ is available, s.t. (i) $\forall r\in\cR$, $\forall h,~r_h(\cdot,\cdot,\cdot)\in[0,r_{\max}]$; (ii) $r^* \in \cR$.
\end{assumption}
In the function approximation setting, the fundamental sample efficiency is closely related to the complexity of the function class.
We follow previous works \citep{Eluder,huang_model-based_2024} and utilize the Eluder Dimension as the complexity measure of the function class.
Intuitively, the Eluder Dimension is defined to be the length of the longest ``independent'' sequence, such that each element in the sequence ``reveals'' some new information about the function class comparing with previous ones.
\begin{definition}[$\epsilon$-independent sequence]
    Given a domain $\cX$ and a class of functions $\cF$ defined on $\cX$, we say $x\in\cX$ is $\epsilon$-independent on $\{x_1,...,x_J\}\subseteq\cX$ if there exists $f,\tilde f\in\cF$, such that $\sum_{j=1}^J(f(x_j)-\tilde f(x_j))^2\leq\epsilon^2$, but $|f(x)-\tilde f(x)|>\epsilon$. 
\end{definition}
\begin{definition}[Eluder Dimension]
    Given a mean-field reward function class $\cR$ and domain $\cX := [H]\times\cS\times\cA\times\Delta_{\cS\times\cA}$, the Eluder Dimension of $\cR$, denoted by $\dim_E(\cR,\epsilon)$, is defined to be the length of the longest sequence $\{x^j\}_{j=1}^J$, such that, for any $i\in[J]$, $x^i$ is $\epsilon$-independent w.r.t.\ $\{x^j\}_{j=1}^{i-1}$.
\end{definition}

\section{THE STEERING PROBLEM FORMULATION FOR MFGS}\label{sec:steering_formulation}
In this section, we introduce our steering setup.
In Sec.~\ref{sec: intro to incentive design}, we first provide our formulation for steering protocol. Then, in Sec.~\ref{sec: behavioral assumption on agents} we discuss our assumptions on agent's behavior. After that, we introduce the learning objectives and other setups in Sec.~\ref{sec:performance metrics} and~\ref{sec:objectives}.

\subsection{Agent-Mediator Interaction Protocol}\label{sec: intro to incentive design}
We consider a repeated game setup, and summarize the interaction procedure between agents and the mediator in Procedure~\ref{alg: agent-mediator interaction}. 
In each iteration $t\in[T]$, the mediator first selects a steering reward function\footnote{We will use capital $R$ to denote the steering reward to distinguish with intrinsic reward $r$.} $R^{t}$, which is a mapping from the density space to the \emph{non-negative}\footnote{The non-negativity of the steering reward is known as limited liability \citep{innes1990limited}, which is standard in previous works \citep{zhang_steering_2024,huang_learning_2024}} reward vector space, upper bounded by $R_{\max}$.
Besides, each agent computes a policy and plays the game.
The agents' policies result in a population density $\bmu^{t} := \frac{1}{N}\sum_{n=1}^N \mu^{\pi^{n,t}}$, by which the mediator realizes the steering reward $R^{t}(\bmu^{t})$.
Then, each agent $n\in[N]$ receives payments from the mediator equal to the expected return induced by the steering reward and the agent's policy, i.e., $\langle R^{t}(\bmu^{t}), \mu^{\pi^{n,t}} \rangle$.
We highlight here that in our setup, at each iteration $t$, the mediator designs the steering reward function $R^{t}$ without the knowledge of the agents' policies $\pi^{n,t}$, and we do not restrict whether the agents can observe $R^{t}$ or not before they make decisions.
Furthermore, the agents can either independently compute their policies or collaborate.
In the next section, we will characterize our assumptions on the agents' behaviors with more details.

\begin{algorithm}
\makeatletter
\renewcommand{\ALG@name}{Procedure}
\makeatother
\begin{algorithmic}[1]
    \For{$t=1,...,T$}
        \State Mediator chooses $R^{t}:\Delta_{\cS\times\cA}^H\to[0,R_{\max}]^{HSA}$.
        \State Each agent $n\in[N]$ computes policy $\pi^{n,t}\in\Pi$, resulting in the population density $\bmu^{t}$, and gets payment $\langle R^{t}(\bmu^{t}),\sad^{\pi^{n,t}}\rangle$ from the mediator.
        \State Mediator observes $\bmu^{t}$ and a trajectory $\{(s_h^{n,t},a_h^{n,t},r_h^{n,t} + \xi_h^{t})\}_{h\in[H]}$ generated by $\pi^{n,t}$ following Eq.~\eqref{eq:traj_generation}, where $n\sim\text{Uniform}\{1,2,...,N\}$.
    \EndFor
\caption{Agent-Mediator Interaction Protocol}\label{alg: agent-mediator interaction}
\end{algorithmic}
\end{algorithm}

At the end of each iteration, the mediator can observe a trajectory sampled from a random agent with noisy reward samples.
We assume noises $\xi_h^{t}$ are i.i.d.\ $\sigma$-sub-Gaussian random variables with zero mean.
We also assume the mediator has access to the population density, which is necessary to estimate the unknown intrinsic reward function from samples.

\subsection{Behavioral Assumptions on Agents}\label{sec: behavioral assumption on agents}
We first introduce our no-adaptive regret assumption and its implication, and then make some justification.
\begin{assumption}[No-Adaptive Regret Behavior]\label{ass: adaptive regret}
    In Procedure~\ref{alg: agent-mediator interaction}, 
    the adaptive regret for each agent $\forall n \in [N]$, which is defined below, can be upper bounded by some term $\adareg(T)=(r_{\max} + R_{\max}) \cdot o(T)$:
    \begin{align}
        \max_{\substack{1\leq a<b\leq T \\\sad\in\Psi_{M^*}}} & \sum_{t=a}^b\langle r^*(\bmu^{t}) + R^{t}(\bar{\mu}^{t}),\sad-\sad^{\pi^{n,t}}\rangle \label{eq:no_adaptive_regret},
    \end{align}
    where $(r_{\max} + R_{\max})$ is the normalization term. In Appx.~\ref{appx : boundedness of steering rewards} we show that for all the steering rewards that we deploy in this paper we have $R_{\max}=\cO(1+r_{\max})$.
\end{assumption}

\textbf{Justification for Assump.~\ref{ass: adaptive regret}}\quad
We remark that it is common to consider agents exhibiting no-regret behaviors in previous literature \citep{deng_strategizing_2019, zhang_steering_2024, brown2024learning}.
Most of these literature assume no-external regret (directly assigning $a=1$ and $b=T$ in Eq.~\eqref{eq:no_adaptive_regret}), which is weaker than our no-adaptive-regret assumption.
However, similar stronger assumptions, such as no-dynamic-regret learners, have also been considered in some studies \citep{ge2024principledsuperhumanaimultiplayer}.
Moreover, our no-adaptive-regret assumption is standard when interpreted through the online linear optimization perspective \citep{AdaptiveRegret,hazan_introduction_2023}, where in each iteration, each agent picks a density from the convex set $\Psi_{M^*}$ and receives potentially adversarial feedback $R^{t}(\bmu^{t})$.
Then, Assump.~\ref{ass: adaptive regret} aligns with the standard no-adaptive regret guarantees in online linear optimization setting, and there are very simple algorithms (e.g., Online Gradient Descent) achieving $\adareg = \tilde{O}(\sqrt{T})$.
We defer more detailed discussion to Appx.~\ref{appx:concrete_example}.

Under Assump.~\ref{ass: adaptive regret}, we have the following property, which suggests the collective population will also exhibit no-regret behaviors. This is a useful property we will leverage in algorithm design.
\begin{restatable}[No-Adaptive-Regret Population Behavior]{proposition}{PropNoRegretPopulation}\label{prop:no_regret_population}
    Under Assump.~\ref{ass: adaptive regret}, we have:
    \begin{align*}
        \max_{1\leq a<b\leq T,\sad\in\Psi_{M^*}}
        &\sum_{t=a}^b \langle r^*(\bmu^{t}) + R^{t}(\bmu^{t}), \mu - \bmu^{t}\rangle \\
        &\leq \adareg(T) = (r_{\max} + R_{\max}) \cdot o(T).
    \end{align*}
\end{restatable}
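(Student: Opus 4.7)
The plan is to exploit the fact that, at each iteration $t$, the vector $r^*(\bar\mu^t) + R^t(\bar\mu^t)$ depends only on the population density and is therefore identical across agents, so the population statement becomes a linear average of the individual statements in Assump.~\ref{ass: adaptive regret}.

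First, I would fix an arbitrary triple $(a,b,\mu)$ with $1\le a<b\le T$ and $\mu\in\Psi_{M^*}$. Using $\bar\mu^t=\frac{1}{N}\sum_{n=1}^N \mu^{\pi^{n,t}}$ and linearity of the inner product in its right argument, I can write
\begin{align*}
\sum_{t=a}^{b} \langle r^*(\bar\mu^t)+R^t(\bar\mu^t),\,\mu-\bar\mu^t\rangle
= \frac{1}{N}\sum_{n=1}^{N}\sum_{t=a}^{b}\langle r^*(\bar\mu^t)+R^t(\bar\mu^t),\,\mu-\mu^{\pi^{n,t}}\rangle .
\end{align*}
Second, for this particular $(a,b,\mu)$, each inner double sum is at most the corresponding maximum over all admissible $(a',b',\mu')$, which by Assump.~\ref{ass: adaptive regret} is bounded by $\adareg(T)$. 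Averaging the $N$ per-agent bounds gives the same $\adareg(T)$ bound for the population expression. Since $(a,b,\mu)$ was arbitrary, the bound extends to the maximum, yielding the claim, and the $o(T)$ form is preserved because the averaging does not depend on $T$ or $N$.

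The proof is essentially a one-line interchange, so there is no real obstacle; the only subtlety to flag is that the \emph{same} adversarial reward vector $r^*(\bar\mu^t)+R^t(\bar\mu^t)$ is used for every agent at step $t$ (this is what the protocol in Procedure~\ref{alg: agent-mediator interaction} guarantees since $R^t$ is a function of the population density, and $r^*$ is evaluated at $\bar\mu^t$). Without this symmetry, the cross-agent averaging argument would fail because each agent could face a different linear functional.
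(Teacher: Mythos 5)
Your proof is correct and follows essentially the same route as the paper: write $\mu-\bar\mu^t$ as the average of $\mu-\mu^{\pi^{n,t}}$ over agents, bound each agent's partial sum by its own adaptive-regret maximum from Assump.~\ref{ass: adaptive regret}, and average. The observation you flag about the shared reward vector is exactly the symmetry the paper's argument relies on, so there is nothing to add.
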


\subsection{Performance Metrics}\label{sec:performance metrics}
Inspired by the previous works \citep{zhang_steering_2024,huang_learning_2024}, we evaluate the steering algorithm from two aspects: the steering gap and the steering cost. We provide the concrete definition in our MFGs setup as follows.

\textbf{The Steering Gap}\quad
Intuitively, the steering gap measures the difference between the desired outcomes and the agents' behavior under the mediator's guidance.
In this paper, we assume the mediator is given a utility function $U:\Delta_{\cS\times\cA}^H\to\RR$ assigning each population density a utility value.
The only assumption we make for it is about the Lipschitz continuity:
\begin{assumption}[Lipschitz Utility Function]\label{assump:Lipschitz}
    $
        \forall \mu,\mu'\in\Delta^H_{\cS\times\cA},~ |U(\mu) - U(\mu')| \leq L_U\|\mu - \mu'\|_1.
    $
\end{assumption}
The steering gap up to step $T$ is defined by:
\begin{align*}
    \textstyle \Delta_T(\{\bmu^{t}\}_{t=1}^T) := \max_{\pi^*\in\Pi} \sum_{t=1}^T U(\mu^{\pi^*}) - U(\bmu^{t})
\end{align*}
Here $U(\bmu^{t})$ represents the utility paid to the mediator at each iteration $t\in[T]$, induced by the population density $\bmu^{t}$.
Note that we consider the best density maximizing utility function as the comparator.
This can be interpreted as the best population density if all the agents are restricted to take the same policy, and finding the best shared policy is a standard objective in previous MFGs literature.

\textbf{The Steering Cost}\quad
The motivation for introducing a steering cost is that the agents will not accept the mediator's guidance for free. A common measure of the cost is the expected total return associated with the reward received by the agents.
Formally, suppose at iteration $t$, the mediator computes a steering reward function $R^{t}$, and the $N$ agents select policies $\pi^{1,t},\pi^{2,t},...,\pi^{N,t} \in \Pi$, which induce a population density $\bmu^{t} := \frac{1}{N}\sum_{n=1}^N \mu^{\pi^{n,t}}$, then the steering cost is defined to be the average payments to the agents:
$C(\bmu^{t},R^{t}) := \langle R^{t}(\bmu^{t}), \bmu^{t} \rangle =\frac{1}{N}\sum_{n=1}^N \langle R^{t}(\bmu^{t}), \mu^{\pi^{n,t}} \rangle.$
We will use 
$$
\textstyle C_T(\{\bmu^{t},R^{t}\}_{t=1}^T):= \sum_{t=1}^T C(\bmu^{t},R^{t})
$$ 
to denote the accumulative steering gap.
Note that the steering rewards are non-negative, the steering cost effectively reflects the strength of the steering signal.

\subsection{Two Steering Scenarios and Objectives}\label{sec:objectives}
In this paper, we consider the case when the mediator does not know the true transition and reward functions of $M^*$.
However, to make it easy for reader to understand our algorithm design and technique contributions, we will start with a special case, where the agents do not have intrinsic rewards, i.e., $r^* = 0$.

\textbf{Scenario 1: No Intrinsic Reward}\quad
The goal of this setting to find an incentive design algorithm producing a sequence of $R^{t}$ such that both the steering gap and the steering cost are sub-linear:
$
    \Delta_T(\{\bmu^{t}\}_{t=1}^T) = o(T),~ C_T(\{\bmu^{t},R^{t}\}_{t=1}^T) = o(T).
$

The motivation for the sub-linear guarantee here is that it implies the average utility converges to the maximum and the average steering cost vanishes.
This implies that the incentive design strategies fulfilling these guarantees will eventually pay off as a long-term investment.
In Sec.~\ref{sec: incentive design with no original reward}, we analyze this case and provide algorithms achieving our objective.

\textbf{Scenario 2: Non-Zero Intrinsic Reward}\quad
In Sec.~\ref{sec: incentive design with unknown reward}, we study the complete setting where the agents' original reward is non-zero and unknown.
In this case, the mediator additionally has to estimate the reward function from observed noisy samples and steer the agents based on that.
Similarly, we expect sub-linear steering gap $\Delta_T(\{\bmu^{t}\}_{t=1}^T) = o(T)$, while for steering cost, we manually choose the ``sandboxing reward'' as the comparator:
\begin{align*}
C_T(\{\bmu^{t},R^{t} - \underbrace{(r_{\max}\cdot \mathbf{1} - r^*)}_{\text{sandboxing reward}}\}_{t=1}^T) = o(T). 
\end{align*}

Here we use $\mathbf{1}$ to denote the all-ones vector.
Intuitively, because the intrinsic rewards are non-zero, if the desired behavior $\mu^{\pi^*}$ is not an equilibrium induced by $r^*$, the mediator has to maintain a non-zero steering rewards to avoid the agents deviating from $\pi^*$, so we can not expect the average steering cost to vanish to 0 as in Scenario 1.
Therefore, we consider the sandboxing reward as a baseline comparator, which mitigates differences in the intrinsic rewards, so that $\pi^*$ would be a ``stable equilibrium'' even if the additional steering reward $R^{t} - (r_{\max} \cdot \mathbf{1} - r^*(\bmu^{t}))$ vanishes to zero.
Though, we admit that other choices of sandboxing terms may result in lower steering cost, or one can consider optimizing utility and steering cost together.
We leave those interesting directions for the future work.

\section{STEERING TOWARDS A FIXED TARGET}\label{chapter: steering}

In this section, we focus on how to design rewards to guide the agents to a target population density or policy, which serves as preparation steps for the following sections.
For convenience, we assume the agents' intrinsic rewards are zero and ignore them.

\subsection{Warm-Up: Steering in a Known Model}\label{sec: steering in known model}
We start with the case when the MFG model is known.
In this case, we can also compute $\Psi_{M^*}$ and find the best $\sad^*=\argmax_{\sad\in\Psi_{M^*}}U(\sad)$. 
If we want to steer the population to $\sad^*$, one steering reward choice is $R(\sad)=\bm1\Vert\sad^*-\sad\Vert_\infty + \sad^*-\sad$, where the first shift term is to ensure the non-negativity.
The key motivation for our choice is that $\langle R(\sad),\sad^*-\sad\rangle=\Vert\sad-\sad^*\Vert_2^2$.
As a result, if we consider the accumulative performance, we have the following theorem.
\begin{restatable}{theorem}{ThmUtilityRegretKnownModel}\label{thm: utility regret known model}
    If $M^*$ is known and $r^*$ is zero everywhere, under Asump.~\ref{ass: adaptive regret} and~\ref{assump:Lipschitz}, 
    by choosing the steering reward $\forall~t\in[T]$, $R^{t}(\sad)=\sad^*-\sad+\bm1\Vert\sad^*-\sad\Vert_\infty$, for any $\sad\in\Delta_{\cS\times\cA}^H$, we have: 
    \begin{align*}
    \Delta_T(\{\bmu^t_{M^*}\}_{t=1}^T)\leq L_U\sqrt{HSAT\adareg(T)} = o(T)\\
    C_T(\{\bmu^{t}_{M^*},R^t\}_{t=1}^T)\leq2H\sqrt{T\adareg(T)} = o(T).
    \end{align*}
\end{restatable}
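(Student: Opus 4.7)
The plan is to exploit the specific algebraic form of the chosen steering reward so that a single potential, namely $\sum_{t=1}^T\Vert\mu^*-\bar\mu^t\Vert_2^2$, dominates both the steering gap and the steering cost; then the no-adaptive-regret population bound (Prop.~\ref{prop:no_regret_population}) will control this potential in one stroke. The key identity I would establish first is
\begin{align*}
\langle R^t(\bar\mu^t),\mu^*-\bar\mu^t\rangle = \Vert\mu^*-\bar\mu^t\Vert_2^2.
\end{align*}
This holds because, for any $\mu\in\Delta_{\cS\times\cA}^H$, the $h$-th marginal sums to $1$, so both $\mu^*$ and $\bar\mu^t$ have $\ell_1$-mass exactly $H$. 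Hence the constant shift $\mathbf{1}\Vert\mu^*-\bar\mu^t\Vert_\infty$ is annihilated when paired against $\mu^*-\bar\mu^t$, leaving only the inner product $\langle\mu^*-\bar\mu^t,\mu^*-\bar\mu^t\rangle$. Since $r^*\equiv 0$ and $\mu^*\in\Psi_{M^*}$ (because $\Psi_{M^*}$ is convex, per Lem.~\ref{lem: state-action distribution set}), I would invoke Prop.~\ref{prop:no_regret_population} with comparator $\mu=\mu^*$ and the interval $[1,T]$, obtaining
\begin{align*}
\sum_{t=1}^T \Vert\mu^*-\bar\mu^t\Vert_2^2 \le \adareg(T).
\end{align*}

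From this potential bound, the steering-gap bound follows routinely. By Assumption~\ref{assump:Lipschitz} followed by the norm inequality $\Vert x\Vert_1\le\sqrt{HSA}\,\Vert x\Vert_2$ on $\RR^{HSA}$ and a Cauchy--Schwarz across iterations,
\begin{align*}
\Delta_T \le L_U\sum_{t=1}^T\Vert\mu^*-\bar\mu^t\Vert_1 \le L_U\sqrt{HSA}\sum_{t=1}^T\Vert\mu^*-\bar\mu^t\Vert_2 \le L_U\sqrt{HSAT\cdot\adareg(T)}.
\end{align*}
For the cost, I would use H\"older together with $\Vert\bar\mu^t\Vert_1=H$ to write $\langle R^t(\bar\mu^t),\bar\mu^t\rangle\le H\,\Vert R^t(\bar\mu^t)\Vert_\infty$, and note that by construction $\Vert R^t(\bar\mu^t)\Vert_\infty\le 2\Vert\mu^*-\bar\mu^t\Vert_\infty\le 2\Vert\mu^*-\bar\mu^t\Vert_2$. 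A second Cauchy--Schwarz then delivers
\begin{align*}
C_T \le 2H\sum_{t=1}^T\Vert\mu^*-\bar\mu^t\Vert_2 \le 2H\sqrt{T\cdot\adareg(T)}.
\end{align*}

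The only genuinely conceptual step is spotting the right steering reward in the first place: the shift $\mathbf{1}\Vert\mu^*-\mu\Vert_\infty$ exists solely to enforce non-negativity (limited liability), while the remaining $\mu^*-\mu$ part, when combined with the crucial mass-balance $\langle\mathbf{1},\mu^*-\bar\mu^t\rangle=0$, turns the no-regret inequality into an $\ell_2$-energy bound. After that, the argument is a short combination of no-regret, H\"older, and Cauchy--Schwarz, with no MFG-specific ingredients beyond the marginal-normalization of densities; I would also briefly verify $R_{\max}\le 2$ to justify that Assumption~\ref{ass: adaptive regret} is being applied with $\adareg(T)=(r_{\max}+R_{\max})\cdot o(T)$ as claimed.
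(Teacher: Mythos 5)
Your proposal is correct and follows essentially the same route as the paper's proof: the identity $\langle R^t(\bmu^t),\sad^*-\bmu^t\rangle=\Vert\sad^*-\bmu^t\Vert_2^2$ combined with Prop.~\ref{prop:no_regret_population} yields the $\ell_2$-energy bound $\sum_t\Vert\sad^*-\bmu^t\Vert_2^2\le\adareg(T)$, after which the gap follows from Lipschitzness, $\Vert\cdot\Vert_1\le\sqrt{HSA}\Vert\cdot\Vert_2$, and Cauchy--Schwarz, and the cost from $\langle R^t(\bmu^t),\bmu^t\rangle\le 2H\Vert\sad^*-\bmu^t\Vert_\infty\le 2H\Vert\sad^*-\bmu^t\Vert_2$. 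The only cosmetic difference is that the paper expands $\langle R^t(\bmu^t),\bmu^t\rangle$ explicitly before bounding it, while you apply H\"older directly; the resulting bounds are identical.
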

The bound above for the known model setting, although may not be tight, can serve as a benchmark for the more challenging unknown model settings. We will see that the bounds in Theorems~\ref{thm: incentive designer regret}~and~\ref{thm: incentive designer regret unknown reward} (for the unknown model setting) are not much worse than the bound of Theorem~\ref{thm: utility regret known model}.

\subsection{Steering towards a Target Policy in an Unknown Model}\label{sec: policy incentivization}
Without the knowledge of transition function $\PP^*$, the steering becomes challenging, because we can no longer compute $\Psi_{M^*}$ or identify whether a given density (e.g. $\argmax_{\mu\in\Delta_{\cS\times\cA}^H}U(\mu)$) can actually be achieved by the agents.
Therefore, we shift our focus to the policy space.
Interestingly, we reveal that, it is possible to steer the agents to any target policy $\pi\in\Pi$, even without the knowledge of $\PP^*$ or $\Psi_{M^*}$.
Our key observation is the following lemma, which suggests an upper bound to control the difference between the population density and the density regarding the target policy.

\begin{restatable}{lemma}{LemDensityDiffUBSpecial}\label{lem:density_diff_upper_bound_specialized}
    Given any $M$ and target $\pi \in \Pi$, suppose the agents induce population density $\bmu_M$ in $M$, then:
    \begin{align}
        \Vert\bmu_{M}-\sad^{\pi}_{M}\Vert_1\leq& H\sum_{h,s}\bmu_{M,h}(s)\Vert\bpi_h(\cdot|s)-\pi_h(\cdot|s)\Vert_1,\nonumber\\
        \text{with }&\bmu_{M,h}(s):=\sum_a\bmu_{M,h}(s,a) \label{eq:density_diff}
    \end{align}
\end{restatable}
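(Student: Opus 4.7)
The plan is to treat $\bmu_M$ as the occupancy measure of a valid policy. Because $\Psi_M$ is convex (Lem.~\ref{lem: state-action distribution set}), the average $\bmu_M=\frac1N\sum_n\mu^{\pi^n}_M$ lies in $\Psi_M$, so $\bmu_M=\mu^{\bpi}_M$ for the policy $\bpi$ defined from the marginals. The lemma then reduces to a TV-style comparison between two occupancy measures $\mu^{\bpi}_M$ and $\mu^{\pi}_M$ in the same model, weighted by the starting-state marginal of $\bpi$. This is a standard step-wise recursion argument.

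First, I would expand at each step $h$ the pointwise difference
\[
\bmu_{M,h}(s,a)-\mu^{\pi}_{M,h}(s,a)=\bmu_{M,h}(s)\bigl(\bpi_h(a|s)-\pi_h(a|s)\bigr)+\bigl(\bmu_{M,h}(s)-\mu^{\pi}_{M,h}(s)\bigr)\pi_h(a|s),
\]
using $\mu^{\bpi}_{M,h}(s,a)=\bmu_{M,h}(s)\bpi_h(a|s)$ and $\mu^{\pi}_{M,h}(s,a)=\mu^{\pi}_{M,h}(s)\pi_h(a|s)$. Taking absolute values and summing over $a$ then $s$ yields
\[
\|\bmu_{M,h}-\mu^{\pi}_{M,h}\|_1\leq \alpha_h+\gamma_h,\qquad \alpha_h:=\sum_s\bmu_{M,h}(s)\|\bpi_h(\cdot|s)-\pi_h(\cdot|s)\|_1,
\]
where $\gamma_h:=\|\bmu_{M,h}(\cdot)-\mu^{\pi}_{M,h}(\cdot)\|_1$ denotes the TV distance of the state marginals.

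Next, I would propagate the marginal gap through the shared transition: for any $s'$, $\mu^{\pi}_{M,h+1}(s')=\sum_{s,a}\PP_{M,h}(s'|s,a)\mu^{\pi}_{M,h}(s,a)$ (and likewise for $\bpi$), so by the triangle inequality and stochasticity of $\PP_{M,h}$ we get $\gamma_{h+1}\leq \|\bmu_{M,h}-\mu^{\pi}_{M,h}\|_1\leq\alpha_h+\gamma_h$. Since $\gamma_1=0$ (both densities start from $\sd_1$), unrolling the recursion gives $\gamma_h\leq\sum_{h'<h}\alpha_{h'}$ and hence $\|\bmu_{M,h}-\mu^{\pi}_{M,h}\|_1\leq\sum_{h'\leq h}\alpha_{h'}$.

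Finally, I would sum over $h\in[H]$ to obtain
\[
\|\bmu_M-\mu^{\pi}_M\|_1=\sum_{h=1}^H\|\bmu_{M,h}-\mu^{\pi}_{M,h}\|_1\leq\sum_{h=1}^H\sum_{h'=1}^{h}\alpha_{h'}\leq H\sum_{h=1}^H\alpha_h,
\]
which is exactly Eq.~\eqref{eq:density_diff}. There is no real obstacle; the only subtlety worth spelling out is the use of convexity of $\Psi_M$ to identify $\bmu_M$ with a policy-induced density so that the policy-marginal decomposition on the very first line is valid (otherwise $\bpi$ would not in general reproduce $\bmu_M$). The states $s$ with $\bmu_{M,h}(s)=0$ are harmless since they contribute nothing to $\alpha_h$ regardless of how $\bpi_h(\cdot|s)$ is defined.
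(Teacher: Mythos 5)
Your proof is correct and follows essentially the same route as the paper: identify $\bmu_M$ with $\mu_M^{\bpi}$ via the convexity/flow-constraint characterization of $\Psi_M$, then run the step-wise recursion that splits each one-step occupancy gap into a policy-difference term weighted by $\bmu_{M,h}(s)$ plus a propagated term (the paper packages this recursion as Lemma~\ref{lem: Lipschitzness of state-action density function}, and your $\gamma_h$ bookkeeping is just an explicit rewriting of its induction). No gaps.
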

This motivates us to design a steering reward function that penalizes the RHS of Eq.~\eqref{eq:density_diff}, which is actually doable \emph{without the knowledge of model}.
Given a policy $\pi$, we define matrix $W^\pi \in \RR^{SAH\times SAH}$ to be the block diagonal of $W^\pi_{h,s_h}$ for all $h\in[H]$ and $s_h\in\cS$, where
\begin{align}\label{eq: policy reward matrix}
\begin{split}
    W^\pi_{h,s}&:=%
    \left[
        \begin{matrix}
            \pi_h(a_1|s)&\hdots&\pi_h(a_1|s)\\
            \vdots&&\vdots\\
            \pi_h(a_A|s)&\hdots&\pi_h(a_A|s)
        \end{matrix}
    \right] \in \RR^{A\times A},
\end{split}
\end{align}
Now, consider the steering reward function:
\begin{align}\label{eq: policy incentive reward}
    \forall \mu\in\Delta^H_{\cS\times\cA},~R_\pi(\mu):=-\sad^\top(W^\pi-I)^\top(W^\pi-I).
\end{align}
where $I \in \RR^{SAH\times SAH}$ is the identity matrix.
We can verify that, for any possible population density $\bmu^{t}$ occurs at step $t$, we have
\begin{align}
    \langle R_\pi(\bmu^{t}),\mu^\pi - \bmu^{t}\rangle=\Vert(W^\pi-I)\bmu^{t}\Vert_2^2\nonumber\\
    =\sum_{h,s,a} (\bmu_h^{t}(s))^2|\pi_h(a|s) - \bar\pi_h^{t}(a|s)|^2.\label{eq:connection}
\end{align}
Recall that $\bpi_h^{t}$ denotes the policy induced by population density (see definition in Sec.~\ref{chapter: preliminaries}).
Here in the first equality, we use the fact that, for any $\mu$, $\langle R_\pi(\mu), \mu^\pi \rangle = 0$ since $(W^\pi-I)\sad^\pi=0$.
Eq.~\eqref{eq:connection} above is important in that it connects the one step regret (LHS) with the gap between the population density and target density (RHS through Lemma~\ref{lem:density_diff_upper_bound_specialized}).

Combining with Prop.~\ref{prop:no_regret_population}, if all the agents are no-regret learners, and we steer the agents with the same steering reward $R_\pi$ for $T$ steps, we should expect $\bmu^{t}$ to converge to $\mu^\pi$, which we summarize to the following theorem. 
This result provides important insights for our incentive design algorithm in Section~\ref{sec: incentive design with no original reward}.
\begin{restatable}{theorem}{ThmRegretPolicyReward}\label{thm: regret of policy reward}
    Let $\pi^*=\argmax_{\pi}U(\mu^{\pi})$ and $R^t(\mu)=R_{\pi^*}(\mu)+\|R_{\pi^*}(\mu)\|_\infty\bm1$ for all $t$. Under Assump.~\ref{ass: adaptive regret},
    \begin{align*}
        \Delta_T(\{\bmu^t_{M^*}\}_{t=1}^T)\leq L_U\sqrt{H^3SAT\adareg(T)} = o(T)\\
        C_T(\{\bmu^{t}_{M^*},R^t\}_{t=1}^T)\leq4H\sqrt{T\adareg(T)} = o(T).
    \end{align*}
\end{restatable}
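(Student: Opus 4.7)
The plan is to convert the no-adaptive-regret guarantee of Prop.~\ref{prop:no_regret_population} into a cumulative bound on the quadratic quantity appearing in Eq.~\eqref{eq:connection}, and then translate that bound into control on the two performance metrics via Lem.~\ref{lem:density_diff_upper_bound_specialized} combined with Cauchy--Schwarz.

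First I would apply Prop.~\ref{prop:no_regret_population} with $a=1$, $b=T$, and comparator $\mu=\mu^{\pi^*}$. Since $r^*\equiv 0$ and both $\mu^{\pi^*}$ and $\bmu^t$ have total mass $H$, the uniform-shift component $\|R_{\pi^*}(\bmu^t)\|_\infty\bm1$ contributes zero to the inner product, so Eq.~\eqref{eq:connection} reduces the inequality to
\begin{equation*}
\textstyle\sum_{t=1}^T \|(W^{\pi^*}-I)\bmu^t\|_2^2 \;\leq\; \adareg(T). \qquad (\star)
\end{equation*}

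For the steering gap, I would invoke Lem.~\ref{lem:density_diff_upper_bound_specialized} with $\pi=\pi^*$ and apply Cauchy--Schwarz twice. A first application over $a$ yields $\sum_a|\pi^*_h(a|s)-\bar\pi^t_h(a|s)|\leq\sqrt{A}\bigl(\sum_a(\pi^*_h(a|s)-\bar\pi^t_h(a|s))^2\bigr)^{1/2}$; a second application over the $HS$ pairs $(h,s)$ gives $\sum_{h,s}\bmu^t_h(s)\bigl(\sum_a(\cdot)^2\bigr)^{1/2}\leq\sqrt{HS}\,\|(W^{\pi^*}-I)\bmu^t\|_2$ once the right-hand factor is identified via Eq.~\eqref{eq:connection}. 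Combining yields $\|\bmu^t-\mu^{\pi^*}\|_1\leq H\sqrt{HSA}\,\|(W^{\pi^*}-I)\bmu^t\|_2$. Summing over $t$ and one more Cauchy--Schwarz against $(\star)$ produce $\sum_t\|\bmu^t-\mu^{\pi^*}\|_1\leq\sqrt{H^3SAT\adareg(T)}$, and Assump.~\ref{assump:Lipschitz} converts this into the claimed bound on the steering gap.

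For the steering cost, I observe that $\langle R_{\pi^*}(\bmu^t),\bmu^t\rangle=-\|(W^{\pi^*}-I)\bmu^t\|_2^2\leq 0$, so using $\langle\bm1,\bmu^t\rangle=H$ we obtain $C(\bmu^t,R^t)\leq H\|R_{\pi^*}(\bmu^t)\|_\infty$. Next I would show $\|R_{\pi^*}(\mu)\|_\infty\leq\sqrt{2}\,\|(W^{\pi^*}-I)\mu\|_2$ by expressing each entry of $R_{\pi^*}(\mu)$ as an inner product of $(W^{\pi^*}-I)\mu$ against one column of $W^{\pi^*}-I$, whose $\ell_2$ norm is at most $\sqrt{2}$ (entries of the form $\pi^*_h(a'|s)-\II[a'=a]$ satisfy $\sum_{a'}(\cdot)^2\leq 2$). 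Summing over $t$ and a final Cauchy--Schwarz against $(\star)$ then give $\sum_tC(\bmu^t,R^t)\leq\sqrt{2}\,H\sqrt{T\adareg(T)}$, which is within the claimed $4H\sqrt{T\adareg(T)}$.

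The main obstacle is the Cauchy--Schwarz bookkeeping that must produce exactly $\sqrt{H^3SA}$ for the gap and $\sqrt{T\adareg(T)}$ (rather than $T$) for the cost. In particular, the $\ell_\infty$-to-$\ell_2$ control on $R_{\pi^*}(\mu)$ -- tied to the same quadratic form as in $(\star)$ -- is what prevents the cost from scaling linearly in $T$; and the uniform-shift term is designed precisely so that it vanishes against $\mu^{\pi^*}-\bmu^t$ while still guaranteeing non-negativity and the $\cO(1)$ boundedness of $R^t$ required by Assump.~\ref{ass: adaptive regret}.
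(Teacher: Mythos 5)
Your proposal is correct and follows essentially the same route as the paper: the paper's proof defers to Lem.~\ref{lem:general_version_density_gap} (which performs exactly your two Cauchy--Schwarz steps linking $\|\bmu^t-\mu^{\pi^*}\|_1$ to $\langle R_{\pi^*}(\bmu^t),\mu^{\pi^*}-\bmu^t\rangle$ via Eq.~\eqref{eq:connection}) and to the cost calculation in Thm.~\ref{thm: incentive designer regret} with $K=1$. Your only deviations are slightly sharper constants -- dropping the nonpositive term $\langle R_{\pi^*}(\bmu^t),\bmu^t\rangle$ and bounding $\|R_{\pi^*}(\mu)\|_\infty$ by $\sqrt{2}\,\|(W^{\pi^*}-I)\mu\|_2$ via column norms rather than the paper's factor $2$ from the induced $\ell_\infty$ matrix norm -- which comfortably stay within the stated $4H\sqrt{T\adareg(T)}$.
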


\section{STEERING WITH NO INTRINSIC REWARD}\label{sec: incentive design with no original reward}

In this section, we study the \textbf{Scenario 1} introduced in Sec.~\ref{sec:objectives}, where the transition function $\PP^*$ is unknown and the original reward $r^*$ is zero, so the steering rewards are the only incentives for the agents.
The main challenge in this setting is that, without the knowledge of $\PP^*$, we can not determine the feasible density set $\Psi_{M^*}$ and the maximizer of the utility function.
Therefore, we have to design a steering strategy to incentivize the agents to explore for the mediator to estimate $\PP^*$, while balancing the exploration-exploitation trade-off to ensure sub-linear steering gap and cost.

Our main contribution is an optimism-based exploration algorithm in Alg.~\ref{alg: incentive design}, which provably addresses the above challenges and achieves our objectives.
The algorithm is built based on the techniques we developed in Sec.~\ref{sec: policy incentivization}, which allows us to steer the agents to any target policy without the knowledge of model.
Next, we introduce the key components in algorithm design.
\begin{algorithm}
\caption{Steering reward design for Scenario 1}
\label{alg: incentive design}
\begin{algorithmic}[1]
    \State Initialize $\cP^{1}:=$ set of all possible transition functions, $\pi_*^{1}$ (arbitrarily), $k=1,T_0=0$.
    \For {$t=1,...,T$}
        \Statex\Comment Recall $R_{\pi_*^{k}}$ as defined in Eq.~\eqref{eq: policy incentive reward}
        \State Compute steering reward function \label{algline: steering reward}
        $$R^{t}_\z(\cdot)\gets R_{\pi_*^{k}}(\cdot) + \|R_{\pi_*^{k}}(\cdot)\|_\infty\bm1.$$
        \State Agents play the $t$-th game.\label{algline: agents play with reward} %
        \State Obtain trajectory $((s^{t}_h,a^{t}_h))_{h=1}^H$. %
        \If {$\exists(h,s,a),~s.t.~n_k(h,s,a)\geq N_k(h,s,a)$}
            \State Update $\cP^{k+1}$ as in (\ref{eq: update transition confidence set}).
            \State $T_k\gets t$; $k\gets k+1$.
            \State
            $
            \pi_*^{k},\hat{M}^{k}\gets \argmax_{\pi\in\Pi,\hat M:\hat\PP_{ \hat M}\in\cP^{k}} U(\sad^\pi_{\hat{M}}).
            $\label{line:opt_policy}
        \EndIf
    \EndFor
\end{algorithmic}
\end{algorithm}

\textbf{Low Policy Switching Optimistic Exploration Strategy}\quad
For efficient exploration, we maintain a confidence set for $\PP^*$ denoted by $\cP$:
\begin{align}\label{eq: update transition confidence set}
    &\bar\PP^{k+1}_h(s'|s,a):=\sum_{t=1}^{T_k}\frac{\II\{s^{t}_h=s,a^{t}_h=a,s^{t}_{h+1}=s'\}}{\max\{1,N_{k+1}(h,s,a)\}},\nonumber\\
    &\cP^{k+1}:=\bigg\{\hat\PP:\forall h,s,a.\Vert\hat\PP_h(\cdot|s,a)-\bar\PP^{k+1}_h(\cdot|s,a)\Vert_1\nonumber\\
    &\qquad\qquad\qquad\qquad\qquad\qquad\leq\epsilon_{k+1}(h,s,a)\bigg\},
\end{align}
where $\epsilon_{k+1}(h,s,a):=\sqrt{\frac{2S\ln(THSA/\delta)}{\max\{1,N_{k+1}(h,s,a)\}}}$.
We highlight that we only update $\cP$ and switch target policy in low frequency, and here we use index $1\leq k\leq K$ to count the policy switching episodes, to distinguish with the steering steps $t\in[T]$.
We use $k(t)$ to denote the index of episode at iteration $t$ and use $T_k$ to denote the iteration number at the end $k$-th policy switching. We define $n_k(h,s,a)$ to be the number of samples equal to $(s,a)$ at time $h$ in episode $k$, and $N_k(h,s,a)=\sum_{k'<k}n_{k'}(h,s,a)$. A new episode begins as soon as we have as many samples in this episode as in all the previous ones for some $h,s,a$, i.e., $n_k(h,s,a)\geq N_k(h,s,a)$.
The main motivation for this technique is to avoid the agents' potentially adversarial behaviors.
As we will see later in the proof sketch, $K$ will appear in the steering gap upper bound.

For exploration, we select the optimistic policy $\pi_*^{(\cdot)}$ and model $\hat{M}^{(\cdot)}$ (line~\ref{line:opt_policy}) s.t. the induced density maximizes utility.
Then, we choose steering reward $R_\z$ to guide the agents towards $\pi_*^{(\cdot)}$ and collect data samples to update the model confidence set.
Intuitively, either $\pi_*^{(\cdot)}$ indeed maximizes the utility, implying a low steering gap; or the exploration helps to reduce the uncertainty.

\textbf{Managing the steering gap and cost}\quad
We have the following guarantees for Alg.~\ref{alg: incentive design}
\begin{restatable}{theorem}{ThmSteeringGapNoReward}\label{thm: incentive designer regret}
    Suppose the intrinsic reward $r^* = 0$, under Assump.~\ref{ass: adaptive regret} and \ref{assump:Lipschitz}, if we run Alg.~\ref{alg: incentive design} with $\delta\in(0,1)$, then with probability at least $1-2\delta$, $K \leq HSA\log_2 T$, and
    \begin{align*}
        &\Delta_T(\{\bmu^{t}_{M^*}\}_{t=1}^T)\leq L_U\sqrt{H^3SATK\adareg(T)}\\
        &\qquad\qquad\quad+36L_UH^3S\sqrt{\ln(THSA/\delta)AT} = o(T).\\
        &C_T(\{\bmu^{t}_{M^*},R_{\z}^{t}\}_{t=1}^T)\leq 4H\sqrt{TK\adareg(T)} = o(T).
    \end{align*}
\end{restatable}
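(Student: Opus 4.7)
The plan is to decompose the steering gap into an \emph{optimism gap}, a \emph{model-mismatch gap}, and a \emph{policy-steering gap}, and to control the steering cost with the same no-adaptive-regret machinery underlying Theorem~\ref{thm: regret of policy reward}. I would first bound the number of episodes: by construction a new episode is triggered only when $n_k(h,s,a)\geq N_k(h,s,a)$ for some $(h,s,a)$, so the running count $N_k(h,s,a)$ at least doubles per trigger. Since $N_K(h,s,a)\leq T$, each of the $HSA$ tuples triggers at most $\log_2 T$ switches, giving $K \leq HSA\log_2 T$. In parallel, I would establish the high-probability event $\{M^*\in\cP^{k}\text{ for all }k\}$ by applying Weissman's $L_1$ concentration to $\bar\PP^{k}$ and taking a union bound over $(h,s,a)$ and over possible values of $N_k(h,s,a)$; the radius $\epsilon_k(h,s,a)$ in~\eqref{eq: update transition confidence set} is calibrated so that this event has probability at least $1-\delta$.

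Next I would write
\[
\Delta_T = \sum_t[U(\mu^{\pi^*}_{M^*}) - U(\mu^{\pi_*^{k(t)}}_{\hat M^{k(t)}})] + \sum_t[U(\mu^{\pi_*^{k(t)}}_{\hat M^{k(t)}}) - U(\mu^{\pi_*^{k(t)}}_{M^*})] + \sum_t[U(\mu^{\pi_*^{k(t)}}_{M^*}) - U(\bmu^{t})].
\]
On the good event the first sum is $\leq 0$ since line~\ref{line:opt_policy} picks $(\pi_*^{k},\hat M^{k})$ as the optimistic maximizer over $\cP^{k}\ni M^*$. For the third sum (the policy-steering gap), Assumption~\ref{assump:Lipschitz}, Lemma~\ref{lem:density_diff_upper_bound_specialized}, and a Cauchy-Schwarz over $(h,s,a)$ give $\|\mu^{\pi_*^{k(t)}}_{M^*} - \bmu^{t}\|_1 \leq \sqrt{H^3SA}\,\|(W^{\pi_*^{k(t)}}-I)\bmu^{t}\|_2$. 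Since $\pi_*^{k}$ is constant on each episode, applying Proposition~\ref{prop:no_regret_population} separately on each of the $K$ episodes together with identity~\eqref{eq:connection} yields $\sum_t\|(W^{\pi_*^{k(t)}}-I)\bmu^{t}\|_2^2 \leq K\,\adareg(T)$; a final Cauchy-Schwarz in $t$ produces the $L_U\sqrt{H^3SATK\,\adareg(T)}$ contribution.

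For the second sum (model mismatch), the density simulation lemma combined with the triangle inequality inside $\cP^{k}$ yields, on the good event, $\|\mu^{\pi_*^{k}}_{\hat M^{k}} - \mu^{\pi_*^{k}}_{M^*}\|_1 \leq 2H\sum_{h,s,a}\mu^{\pi_*^{k}}_{M^*,h}(s,a)\,\epsilon_{k}(h,s,a)$. Summing over $t$ and running a UCRL-style potential argument on the doubling counts $N_k(h,s,a)$ produces the $36L_UH^3S\sqrt{\ln(THSA/\delta)AT}$ term. This is the main obstacle of the proof: the empirical counts are driven by the population density $\bmu^{t}$, not by $\mu^{\pi_*^{k}}_{M^*}$, so I would bridge the two measures by routing through $\bmu^{t}$ and absorbing the residual $\|\mu^{\pi_*^{k}}_{M^*} - \bmu^{t}\|_1$ into the policy-steering term already handled above (which is why the same $H^3$ factor reappears).

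Finally, for the cost I would use that the additive shift in line~\ref{algline: steering reward} makes $\langle R^{t}_\z(\bmu^{t}),\bmu^{t}\rangle \leq H\|R_{\pi_*^{k(t)}}(\bmu^{t})\|_\infty$, since $\langle R_{\pi_*^{k(t)}}(\bmu^{t}),\bmu^{t}\rangle\leq 0$ by negative semidefiniteness of $(W^{\pi}-I)^\top(W^{\pi}-I)$ and $\langle\mathbf{1},\bmu^{t}\rangle = H$. Bounding each column of $W^{\pi}-I$ in Euclidean norm by $\sqrt{2}$ gives $\|R_{\pi_*^{k(t)}}(\bmu^{t})\|_\infty \leq \sqrt{2}\,\|(W^{\pi_*^{k(t)}}-I)\bmu^{t}\|_2$, whereupon a Cauchy-Schwarz in $t$ combined with the bound $\sum_t\|(W^{\pi_*^{k(t)}}-I)\bmu^{t}\|_2^2 \leq K\,\adareg(T)$ from the preceding paragraph delivers $C_T \leq 4H\sqrt{TK\,\adareg(T)}$.
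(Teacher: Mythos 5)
Your proposal is correct in substance and relies on exactly the same ingredients as the paper: the doubling/pigeonhole bound on $K$, Weissman's inequality for the confidence set, optimism to kill the comparator term, Lemma~\ref{lem:density_diff_upper_bound_specialized} plus identity~\eqref{eq:connection} plus per-episode no-adaptive-regret for the steering term, and the simulation lemma plus a visitation-count potential argument for the estimation term. The one genuine difference is the pivot of the decomposition: you route through $\mu^{\pi_*^{k}}_{M^*}$ (target policy in the true model), whereas the paper routes through $\mu^{\bpi^{t}}_{\hat M^{k}}$ (population policy in the estimated model). This matters for the constants. In the paper's ordering, the change-of-measure residual from applying Lemma~\ref{lem: Lipschitzness of state-action density function} lands on the \emph{estimation} term (hence the coefficient $(2H+1)$ that is absorbed into the $36H^3S$ factor); in your ordering, the residual $2H\Vert\mu^{\pi_*^{k}}_{M^*}-\bmu^{t}\Vert_1$ from bridging $\mu^{\pi_*^{k}}_{M^*}$ to $\bmu^{t}$ inside the simulation lemma lands on the \emph{no-regret} term, so your first bound comes out as roughly $(1+2H)L_U\sqrt{H^3SATK\adareg(T)}$ rather than with coefficient $1$ as stated. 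This is only an $\cO(H)$ constant and does not affect the $o(T)$ conclusion, but it means your route does not literally reproduce the displayed constants. Two smaller points: when bridging, you must use the trivial bound $\Vert\hat\PP_h(\cdot|s,a)-\PP^*_h(\cdot|s,a)\Vert_1\le2$ rather than $2\epsilon_k(h,s,a)$ (which is not $\cO(1)$ for small counts); and converting the density-weighted error $\sum_{h,s,a}\bmu^{t}_h(s,a)\epsilon_{k(t)}(h,s,a)$ into the empirical trajectory counts requires an explicit Hoeffding--Azuma step (Lemma~\ref{lem: concentration bound of state-action density}), which is where the second $\delta$ in the $1-2\delta$ comes from — your sketch leaves this implicit. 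Your cost argument is fine and in fact slightly sharper than the paper's ($H\Vert R_{\pi}\Vert_\infty$ via negative semidefiniteness and a $\sqrt{2}$ column bound, versus the paper's $2H\cdot2\Vert(W^\pi-I)\mu\Vert_2$).
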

As a concrete example, agents following Online Gradient Descent with step size $O(1/\sqrt{t})$ \citep{hazan_introduction_2023} result in $\adareg(T)=\tilde\cO(\sqrt{T})$ (ignoring $H,S$ and $A$), which implies $\tilde\cO(T^{3/4})$ steering gap.
Besides, if all the agents are capable enough s.t. for any $t\in[T]$, $\pi^{1,t},...,\pi^{N,t}$ are equilibria w.r.t. $r^* + R_{\z}^{t}$, $\adareg$ would be constant-level, resulting in a $\tilde{\cO}(\sqrt{T})$ bound.

\textbf{Proof Sketch}\quad
We first analyze the steering gap. 
Intuitively, Alg.~\ref{alg: incentive design} can be interpreted as a ``$K$-stage'' version of what we did in Sec.~\ref{sec: policy incentivization}.
In each stage, we pick a target policy, and steer the agents towards it for exploration.
Following this intuition, and thanks to the Lipschitz condition (Assump.~\ref{assump:Lipschitz}) and the optimism in planning, we can decompose the steering gap as follow:
\begin{align}
    \Delta_T(\{\bmu^{t}_{M^*}&\}_{t=1}^T)\leq L_U(2H+1)\underset{\Delta_{\text{est}}}{\underbrace{\sum_{t=1}^T\Vert\mu^{\bpi^{t}}_{\hat{M}^{k(t)}}-\bmu^{t}_{M^*}\Vert_1}}\nonumber\\
    +L_UH&\underset{\Delta_{\text{pop}}}{\underbrace{\sum_{t=1}^T \sum_{h,s}\bmu_{M^*,h}^{t}(s)\Vert\pi^{k(t)}_{*,h}(\cdot|s)-\bpi^{t}_h(\cdot|s)\Vert_1}}.\label{eq:decompose}
\end{align}
We refer the first term $\Delta_{\text{est}}$ as model estimation error, which measures the gap between the population density $\bmu^{t}$ and the density induced by the population average policy $\bpi^{t}$ (see definition in Sec.~\ref{chapter: preliminaries}) in the estimated model $\hat{M}^{k}$.
As we collect more and more data, $\hat{M}^{k}$ gets closer to $M^*$, and we can show $\Delta_{\text{est}}$ only grows sub-linearly.
The second term $\Delta_{\text{pop}}$ can be interpreted as the population convergence error, which is determined by how fast the agents converge to the target policy we steer them to.
Following the similar techniques in the proof of Thm.~\ref{thm: regret of policy reward}, $\Delta_{\text{pop}}$ can be upper bounded by:
\begin{align}
    \textstyle \vphantom{\underbrace{\sum_t^T}_{\texttt{AgentReg}}} %
    \sqrt{
    \vphantom{\sum_t^T}
    \smash[b]{HSAT\! \underbrace{\sum_{t=1}^T\langle R_{\pi_*^{k(t)}}(\bmu^{t}_{M^*}),\sad_{M^*}^{\pi_*^{k(t)}}-\bmu^{t}_{M^*}\rangle}_{\texttt{AgentReg}}\,}
    }.\label{eq:dyn_regret}
\end{align}
Here we use $\texttt{AgentReg}$ to refer the summation term, which can be interpreted as the agents' dynamic regret if choosing $\sad_{M^*}^{\pi_*^{k(t)}}$ as the comparators. 
Thanks to the low policy switching, \texttt{AgentReg} can be controlled by $O(K\adareg(T))$, and the only remaining step is to control $K$. Note that we only switch policy when the number of visitation of some state-action pair got doubled, therefore, $K$ only grows in $O(\log(T))$.

For the steering cost, we can calculate that
\begin{align*}
    C(\bmu^{t}_{M^*}, R^{t}_{\z})
    \leq 2H\|R_{\pi_*^{k(t)}}(\bmu^{t}_{M^*})\|_\infty,
\end{align*}
and for any $\pi,\sad$, $\|R_\pi(\sad)\|_\infty\leq2\|(W^\pi-I)\sad\|_2$ which, by Eq.~\eqref{eq:connection}, is equal to $2\sqrt{\langle R_\pi(\sad),\sad-\sad^\pi\rangle}$.
Using Jensen's inequality and Assump.~\ref{ass: adaptive regret}, we derive the final bound.

\section{STEERING WITH NON-ZERO INTRINSIC REWARD}\label{sec: incentive design with unknown reward}
Next, we turn to \textbf{Scenario 2} in Sec.~\ref{sec:objectives}, the complete setting where the agents' pre-existing reward function $r^* \in [0, r_{\max}]$ is both non-zero and unknown.
The non-zero intrinsic reward introduces non-trivial additional challenges.
Firstly, it changes the steering landscape and introduces some prior bias for our steering reward design.
Secondly, since it is unknown, we must account for its interference on the steering dynamics and undertake strategic exploration to estimate $r^*$.
In the following, we explain how we overcome these challenges by a pessimism-based reward estimation strategy.

\textbf{Confidence set for $r^*$}\quad
We recall our setup in Sec.~\ref{sec: intro to incentive design}: the mediator can observe the population density $\bmu^{t}$ and noisy reward $r^t=(r^*_h(s_h^t,a_h^t,\bmu^t_{M^*,h})+\xi_h)_{h\in[H]}$ perturbed by i.i.d.\ zero-mean $\sigma$-sub-Gaussian noise $\xi$.
We will use this information to estimate the original reward.
At each iteration $t$, we maintain a confidence set $\hat\cR^{t}$ for $r^*$, defined by:
\begin{align}
    \hat\cR^{t}:=&\left\{\hat r\in\cR:\|\hat r-\bar r^{t}\|_{2,E_t}\leq\sqrt{\beta_t}\right\},\nonumber\\
    \bar r^{t}:=&\argmin_{\hat r\in\cR}\sum_{i=1}^{t-1}\sum_{h=1}^H\left(\hat r_h(s^i_h,a^i_h,\bmu^i_{M^*,h})-r_h^{i}\right)^2,\label{eq: reward confidence set}
\end{align}
where $\|g\|_{2,E_t}^2:=\sum_{i=1}^{t-1}\sum_{h=1}^H(g_h(s^i_h,a^i_h,\bmu^i_{M^*,h}))^2$ for any function $g$ as a short note. We use $\beta_t$ to denote confidence interval length to ensure $r^*$ is contained in the confidence set at any time with high probability. 
We defer a detailed choice of $\beta_t$ to Lem.~\ref{lem: true unknown reward in confidence set}. Informally, $\beta_t = O(\sigma^2 \log N(\cR,\frac{1}{T}))$ grows in $\log T$, where $N(\cR,\epsilon)$ is the $\epsilon$-covering number of $\cR$.

\textbf{Steering Reward Design with Pessimism}\quad
We consider the following steering reward design
\begin{align}\label{eq: reward modification}
    \forall \mu\in\Delta_{\cS\times\cA}^H,~&R^{t}_{\nz}(\mu):=R_{\pi_*^{k(t)}}(\mu)-(\bar r^{t}(\mu)-w_{\hat\cR^{t}}(\mu))\nonumber\\
    &+(r_{\max}+\|R_{\pi_*^{k(t)}}(\mu)\|_{\infty})\bm1
\end{align}
Here $\pi_*^{k(t)}$ is computed in the same way as Alg,~\ref{alg: incentive design}; $\bar{r}^t\in\hat\cR^{t}$ (defined in Eq.~\eqref{eq: reward confidence set}) is the reward estimation achieving the minimal empirical loss; $w_{\hat\cR^{t}}(\sad)$ is a vector with elements $(w_{\hat\cR^{t}}(\sad))_{h,s,a}:=\sup_{r,\tilde r\in\hat\cR^{t}}\left|r_h(s,a,\sad)-\tilde r_h(s,a,\sad)\right|$, which quantifies the estimation uncertainty for each state-action pair; the last constant shift term ensures non-negativity.

As we can see, the main difference compared with steering reward $R^{t}_\z$ in Alg.~\ref{alg: incentive design} is that we include an additional reward estimation term to offset the effect by the non-zero original reward $r^*$. In this way, the agents will follow the guidance by $R_{\pi_*^{k(t)}}$ to explore as we want.
Note that here we conduct a \emph{pessimism-based} reward estimation such that $\bar r^{t}-w_{\hat\cR^{t}} \leq r^*$ for some technical reason, which we will explain later.

\textbf{Steering Algorithm Design}\quad
The algorithm design for the non-zero intrinsic reward setting only differs from Alg.~\ref{alg: incentive design} in the additional update of $\hat\cR^{t}$ as in Eq.~\eqref{eq: reward confidence set} and choosing Eq.~\eqref{eq: reward modification} as the steering reward $R_{\nz}^{t}$.
For completeness, we defer the detailed algorithm to Alg.~\ref{alg: incentive design unknown reward} in Appx.~\ref{appx:alg_nonzero_reward}.
We have the following guarantees for steering gap and steering cost.

\begin{restatable}{theorem}{ThmUtilityRegretUnknownReward}\label{thm: incentive designer regret unknown reward}
    Under Assump.~\ref{assump:realizability},~\ref{ass: adaptive regret} and~\ref{assump:Lipschitz}, if we run Alg.~\ref{alg: incentive design unknown reward} with $0<\delta<1$, then with probability at least $1-6\delta$, $K\leq HSA\log_2T$, and
    \begin{align*}
        \Delta_T(\{\bmu^{t}_{M^*}\}_{t=1}^T)
        \leq\,& L_U\sqrt{H^3SAT(K\adareg(T)+D)}\\
        &+36L_UH^3S\sqrt{AT\ln(THSA/\delta)},\\
        C_T(\{\bmu^{t}_{M^*},R_{\nz}^{t} - (&r_{\max}\cdot\bm1 - r^*)\}_{t=1}^T)\\
        =4H&\sqrt{T(K\adareg(T)+D)}+D,
    \end{align*}
    where $D = \tilde{O}(\sqrt{\beta_TH\dim_E(\cR,T^{-1})T}))$.
\end{restatable}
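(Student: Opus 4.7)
The proof plan is to mirror the structure of the proof of Thm.~\ref{thm: incentive designer regret} (the zero-reward case) while carefully absorbing the interference introduced by the non-zero unknown intrinsic reward through the pessimism mechanism in $R_{\nz}^t$. First I would establish a high-probability ``good event'' on which (i) the transition confidence sets $\cP^k$ contain $\PP^*$ for every epoch (by the $L_1$ concentration used in Thm.~\ref{thm: incentive designer regret}), (ii) the reward confidence sets $\hat\cR^t$ contain $r^*$ for every $t$ (by Lem.~\ref{lem: true unknown reward in confidence set} with the stated choice of $\beta_t$), and (iii) the standard martingale concentration for the observed visitation counts holds. A union bound yields a $1-6\delta$-type guarantee, and as in Alg.~\ref{alg: incentive design} the doubling-trigger gives $K\leq HSA\log_2 T$ deterministically.

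For the steering gap I would reuse the decomposition from the proof sketch of Thm.~\ref{thm: incentive designer regret}: $\Delta_T \leq L_U(2H+1)\Delta_{\mathrm{est}} + L_U H\Delta_{\mathrm{pop}}$, where $\Delta_{\mathrm{est}}$ measures how far $\mu^{\bpi^t}_{\hat M^{k(t)}}$ is from $\bmu^t_{M^*}$ and $\Delta_{\mathrm{pop}}$ measures how far $\bpi^t$ is from the target $\pi_*^{k(t)}$. The model-estimation term $\Delta_{\mathrm{est}}$ is handled identically to Thm.~\ref{thm: incentive designer regret}, producing the $36 L_U H^3 S\sqrt{AT\ln(THSA/\delta)}$ contribution. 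For $\Delta_{\mathrm{pop}}$ I would invoke Lem.~\ref{lem:density_diff_upper_bound_specialized} and then apply Cauchy--Schwarz exactly as in Eq.~\eqref{eq:dyn_regret} to reduce it to $\sqrt{HSAT\cdot \mathtt{AgentReg}}$, where $\mathtt{AgentReg}=\sum_t \langle R_{\pi_*^{k(t)}}(\bmu^t_{M^*}),\, \mu^{\pi_*^{k(t)}}_{M^*}-\bmu^t_{M^*}\rangle$.

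The main new step is bounding $\mathtt{AgentReg}$ when the agents' no-adaptive regret is evaluated against $r^*+R_{\nz}^t$ rather than against $R_{\pi_*^{k(t)}}$ alone. Substituting the definition of $R_{\nz}^t$ gives
\begin{align*}
r^*+R_{\nz}^t \;=\; R_{\pi_*^{k(t)}} \,+\, \underbrace{(r^* - \bar r^t + w_{\hat\cR^t})}_{\geq\, 0 \text{ on the good event}} \,+\, \bigl(r_{\max}+\|R_{\pi_*^{k(t)}}\|_\infty\bigr)\bm 1,
\end{align*}
and the constant-shift $\bm 1$-term cancels out in any inner product with $\mu-\bmu^t$ since both are occupancy measures summing to $H$. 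Partitioning $[1,T]$ into the $K$ policy-switching epochs and applying Prop.~\ref{prop:no_regret_population} separately on each epoch with comparator $\mu^{\pi_*^k}_{M^*}$ yields
\begin{align*}
\mathtt{AgentReg} \;\leq\; K\,\adareg(T) \,+\, 2\sum_{t=1}^T \langle w_{\hat\cR^t}(\bmu^t_{M^*}),\bmu^t_{M^*}\rangle,
\end{align*}
where the second sum uses $0 \leq r^*-\bar r^t + w_{\hat\cR^t} \leq 2w_{\hat\cR^t}$ together with $\langle \cdot,\bmu^t-\mu^{\pi_*^{k(t)}}_{M^*}\rangle \leq \langle \cdot,\bmu^t\rangle$ for non-negative integrands. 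This is where pessimism is essential: it ensures the extra term has a definite sign so that it can be treated as a one-sided error rather than potentially boosting the agents' regret. A standard Azuma/martingale step converts $\langle w_{\hat\cR^t}(\bmu^t_{M^*}),\bmu^t_{M^*}\rangle$ into the per-sample quantity $\sum_h w_{\hat\cR^t}(h,s^t_h,a^t_h,\bmu^t_h)$, and then the Eluder dimension machinery (as in \citet{Eluder,huang_model-based_2024}, exactly as used elsewhere in the paper) bounds the cumulative width by $D=\tilde O\bigl(\sqrt{\beta_T H \dim_E(\cR,T^{-1})T}\bigr)$. Plugging this into the Cauchy--Schwarz bound gives the claimed $L_U\sqrt{H^3 SAT(K\adareg(T)+D)}$ contribution.

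For the steering cost with the sandboxing comparator, I would compute $R_{\nz}^t-(r_{\max}\bm 1-r^*) = R_{\pi_*^{k(t)}}+(r^*-\bar r^t+w_{\hat\cR^t})+\|R_{\pi_*^{k(t)}}\|_\infty \bm 1$ and evaluate $\langle\cdot,\bmu^t_{M^*}\rangle$ term-by-term. The first piece satisfies $\langle R_{\pi_*^{k(t)}}(\bmu^t),\bmu^t\rangle = -\|(W^{\pi_*^{k(t)}}-I)\bmu^t\|_2^2 \leq 0$; the $\|R_{\pi_*^{k(t)}}\|_\infty \bm 1$ piece contributes $H\|R_{\pi_*^{k(t)}}(\bmu^t)\|_\infty$, which—via the bound $\|R_\pi(\mu)\|_\infty\leq 2\sqrt{\langle R_\pi(\mu),\mu-\mu^\pi\rangle}$ from Sec.~\ref{sec: policy incentivization}, Jensen, and the $\mathtt{AgentReg}$ bound above—gives the $4H\sqrt{T(K\adareg(T)+D)}$ term; and the pessimism term contributes at most $2\sum_t\langle w_{\hat\cR^t}(\bmu^t),\bmu^t\rangle = O(D)$ by the same Eluder argument. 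Combining yields the stated cost bound. The main obstacle I anticipate is the bookkeeping around the non-adaptive stacking of the Eluder-based bound against the epoch-wise application of no-adaptive regret and the $\ell_1$-to-$\ell_2$ reductions; the conceptual novelty is entirely in showing that pessimism makes the reward-estimation slack one-sided so that it lands in the $D$ term rather than corrupting $\mathtt{AgentReg}$.
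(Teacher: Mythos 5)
Your proposal is correct and follows essentially the same route as the paper's proof: the same good-event union bound, the same reuse of the Thm.~\ref{thm: incentive designer regret} decomposition for $\Delta_{\text{est}}$ and $\Delta_{\text{pop}}$, the same pessimism argument making the slack $r^*-\bar r^t+w_{\hat\cR^t}$ one-sided so that $\texttt{AgentReg}\leq K\adareg(T)+2\sum_t\langle w_{\hat\cR^t}(\bmu^t),\bmu^t\rangle$ (the paper packages this as Lem.~\ref{lem: unknown reward agent regret}), the same martingale-plus-Eluder bound yielding $D$, and the same term-by-term evaluation of the cost against the sandboxing comparator. The only discrepancy is a cosmetic sign in the orientation of $\langle R_\pi(\mu),\mu^\pi-\mu\rangle$ when quoting the $\|R_\pi\|_\infty$ bound, which also appears in the paper's own proof sketch and does not affect the argument.
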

Comparing with Theorem~\ref{thm: incentive designer regret}, we can find both the steering gap and cost only differ in the additional term $D$, which results from the estimation error of $r^*$. The term $D$ depends on the Eluder dimension of $\cR$ and $\beta_T$. In Appx.~\ref{appx:example_function_class}, we show several common function classes with $\dim_E(\cR,T^{-1})\in\tilde\cO(1)$, and where by choosing $\beta_T$ appropriately, we have $D\in\tilde\cO(\sqrt{T})$.
As a result, both the steering gap and cost upper bounds in Thm.~\ref{thm: incentive designer regret unknown reward} will be sub-linear in $T$.

\textbf{Proof Sketch}\quad
Similar to the proof for Thm.~\ref{thm: incentive designer regret}, we can decompose the steering gap as Eq.~\eqref{eq:decompose}, and upper bound model estimation error term $\Delta_{\text{est}}$ in the same way. The proof diverges when we upper bound \texttt{AgentReg} in Eq.~\eqref{eq:dyn_regret}, because the agents' no-regret behavior holds for $r^* + R_{\nz}^{t}$ in this setting. We can write 
\begin{align*}
    \texttt{AgentReg}
    &=\sum_{t=1}^T \langle R_{\nz}^{t}(\bmu^t_{M^*}) + r^*(\bmu^t_{M^*}) - r^*(\bmu^t_{M^*})\\
    &\quad+\bar r^{t}(\bmu^t_{M^*}) - w_{\hat\cR^{t}}(\bmu^t_{M^*}),\sad_{M^*}^{\pi_*^{k(t)}}-\bmu^{t}_{M^*}\rangle.
\end{align*}
Using pessimism, i.e., $r^*\geq\bar r^{t} - w_{\hat\cR^{t}}$, we can bound this by
\begin{align*}
    &\sum_{t=1}^T \langle R_{\nz}^{t}(\bmu^t_{M^*}) + r^*(\bmu^t_{M^*}),\sad_{M^*}^{\pi_*^{k(t)}}-\bmu^{t}_{M^*}\rangle\\
    &\quad+\sum_{t=1}^T\langle r^*(\bmu^{t}_{M^*}) - \bar r^{t}(\bmu^t_{M^*}) + w_{\hat\cR^{t}}(\bmu^t_{M^*}),\bmu^{t}_{M^*}\rangle.
\end{align*}

Clearly, the first term above is just agents' dynamic regret regarding the total reward they received and can be bounded again by $K\adareg(T)$. 
The second term above can be further controlled by $\cO(\sum_t \langle w_{\hat\cR^{t}}(\bmu^t),\bmu^{t} \rangle)$, which is basically the accumulative confidence interval length for reward estimation and its growth can be controlled by Eluder dimension (Lem.~\ref{lem: eluder confidence set bound}) and is only sub-linear in $T$.

For the steering cost, we can provide an upper bound involving \texttt{AgentReg} and reward estimation error that we analyzed before. To save space, we do not repeat it here and refer the reader to Appx.~\ref{appx:proof_non_zero_rew} for the full proof.

\begin{remark}
    Our strategy to deal with the intrinsic reward $r^*$ is to try to ``cancel'' it with our steering reward. This approach is justified by the fact that we keep $r^*$ and $U$ very general, which means that the target density to maximize $U$ may not coincide with an equilibrium associated with the original reward $r^*$. Therefore, to ensure the target density is still a stationary point for no-regret learners, we treat $r^*$ as a competing force to offset.
    We admit that there might be other options to counteract the impact of $r^*$ with lower steering costs, and we leave further investigation to the future work.
\end{remark}
\begin{remark}[Generalization to Unknown Utility Setting]\label{rem:unknown utility}
    Although this paper focuses on the case when $U$ is revealed to the mediator, it is possible to generalize our results to the case where the utility function $U$ is unknown, but it lies in a known function class $\cU$ with bounded Eluder dimension.
    In Appx.~\ref{appx:unknown utility}, we formalize this setting and present a solution to address this case based on a simple modification of the current methods. 
    Our established regret bound for steering gap and steering cost grow at a rate of $\tilde\cO(T^{5/6})$.
    Although the results are worse than the rate of $\tilde\cO(T^{3/4})$ in Thm.~\ref{thm: incentive designer regret unknown reward} due to the challenges in exploring the utility function, they are still sub-linear in $T$.
\end{remark}

\section{CONCLUSION}

We study a novel problem setting for incentive design in unknown mean-field games with no-regret agents. Our optimistic algorithm introduces newly developed steering reward designs, achieving sublinear utility regret and steering costs when the intrinsic reward is zero. Extending to the setting with a non-zero and unknown intrinsic reward function, we adapted our algorithm to handle this new challenge, maintaining sublinear utility regret and vanishing steering costs competing with a baseline strategy.
Future work could explore the more challenging case where the transition function is also dependent on the population density. Another interesting direction is to identify better or even optimal steering reward design to stabilize the target policy and design an algorithm with sub-linear guarantees comparing with that benchmark.

\section*{Acknowledgements}
This work is supported by Swiss National Science Foundation (SNSF) Project Funding No. 200021-207343 and SNSF Starting Grant.

\bibliography{refs}

\begin{thebibliography}{}

\bibitem[Achdou and Lasry, 2019]{achdou_mean_2019}
Achdou, Y. and Lasry, J.-M. (2019).
\newblock Mean {Field} {Games} for {Modeling} {Crowd} {Motion}.
\newblock In Chetverushkin, B.~N., Fitzgibbon, W., Kuznetsov, Y., Neittaanmäki, P., Periaux, J., and Pironneau, O., editors, {\em Contributions to {Partial} {Differential} {Equations} and {Applications}}, pages 17--42. Springer International Publishing, Cham.

\bibitem[Balcan et~al., 2010]{balcan2010CircumventingThePriceOfAnarchy}
Balcan, F., Blum, A., and Mansour, Y. (2010).
\newblock Circumventing the price of anarchy: Leading dynamics to good behavior.
\newblock {\em Proceedings of Innovations in Computer Science (ICS)}, pages 201--213.

\bibitem[Balcan, 2011]{balcan2011LeadingDynamicsToGoodBehavior}
Balcan, M.~F. (2011).
\newblock Leading dynamics to good behavior.
\newblock {\em SIGecom Exch.}, 10(2):19–22.

\bibitem[Balcan et~al., 2013]{balcan_circumventing_2013}
Balcan, M.-F., Blum, A., and Mansour, Y. (2013).
\newblock Circumventing the {Price} of {Anarchy}: {Leading} {Dynamics} to {Good} {Behavior}.
\newblock {\em SIAM Journal on Computing}, 42(1):230--264.
\newblock Publisher: Society for Industrial and Applied Mathematics.

\bibitem[Baumann et~al., 2020]{baumann2020adaptive}
Baumann, T., Graepel, T., and Shawe-Taylor, J. (2020).
\newblock Adaptive mechanism design: Learning to promote cooperation.
\newblock In {\em 2020 International Joint Conference on Neural Networks (IJCNN)}, pages 1--7. IEEE.

\bibitem[Brown et~al., 2024]{brown2024learning}
Brown, W., Schneider, J., and Vodrahalli, K. (2024).
\newblock Is learning in games good for the learners?
\newblock {\em Advances in Neural Information Processing Systems}, 36.

\bibitem[Cabannes et~al., 2021]{cabannes_solving_2021}
Cabannes, T., Lauriere, M., Perolat, J., Marinier, R., Girgin, S., Perrin, S., Pietquin, O., Bayen, A.~M., Goubault, E., and Elie, R. (2021).
\newblock Solving {N}-player dynamic routing games with congestion: a mean field approach.
\newblock arXiv:2110.11943 [cs, eess, math].

\bibitem[Camara et~al., 2020]{camara_mechanisms_2020}
Camara, M., Hartline, J., and Johnsen, A. (2020).
\newblock Mechanisms for a {No}-{Regret} {Agent}: {Beyond} the {Common} {Prior}.
\newblock arXiv:2009.05518 [cs, econ].

\bibitem[Canyakmaz et~al., 2024]{canyakmaz_steering_2024}
Canyakmaz, I., Sakos, I., Lin, W., Varvitsiotis, A., and Piliouras, G. (2024).
\newblock Steering game dynamics towards desired outcomes.
\newblock arXiv:2404.01066 [cs, eess].

\bibitem[Carmona and Wang, 2021]{carmona2021finite}
Carmona, R. and Wang, P. (2021).
\newblock Finite-state contract theory with a principal and a field of agents.
\newblock {\em Management Science}, 67(8):4725--4741.

\bibitem[Castiglioni et~al., 2023]{castiglioni2023multi}
Castiglioni, M., Marchesi, A., and Gatti, N. (2023).
\newblock Multi-agent contract design: How to commission multiple agents with individual outcomes.
\newblock In {\em Proceedings of the 24th ACM Conference on Economics and Computation}, pages 412--448.

\bibitem[Chen and Cheng, 2010]{chen2010trafficsystems}
Chen, B. and Cheng, H.~H. (2010).
\newblock A review of the applications of agent technology in traffic and transportation systems.
\newblock {\em IEEE Transactions on Intelligent Transportation Systems}, 11(2):485--497.

\bibitem[Curry et~al., 2024]{Curry2024Automateda}
Curry, M., Thoma, V., Chakrabarti, D., McAleer, S., Kroer, C., Sandholm, T., He, N., and Seuken, S. (2024).
\newblock Automated design of affine maximizer mechanisms in dynamic settings.
\newblock {\em Proceedings of the AAAI Conference on Artificial Intelligence}, 38(9):9626--9635.

\bibitem[DellaVigna and Malmendier, 2004]{dellavigna2004contract}
DellaVigna, S. and Malmendier, U. (2004).
\newblock Contract design and self-control: Theory and evidence.
\newblock {\em The Quarterly Journal of Economics}, 119(2):353--402.

\bibitem[Deng et~al., 2019]{deng_strategizing_2019}
Deng, Y., Schneider, J., and Sivan, B. (2019).
\newblock Strategizing against {No}-regret {Learners}.
\newblock arXiv:1909.13861 [cs].

\bibitem[Dinneweth et~al., 2022]{dinneweth_multi-agent_2022}
Dinneweth, J., Boubezoul, A., Mandiau, R., and Espié, S. (2022).
\newblock Multi-agent reinforcement learning for autonomous vehicles: a survey.
\newblock {\em Autonomous Intelligent Systems}, 2(1):27.

\bibitem[D{\"u}tting et~al., 2023]{dutting2023multi}
D{\"u}tting, P., Ezra, T., Feldman, M., and Kesselheim, T. (2023).
\newblock Multi-agent contracts.
\newblock In {\em Proceedings of the 55th Annual ACM Symposium on Theory of Computing}, pages 1311--1324.

\bibitem[Ehtamo et~al., 2002]{ehtamo2002recent}
Ehtamo, H., Kitti, M., and H{\"a}m{\"a}l{\"a}inen, R.~P. (2002).
\newblock Recent studies on incentive design problems in game theory and management science.
\newblock In {\em Optimal Control and Differential Games: Essays in Honor of Steffen J{\o}rgensen}, pages 121--134. Springer.

\bibitem[Elie et~al., 2019]{elie2019tale}
Elie, R., Mastrolia, T., and Possama{\"\i}, D. (2019).
\newblock A tale of a principal and many, many agents.
\newblock {\em Mathematics of Operations Research}, 44(2):440--467.

\bibitem[Freund and Schapire, 1997]{HedgeAlgorithm}
Freund, Y. and Schapire, R.~E. (1997).
\newblock A decision-theoretic generalization of on-line learning and an application to boosting.
\newblock {\em Journal of Computer and System Sciences}, 55(1):119--139.

\bibitem[Fu and Horst, 2018]{fu2018meanfieldleaderfollowergames}
Fu, G. and Horst, U. (2018).
\newblock Mean-field leader-follower games with terminal state constraint.

\bibitem[Ge et~al., 2024]{ge2024principledsuperhumanaimultiplayer}
Ge, J., Wang, Y., Li, W., and Jin, C. (2024).
\newblock Towards principled superhuman ai for multiplayer symmetric games.

\bibitem[Gomes et~al., 2014]{gomes_socio-economic_2014}
Gomes, D.~A., Velho, R.~M., and Wolfram, M.-T. (2014).
\newblock Socio-economic applications of finite state mean field games.
\newblock {\em Philosophical Transactions of the Royal Society A: Mathematical, Physical and Engineering Sciences}, 372(2028):20130405.
\newblock arXiv:1403.4217 [math].

\bibitem[Guo et~al., 2021]{guo_learning_2021}
Guo, X., Hu, A., Xu, R., and Zhang, J. (2021).
\newblock Learning {Mean}-{Field} {Games}.
\newblock arXiv:1901.09585 [math].

\bibitem[Guo et~al., 2023a]{guo_mesob_2023}
Guo, X., Li, L., Nabi, S., Salhab, R., and Zhang, J. (2023a).
\newblock {MESOB}: {Balancing} {Equilibria} \& {Social} {Optimality}.
\newblock arXiv:2307.07911 [cs, math].

\bibitem[Guo et~al., 2023b]{Guo2023MESOB}
Guo, X., Li, L., Nabi, S., Salhab, R., and Zhang, J. (2023b).
\newblock Mesob: Balancing equilibria \& social optimality.

\bibitem[Hazan, 2023]{hazan_introduction_2023}
Hazan, E. (2023).
\newblock Introduction to {Online} {Convex} {Optimization}.
\newblock arXiv:1909.05207 [cs, math, stat].

\bibitem[Hazan and Seshadhri, 2007]{AdaptiveRegret}
Hazan, E. and Seshadhri, C. (2007).
\newblock Adaptive algorithms for online decision problems.
\newblock {\em Electronic Colloquium on Computational Complexity (ECCC)}, 14.

\bibitem[Ho et~al., 2014]{ho2014adaptive}
Ho, C.-J., Slivkins, A., and Vaughan, J.~W. (2014).
\newblock Adaptive contract design for crowdsourcing markets: Bandit algorithms for repeated principal-agent problems.
\newblock In {\em Proceedings of the fifteenth ACM conference on Economics and computation}, pages 359--376.

\bibitem[Holmstr{\"o}m, 1979]{holmstrom1979moral}
Holmstr{\"o}m, B. (1979).
\newblock Moral hazard and observability.
\newblock {\em The Bell journal of economics}, pages 74--91.

\bibitem[Hu and Zhang, 2024]{hu_mf-oml_2024}
Hu, A. and Zhang, J. (2024).
\newblock {MF}-{OML}: {Online} {Mean}-{Field} {Reinforcement} {Learning} with {Occupation} {Measures} for {Large} {Population} {Games}.
\newblock arXiv:2405.00282 [cs, math].

\bibitem[Huang et~al., 2024a]{huang_model-based_2024}
Huang, J., He, N., and Krause, A. (2024a).
\newblock Model-{Based} {RL} for {Mean}-{Field} {Games} is not {Statistically} {Harder} than {Single}-{Agent} {RL}.
\newblock arXiv:2402.05724 [cs, stat].

\bibitem[Huang et~al., 2024b]{huang_learning_2024}
Huang, J., Thoma, V., Shen, Z., Nax, H.~H., and He, N. (2024b).
\newblock Learning to {Steer} {Markovian} {Agents} under {Model} {Uncertainty}.
\newblock arXiv:2407.10207 [cs, stat].

\bibitem[Huang et~al., 2023]{huang_statistical_2023}
Huang, J., Yardim, B., and He, N. (2023).
\newblock On the {Statistical} {Efficiency} of {Mean} {Field} {Reinforcement} {Learning} with {General} {Function} {Approximation}.
\newblock arXiv:2305.11283 [cs, stat].

\bibitem[Huang et~al., 2006]{huang2006large}
Huang, M., Malham{\'e}, R.~P., and Caines, P.~E. (2006).
\newblock Large population stochastic dynamic games: closed-loop mckean-vlasov systems and the nash certainty equivalence principle.

\bibitem[Innes, 1990]{innes1990limited}
Innes, R.~D. (1990).
\newblock Limited liability and incentive contracting with ex-ante action choices.
\newblock {\em Journal of economic theory}, 52(1):45--67.

\bibitem[Iyer et~al., 2014]{iyer2014mean}
Iyer, K., Johari, R., and Sundararajan, M. (2014).
\newblock Mean field equilibria of dynamic auctions with learning.
\newblock {\em Management Science}, 60(12):2949--2970.

\bibitem[Jaksch et~al., 2010]{jaksch10a}
Jaksch, T., Ortner, R., and Auer, P. (2010).
\newblock Near-optimal regret bounds for reinforcement learning.
\newblock {\em Journal of Machine Learning Research}, 11(51):1563--1600.

\bibitem[Lasry and Lions, 2007]{lasry2007mean}
Lasry, J.-M. and Lions, P.-L. (2007).
\newblock Mean field games.
\newblock {\em Japanese journal of mathematics}, 2(1):229--260.

\bibitem[Laurière et~al., 2024]{lauriere_learning_2024}
Laurière, M., Perrin, S., Pérolat, J., Girgin, S., Muller, P., Élie, R., Geist, M., and Pietquin, O. (2024).
\newblock Learning in {Mean} {Field} {Games}: {A} {Survey}.
\newblock arXiv:2205.12944 [cs, math].

\bibitem[Liu et~al., 2022]{liu_inducing_2022}
Liu, B., Li, J., Yang, Z., Wai, H.-T., Hong, M., Nie, Y.~M., and Wang, Z. (2022).
\newblock Inducing {Equilibria} via {Incentives}: {Simultaneous} {Design}-and-{Play} {Ensures} {Global} {Convergence}.
\newblock arXiv:2110.01212 [cs].

\bibitem[Luo et~al., 1996]{Luo1996Mathematical}
Luo, Z.-Q., Pang, J.-S., and Ralph, D. (1996).
\newblock {\em Mathematical Programs with Equilibrium Constraints}.
\newblock Cambridge University Press.

\bibitem[Osband and Roy, 2014]{EluderVector}
Osband, I. and Roy, B.~V. (2014).
\newblock Model-based reinforcement learning and the eluder dimension.

\bibitem[Perolat et~al., 2021]{perolat_scaling_2021}
Perolat, J., Perrin, S., Elie, R., Laurière, M., Piliouras, G., Geist, M., Tuyls, K., and Pietquin, O. (2021).
\newblock Scaling up {Mean} {Field} {Games} with {Online} {Mirror} {Descent}.
\newblock arXiv:2103.00623 [cs].

\bibitem[Ratliff et~al., 2019]{ratliff2019perspective}
Ratliff, L.~J., Dong, R., Sekar, S., and Fiez, T. (2019).
\newblock A perspective on incentive design: Challenges and opportunities.
\newblock {\em Annual Review of Control, Robotics, and Autonomous Systems}, 2(1):305--338.

\bibitem[Rosenberg and Mansour, 2019]{rosenberg2019online}
Rosenberg, A. and Mansour, Y. (2019).
\newblock Online convex optimization in adversarial markov decision processes.

\bibitem[Roughgarden and Tardos, 2007]{Roughgarden2007-ts}
Roughgarden, T. and Tardos, {\'E}. (2007).
\newblock Introduction to the inefficiency of equilibria.
\newblock In Nisan, N., Roughgarden, T., Tardos, E., and Vazirani, V.~V., editors, {\em Algorithmic Game Theory}, pages 443--460. Cambridge University Press, Cambridge.

\bibitem[Russo and Van~Roy, 2013]{Eluder}
Russo, D. and Van~Roy, B. (2013).
\newblock Eluder dimension and the sample complexity of optimistic exploration.
\newblock In Burges, C., Bottou, L., Welling, M., Ghahramani, Z., and Weinberger, K., editors, {\em Advances in Neural Information Processing Systems}, volume~26. Curran Associates, Inc.

\bibitem[Sanjari et~al., 2024]{sanjari_incentive_2024}
Sanjari, S., Bose, S., and Başar, T. (2024).
\newblock Incentive {Designs} for {Stackelberg} {Games} with a {Large} {Number} of {Followers} and their {Mean}-{Field} {Limits}.
\newblock arXiv:2207.10611 [cs].

\bibitem[Scheid et~al., 2024]{scheid2024incentivized}
Scheid, A., Tiapkin, D., Boursier, E., Capitaine, A., Mhamdi, E. M.~E., Moulines, {\'E}., Jordan, M.~I., and Durmus, A. (2024).
\newblock Incentivized learning in principal-agent bandit games.
\newblock {\em arXiv preprint arXiv:2403.03811}.

\bibitem[Steinbacher et~al., 2021]{steinbacher_advances_2021}
Steinbacher, M., Raddant, M., Karimi, F., Camacho~Cuena, E., Alfarano, S., Iori, G., and Lux, T. (2021).
\newblock Advances in the agent-based modeling of economic and social behavior.
\newblock {\em SN Business \& Economics}, 1(7):99.

\bibitem[Subramanian et~al., 2022]{subramanian2022decentralized}
Subramanian, S.~G., Taylor, M.~E., Crowley, M., and Poupart, P. (2022).
\newblock Decentralized mean field games.
\newblock In {\em Proceedings of the AAAI Conference on Artificial Intelligence}, volume~36, pages 9439--9447.

\bibitem[Wang et~al., 2022]{Wang2021Coordinating}
Wang, K., Xu, L., Perrault, A., Reiter, M.~K., and Tambe, M. (2022).
\newblock Coordinating followers to reach better equilibria: End-to-end gradient descent for stackelberg games.
\newblock {\em Proceedings of the AAAI Conference on Artificial Intelligence}, 36(5):5219--5227.

\bibitem[Weissman et~al., 2003]{Weissman2003InequalitiesFT}
Weissman, T., Ordentlich, E., Seroussi, G., Verd{\'u}, S., and Weinberger, M.~J. (2003).
\newblock Inequalities for the l1 deviation of the empirical distribution.

\bibitem[Yang et~al., 2022]{Yang2022Adaptive}
Yang, J., Wang, E., Trivedi, R., Zhao, T., and Zha, H. (2022).
\newblock Adaptive incentive design with multi-agent meta-gradient reinforcement learning.
\newblock In {\em Proceedings of the 21st International Conference on Autonomous Agents and Multiagent Systems}, AAMAS '22, page 1436–1445, Richland, SC. International Foundation for Autonomous Agents and Multiagent Systems.

\bibitem[Yardim et~al., 2022]{yardim_policy_2022}
Yardim, B., Cayci, S., Geist, M., and He, N. (2022).
\newblock Policy {Mirror} {Ascent} for {Efficient} and {Independent} {Learning} in {Mean} {Field} {Games}.

\bibitem[Zhang et~al., 2024]{zhang_steering_2024}
Zhang, B.~H., Farina, G., Anagnostides, I., Cacciamani, F., McAleer, S.~M., Haupt, A.~A., Celli, A., Gatti, N., Conitzer, V., and Sandholm, T. (2024).
\newblock Steering {No}-{Regret} {Learners} to a {Desired} {Equilibrium}.
\newblock arXiv:2306.05221 [cs].

\bibitem[Zhu et~al., 2022]{zhu2022sample}
Zhu, B., Bates, S., Yang, Z., Wang, Y., Jiao, J., and Jordan, M.~I. (2022).
\newblock The sample complexity of online contract design.
\newblock {\em arXiv preprint arXiv:2211.05732}.

\end{thebibliography}

\section*{Checklist}

 \begin{enumerate}

 \item For all models and algorithms presented, check if you include:
 \begin{enumerate}
   \item A clear description of the mathematical setting, assumptions, algorithm, and/or model. [Yes]
   \item An analysis of the properties and complexity (time, space, sample size) of any algorithm. [Yes]
   \item (Optional) Anonymized source code, with specification of all dependencies, including external libraries. [Not Applicable]
 \end{enumerate}

 \item For any theoretical claim, check if you include:
 \begin{enumerate}
   \item Statements of the full set of assumptions of all theoretical results. [Yes]
   \item Complete proofs of all theoretical results. [Yes]
   \item Clear explanations of any assumptions. [Yes]     
 \end{enumerate}

 \item For all figures and tables that present empirical results, check if you include:
 \begin{enumerate}
   \item The code, data, and instructions needed to reproduce the main experimental results (either in the supplemental material or as a URL). [Not Applicable]
   \item All the training details (e.g., data splits, hyperparameters, how they were chosen). [Not Applicable]
         \item A clear definition of the specific measure or statistics and error bars (e.g., with respect to the random seed after running experiments multiple times). [Not Applicable]
         \item A description of the computing infrastructure used. (e.g., type of GPUs, internal cluster, or cloud provider). [Not Applicable]
 \end{enumerate}

 \item If you are using existing assets (e.g., code, data, models) or curating/releasing new assets, check if you include:
 \begin{enumerate}
   \item Citations of the creator If your work uses existing assets. [Not Applicable]
   \item The license information of the assets, if applicable. [Not Applicable]
   \item New assets either in the supplemental material or as a URL, if applicable. [Not Applicable]
   \item Information about consent from data providers/curators. [Not Applicable]
   \item Discussion of sensible content if applicable, e.g., personally identifiable information or offensive content. [Not Applicable]
 \end{enumerate}

 \item If you used crowdsourcing or conducted research with human subjects, check if you include:
 \begin{enumerate}
   \item The full text of instructions given to participants and screenshots. [Not Applicable]
   \item Descriptions of potential participant risks, with links to Institutional Review Board (IRB) approvals if applicable. [Not Applicable]
   \item The estimated hourly wage paid to participants and the total amount spent on participant compensation. [Not Applicable]
 \end{enumerate}

 \end{enumerate}

\newpage
\onecolumn
\tableofcontents
\newpage
\appendix
\section{TABLE OF FREQUENTLY USED NOTATIONS}
\begin{center}
\begin{tabular}{ l l }
\hline
\textbf{Notation} & \textbf{Description}\\
\hline
$[n]$ & $\{1,2,...,n\}$ for any $n\in\NN$\\
$\Delta_\cX$ & Set of probability distributions over a finite set $\cX$\\
$\II\{\cE\}$ & Indicator function for the event $\cE$\\
$\bm1$ & All-one vector\\
$\be_i$ & The $i$-th standard-basis vector\\
$M=(N,\cS,\cA,H,\PP_M,r_M,\mu_1)$ & The model / game\\
$N$ & Number of agents\\
$\cS,\cA$ & State and action space\\
$H$ & Horizon length of the game\\
$\mu_1$ & Initial state distribution\\
$\{\PP_{M,h}:\cS\times\cA\to\Delta_\cS\}_{h\in[H]}$ & Transition function\\
$\{r_{M,h}:\cS\times\cA\times\Delta_{\cS\times\cA}\to[0,r_{\max}]\}_{h\in[H]}$ & Reward function\\
$\{R_{h}:\cS\times\cA\times\Delta_{\cS\times\cA}\to\RR\}_{h\in[H]}$ & Steering reward function (capitalized)\\
$r:\Delta_{\cS\times\cA}^H\to\RR^{HSA}$ & Vectorized reward function $(r(\mu))_{h,s,a}=r_h(s,a,\mu_h)$ \\
$\{\pi_h:\cS\to\Delta_\cA\}_{h\in[H]}$ & Markov policy\\
$\Pi$ & Set of all policies\\
$\sad_M^{\pi}$ & State-action density of policy $\pi$ in model $M$\\
$\Psi_M$ & Set of possible state-action densities in model $M$\\
$\adareg(T)$ & Adaptive regret bound after $T$ games\\
$U:\Delta_{\cS\times\cA}^H\to\RR$ & Utility function\\
$C(\bmu^t,R^t)=\langle R^t(\bmu^t),\bmu^t\rangle$ & Steering cost function\\
$R_\pi$ & Reward function which incentivizes policy $\pi$ \\
$M^*,r^*,\PP^*$ & True model, intrinsic reward, transition function \\
$R_{\z}$ & Steering reward for the setting where $r^*=0$. \\
& ``z'' in sub-scription as a short note of ``zero''. \\
$R_{\nz}$ & Steering reward for the setting where $r^*\in\cR$. \\
& ``nz'' in sub-scription as a short note of ``zero'' \\
$\dim_E(\cF,\epsilon)$ & Eluder dimension of function class $\cF$\\
$\bmu$ & Population density $\bmu := \frac{1}{N}\sum_n \mu^{\pi^n}$\\
$\bpi$ & Population average policy induced by $\bmu$\\
$\cO,\tilde{\cO}$ & Standard big-O notations \\
\hline
\end{tabular}
\end{center}

\section{SUMMARY OF MAIN RESULTS}\label{appx:summary}
In the following, we summarize the main theorems in this paper under Assump.~\ref{assump:realizability},~\ref{ass: adaptive regret} and~\ref{assump:Lipschitz}.
We study the steering gaps and costs of four settings. The settings are categorized depending on whether $M^*$ (or $\pi^* := \argmax_{\pi\in\Pi} U(\mu_{M^*}^\pi)$) is known or not, and whether the intrinsic reward function $r^*$ is zero or non-zero and unknown.
\begin{center}
\def\arraystretch{1.3}
\begin{tabular}{l l l l l l}
\hline
\textbf{Setting} & \textbf{$r^* = 0$?} & \textbf{Steering Gap} & \textbf{Steering Cost} & \textbf{Thm.}\\
\hline\hline
Known $M^*$ & \ding{51} & $\cO(L_U\sqrt{HSAT\adareg(T)})$ & $\cO(H\sqrt{T\adareg(T)})$ & \ref{thm: utility regret known model}\\ 
\makecell[tl]{Unknown $M^*$\\(known $\pi^*$)}  & \ding{51} & $\cO(L_U\sqrt{H^3SAT\adareg(T)})$ &
$\cO(H\sqrt{T\adareg(T)})$ & \ref{thm: regret of policy reward}\\ 
Unknown $M^*$ & \ding{51} & $\begin{aligned}{\mathcal{O}}(&L_U\sqrt{H^3SATK\adareg(T)}\\&+L_UH^3S\sqrt{AT\ln(THSA/\delta)})\end{aligned}$
& $\cO(H\sqrt{TK\adareg(T)})$ & \ref{thm: incentive designer regret}\\ 
Unknown $M^*$ & \ding{55} & $\begin{aligned}
    \cO(&L_U\sqrt{H^3SAT(K\adareg(T)+D)}\\
    &+L_UH^3S\sqrt{AT\ln(THSA/\delta)})
\end{aligned}$ & $\begin{aligned}\cO(H\sqrt{T(K\adareg(T)+D)}) \\ + D + C_T(\{\bmu^{t},r_{\max}\cdot \mathbf{1} - r^*\}_{t=1}^T)\end{aligned}$ & \ref{thm: incentive designer regret unknown reward} \\ 
\hline
\end{tabular}
\end{center}
Here $K = \cO(HSA\log T)$ and $D =\tilde{\cO}(\sqrt{\beta_TH\dim_E(\cR,T^{-1})T}))$, where $\dim_E$ is the eluder dimension of reward function class $\cR$, and $\beta_T = \tcO(1)$.

\section{OTHER RELATED WORKS}\label{appx:related_works}

\paragraph{More Elaboration on Comparison between the Steering Setting and Contract Design Setting}
The steering setup differs from previous incentive design literature in two aspects: (1) it deals with “learning agents” continuously updating their policies and (2) it cares about the steering gap towards a target policy and the accumulative steering cost. One of the most related and representative existing problem setups is contract design (a.k.a. the principal-agent problem), which is a classical problem dating back to the seminal work \citep{holmstrom1979moral} in 1979. 
As we discussed in Sec.~\ref{sec:related_work}, it considers a similar mediator-agents interaction procedure.
In the following, we elaborate more on the comparison between those two settings to support our steering setting.

\begin{itemize}
    \item[(1)] Contract design assumes the agents respond optimally to the mediator/principal (e.g. maximize the total return including the incentives by mediator), which is a quite strong assumption and “simplifies” the problem by making the agents’ behaviors predictable. 
    
    In contrast, the steering framework treats the agents’ behavior as a dynamic process. For example, \citet{zhang_steering_2024} and ours consider no-regret behaviors, and \citep{huang_learning_2024,canyakmaz_steering_2024} assumes Markovian learning dynamics. Such a non-stationarity is more reasonable in practice and introduces additional challenges in achieving low the steering gap and cost.

    \item[(2)] Contract design considers a more challenging objective, and targets at finding the optimal incentive design to maximize the mediator’s gain deducted by the incentivizing cost. Usually, it also assumes the agents’ behaviors are unobservable. Due to such challenges, most of the contract design literature focuses on single-agent setting and assumes the knowledge of the model.

    On the other hand, the steering setting considers steering the agents to some target policies maximizing some utility function, which makes the framework more general. Besides, we do not pursue the optimality in steering cost but sub-linearity would be enough. This is reasonable because in many scenarios we only have budget constraints but do not have to achieve the optimum. Such a relaxation also makes the problem more tractable.
\end{itemize}

\paragraph{Mean-field game}
The mean-field game (MFG) is an important framework to model systems with a large number of symmetric agents \citep{lauriere_learning_2024}. 
Most works in the context of MFGs focus on learning equilibrium policies.
As the pioneers, \citet{lasry2007mean} and \citet{huang2006large} reveal that learning Nash Equilibrium (NE) is computationally efficient under monotonicity conditions if the model is known in advance.
Without the knowledge of the true model, many previous works contribute sample-efficient model-free \citep{guo_learning_2021,yardim_policy_2022,perolat_scaling_2021} and model-based \citep{huang_statistical_2023, huang_model-based_2024} methods to compute NE.
Our mean-field game definition is similar to the general MFG setting \citep{guo_learning_2021}, but unlike them, we assume transitions are density-independent and allow independence of agents' policies. This density-independent transition assumption has been frequently considered in previous works \citep{lasry2007mean, huang2006large,hu_mf-oml_2024,perolat_scaling_2021}. 
To our knowledge, we are the first to investigate steering agents' behaviors in the context of the mean-field game.

\paragraph{Mathematical Programming with Equilibrium Constraints (MPEC) and Mechanism Design}

MPEC considers a bilevel optimization formulation, where the upper level can be utility maximization problem and the lower level involves equilibrium constraints \citep{Luo1996Mathematical}.
There is a line of research works \citep{liu_inducing_2022,Wang2021Coordinating,Yang2022Adaptive} consider gradient-based approaches to solve MPEC problems. 
They usually require strong assumptions on computing hyper-gradients, which may fail to be satisfied in most games.
In contrast, we do not involve those assumptions or restrict the target policies are equilibria.
We only assume the agents are no-regret learners and do not require them to solve the equilibria induced by modified reward functions.

Another related field within game theory is Mechanism Design, which focuses on designing rules or systems (mechanisms) to achieve a specific objective, especially when participants (agents) have private information and act according to their own interests.
Most recent works consider mechanism design on Markov Games \citep{Curry2024Automateda,baumann2020adaptive}.

\citet{Guo2023MESOB} consider a bi-level optimization framework and another bi-objective variant, where the goal of the social planner is to solve an equilibrium policy maximizing some social welfare function.
They do not consider the usage of steering reward to intervene agents, and focus on the optimization side without considering model uncertainty.
In contrast, we study the incentive design problem, and focus on how to explore and design appropriate steering rewards to guide agents' behaviors without knowledge of the model.

\paragraph{Steering with advertisements}
Instead of steering agents with additional incentive payments, there are also settings which allow the modeling of advertisement campaigns \citep{balcan_circumventing_2013, balcan2011LeadingDynamicsToGoodBehavior, balcan2010CircumventingThePriceOfAnarchy} . The basic idea is that the mediator can advertise some strategy to the agents, who then choose between it and the best response. As opposed to steering with rewards (i.e., incentives), the mediator does not change the reward structure. The goal is to ``nudge'' the agents towards a better equilibrium.

\section{REGARDING NO-ADAPTIVE REGRET ASSUMPTION}

\subsection{Proof Of Proposition~\ref{prop:no_regret_population}}
\PropNoRegretPopulation*
\begin{proof}
    We have
    \begin{align*}
        \sup_{1\leq a<b\leq T}\max_{\mu\in\Psi_{M^*}}\sum_{t=a}^b\langle r^*(\bmu^t)+R^t(\bmu^t),\mu-\bmu^t\rangle
        =\sup_{1\leq a<b\leq T}\max_{\mu\in\Psi_{M^*}}\sum_{t=a}^b\langle r^*(\bmu^t)+R^t(\bmu^t),\mu-\frac1N\sum_{n=1}^N\mu^{\pi^{n,t}}\rangle\\
        \leq\frac1N\sum_{n=1}^N\sup_{1\leq a<b\leq T}\max_{\mu\in\Psi_{M^*}}\sum_{t=a}^b\langle r^*(\bmu^t)+R^t(\bmu^t),\mu-\mu^{\pi^{n,t}}\rangle
        \leq\frac1N\sum_{n=1}^N\adareg(T)
        =\adareg(T),
    \end{align*}
    where we used Assumption~\ref{ass: adaptive regret} in the third step.
\end{proof}

\subsection{Concrete Examples Satisfying No-Adaptive Regret Assumption}\label{appx:concrete_example}
In this section, we provide some concrete agents learning dynamics examples to support our arguments on the practicality of Assump.~\ref{ass: adaptive regret}.

\paragraph{Example 1: Colluded Agents with Full Observation of $R^{t}$}
If the agents are able to observe the mediator's steering strategy $R^{t}$ and $R^{t}$ is Lipschitz in density (which is indeed satisfied by our proposed algorithms), the agents can collude together and take a (approximate) Nash Equilibrium policy induced by the reward function $r^* + R^{t}$, which is guaranteed to be exist given the Lipschitz condition \citep{huang_statistical_2023}.
By the definition of Nash, each agent will have non-positive adaptive regret, which satisfies Assump.~\ref{ass: adaptive regret}.

Note that in the contract design literature, it is usually assumed the agents are able to do best response \citep{ho2014adaptive,zhu2022sample} to the principal's (mediator's) strategy if there is only one agent, or take the equilibrium policies for many agents setting \citep{carmona2021finite, elie2019tale}. 
Based on the discussion above, those assumptions are strictly stronger than and implies our no-adaptive-regret assumption.

\paragraph{Example 2: Independent Agents Conducting Online Convex Learning}
In this second example, we consider less powerful agents who can not observe the entire $R^{t}$ or coordinate with the other agents.
Note that from an agent's perspective, the interaction protocol in Procedure~\ref{alg: agent-mediator interaction} can be interpreted as an online linear optimization task, as in Procedure~\ref{alg: agent-adversary interaction}.

\begin{algorithm}
\makeatletter
\renewcommand{\ALG@name}{Procedure}
\makeatother
\caption{Agent-adversary interaction}
\label{alg: agent-adversary interaction}
\begin{algorithmic}[1]
    \For{$t=1,...,T$}
        \State Agent chooses $x_t\in\cX$, where $\cX\subseteq\RR^d$ is a convex set in Euclidean space.
        \State Adversary chooses a reward vector $r_t\in\RR^d$, possibly based on the history and $x_t$.
        \State Agent observes $r_t$ and obtains reward $\langle r_t,x_t\rangle$.
    \EndFor
\end{algorithmic}
\end{algorithm}

In our setting, in each iteration $t\in[T]$, the agents pick a density (by picking a policy) from the convex set $\Psi_{M^*}$ and receive potentially adversarial feedback $R^{t}(\bmu^{t})$ (or $\langle R^{t}(\bmu^{t}),\sad^{\pi^{n,t}}\rangle$ in bandit feedback setting).
Then, Assump.~\ref{ass: adaptive regret} coincides with the standard no-adaptive regret guarantees in online convex optimization setting.
Therefore, Assump.~\ref{ass: adaptive regret} can be realized if each agent independently adopts any no-adaptive regret online learning algorithm \citep{AdaptiveRegret,hazan_introduction_2023}.

As a concrete algorithm choice, online gradient descent (OGD) achieves a external regret bound of $\frac32GD\sqrt{T}$ \citep{hazan_introduction_2023}, where $D\leq2H$ is the diameter of $\cX=\Psi_{M^*}$ and $G$ an upper bound on $\Vert r_t\Vert_2\leq\sqrt{d}\Vert r_t\Vert_\infty$. In our case, we can bound $G\leq\sqrt{HSA}(r_{\max}+R_{\max})$. A bound for $r_{\max}+R_{\max}$ is discussed in Appendix~\ref{appx : boundedness of steering rewards}.
Moreover, in the full feedback setting (the agents know the model $M$ and are able to observe $R^{t}(\bmu^{t})$), the no-adaptive-regret assumption is not much stronger than no-external-regret, as is demonstrated by the following proposition.
\begin{proposition}[Theorem 1.3 of \citet{AdaptiveRegret}]\label{prop:convertion_external_adaptive}
    Let $(r_t)_{t=1}^T$ be reward vectors in $[0,C]^d$. Any algorithm following Protocol~\ref{alg: agent-adversary interaction} with external regret $\reg(T)$ can be utilized to build an algorithm with adaptive regret at most $\reg(T)+\cO(C\sqrt{T\log T})$.
\end{proposition}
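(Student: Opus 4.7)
The plan is to reduce adaptive regret to external regret via a ``Follow the Leading History''-style meta-algorithm. First, for each starting time $j\in[T]$ I would instantiate a fresh copy $\mathcal{A}_j$ of the given base algorithm, with $\mathcal{A}_j$ beginning play at round $j$; by assumption each $\mathcal{A}_j$ achieves external regret at most $\reg(T)$ on any prefix of its own timeline. On top of these copies I would run an experts algorithm (Hedge with learning rate $\eta = \Theta(\sqrt{\log T/T})$), treating the currently-live $\mathcal{A}_j$'s as experts and playing their Hedge-weighted convex combination at each round. Convexity of $\cX$ ensures the combination is a valid action, and linearity of the reward $\langle r_t, \cdot \rangle$ implies the meta-algorithm's per-round reward equals the Hedge-weighted average of its experts' rewards, so the usual experts regret bounds apply verbatim to the reward sequence.

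For any target interval $[a,b]\subseteq[T]$, I would decompose the gap between the best fixed $x^\star\in\cX$ on $[a,b]$ and the meta-algorithm's cumulative reward on $[a,b]$ into two pieces: (i) the gap between $x^\star$ and the base expert $\mathcal{A}_a$, which is at most $\reg(b-a+1)\leq \reg(T)$ by the external regret guarantee of $\mathcal{A}_a$ applied to its first $b-a+1$ rounds; and (ii) the gap between $\mathcal{A}_a$ and the meta-algorithm on $[a,b]$, which is the Hedge regret of the top level against a single expert over a window in which that expert is alive. Summing the two and taking a supremum over $1\leq a \leq b \leq T$ yields the target bound $\reg(T) + \cO(C\sqrt{T\log T})$.

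The main technical obstacle, and the real content of the argument, is establishing piece (ii) uniformly over every interval $[a,b]$ rather than only the global horizon. This requires the sleeping-experts variant of Hedge, in which an expert entering the pool at time $j$ is initialized with weight equal to the current average of live weights, so that its cumulative loss is effectively reset on entry. With this modification, a standard potential-function argument on $\Phi_t = \ln \sum_j w_t^{(j)}$ shows that for any interval of length $L\leq T$, the regret against any expert active throughout that interval is at most $\cO(C\sqrt{L\log T}) \leq \cO(C\sqrt{T\log T})$; the $\log T$ inside the square root accounts for the $\ln T$ cost of identifying the correct expert among up to $T$ candidates and for the entry-time weight-boosting. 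If computational efficiency is wanted, one may additionally prune to $\cO(\log T)$ geometrically-spaced experts started at times $t-2^i$: any interval $[a,b]$ is then within a factor of two of some active start, and the at-most-$a$-length prefix discrepancy contributes at most an extra $\reg(T)$, which is already absorbed. This refinement is inessential for the stated existence bound.
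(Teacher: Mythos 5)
Your construction is correct and is essentially the Follow-the-Leading-History argument of \citet{AdaptiveRegret} itself: the paper does not prove this proposition but imports it by citation, and your two-part decomposition (base-algorithm external regret on $[a,b]$ plus sleeping-experts Hedge regret against the copy born at $a$, with new experts seeded at the average live weight so that each retains weight $\Omega(1/t)$) is exactly the standard proof of the cited Theorem 1.3. The only detail worth tightening is that the Hedge regret scales with the per-round reward range $\sup_{x\in\cX}\langle r_t,x\rangle$, i.e.\ $C$ times the $\ell_1$-diameter of $\cX$ rather than $C$ alone, which the $\cO(\cdot)$ in the statement must be understood to absorb.
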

Thus, Assump.~\ref{ass: adaptive regret} can be satisfied with an adaptive regret bound of $\tilde\cO(\sqrt{T})$ if all the agents follow OGD, modified as in Prop.~\ref{prop:convertion_external_adaptive}.

\subsection{Motivating Adaptive Regret}\label{sec: steering in multiple directions}
Here, we show a small example that should motivate why we need the no-adaptive-regret assumption instead of no-external-regret. External regret is one of the most common regret types, and it is the same as adaptive regret in Assumption~\ref{ass: adaptive regret}, but $a=1,b=T$ are fixed. If we want to steer the agents in different directions, the no-external-regret assumption might not be enough, as we can see in the following example.

Consider the stateless setting with $|\cA|=2$, where the incentive designer deploys $R(\sad)=\be_1$ for the first $T/2$ iterations and $R(\sad)=\be_2$ for the remaining $T/2$ iterations.

Suppose all the agents perform the Hedge algorithm, where
\begin{align*}
    \bmu^{t}(a)=\bpi^t(a)=\frac1Z_t\exp\left(1+\eta\sum_{s=1}^{t-1}\langle R^{s}(\bmu^{s}),\be_a\rangle\right),
\end{align*}
and $Z_t$ is the normalizing constant. This algorithm is known to have sublinear external regret \citep{HedgeAlgorithm}. The population density at iteration $t\geq T/2$ is
\begin{align*}
    \bmu^{t}=\frac1Z_t\left(\begin{matrix}\exp(1+\eta T/2)\\\exp(1+\eta(t-T/2))\end{matrix}\right),
\end{align*}
while the optimal action is $\be_2$. Thus, over the interval $[T/2+1,T]$, the agents accumulate expected regret
\begin{align*}
    \sum_{t=T/2+1}^T(1-\bmu^{t}(2))=\sum_{t=T/2+1}^T\underset{\geq1/2}{\underbrace{\frac{\exp(1+\eta T/2)}{\exp(1+\eta T/2)+\exp(1+\eta(t-T/2)}}}\geq T/4.
\end{align*}
So, although this algorithm has no external regret, we still might have to wait $\Omega(T)$ many rounds to let the agents converge to a different density. One can easily observe that with the no-adaptive-regret assumption, this is not an issue.

\subsection{Boundedness Of Steering Rewards}\label{appx : boundedness of steering rewards}
As we see in Assumption~\ref{ass: adaptive regret}, the adaptive regret bound $\adareg(T)$ is dependent on $r_{\max}+R_{\max}$. In this section, we show that $R_{\max}=\cO(1+r_{\max})$ for both of our steering rewards $R_{\z}$ and $R_{\nz}$.

\begin{proposition}\label{prop: bounded policy steering reward}
    For any $\pi\in\Pi$ and $\mu\in\Psi$,
    $
    \|R_\pi(\mu)\|_\infty\leq2
    $,
    where $R_\pi$ is defined as in Eq.~\eqref{eq: policy incentive reward}.
\end{proposition}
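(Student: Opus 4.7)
The plan is to compute $R_\pi(\mu)$ coordinate-wise by exploiting the block-diagonal structure of $W^\pi$ and then bound each entry by a direct triangle-inequality argument. No heavy machinery is required; the only thing that needs care is tracking indices through the $(W^\pi-I)^\top(W^\pi-I)$ product.

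First I would unpack the action of $W^\pi-I$ on $\mu$. Since each block $W^\pi_{h,s}$ of $W^\pi$ has every column equal to $\pi_h(\cdot|s)$, the product collapses to
\[
    ((W^\pi-I)\mu)_{h,s,a} \;=\; \pi_h(a|s)\sum_{a'}\mu_h(s,a') - \mu_h(s,a) \;=\; \mu_h(s)\bigl(\pi_h(a|s) - \bar\pi_h(a|s)\bigr),
\]
where $\bar\pi$ is the policy induced by $\mu$ (as defined in Sec.~\ref{chapter: preliminaries}). Note $|((W^\pi-I)\mu)_{h,s,a}|\leq \mu_h(s)\cdot 1 \leq 1$.

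Next I would write the $(h,s,a)$-entry of $R_\pi(\mu)$ as the inner product $-\langle (W^\pi-I)\mu,\,(W^\pi-I)\mathbf{e}_{(h,s,a)}\rangle$. Because $W^\pi$ is block diagonal in $(h,s)$, only the indices $(h,s,a')$ with the same $(h,s)$ survive, and $((W^\pi-I)\mathbf{e}_{(h,s,a)})_{h,s,a'} = \pi_h(a'|s) - \II\{a'=a\}$. Substituting the expression from the previous step gives
\[
    (R_\pi(\mu))_{h,s,a} \;=\; -\mu_h(s)\left[\sum_{a'}\bigl(\pi_h(a'|s)-\bar\pi_h(a'|s)\bigr)\pi_h(a'|s) \;-\; \bigl(\pi_h(a|s)-\bar\pi_h(a|s)\bigr)\right].
\]

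Finally, I would bound the bracketed quantity by two. The first sum satisfies
$|\sum_{a'}(\pi_h(a'|s)-\bar\pi_h(a'|s))\pi_h(a'|s)| \leq \sum_{a'}\pi_h(a'|s) = 1$
since $|\pi_h(a'|s)-\bar\pi_h(a'|s)|\leq 1$, and the second difference is likewise at most $1$ in absolute value. Combining with $\mu_h(s)\leq 1$ yields $|(R_\pi(\mu))_{h,s,a}|\leq 2$ for every $(h,s,a)$, which is exactly the claim.

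The only mild subtlety, and the place where I would be most careful, is getting the transposition convention right: $R_\pi(\mu)=-\mu^\top(W^\pi-I)^\top(W^\pi-I)$ is a row vector, but since $(W^\pi-I)^\top(W^\pi-I)$ is symmetric one can equivalently treat $R_\pi(\mu)$ as the column vector $-(W^\pi-I)^\top(W^\pi-I)\mu$, which is what makes the inner-product decomposition above clean. Everything else is straightforward bookkeeping.
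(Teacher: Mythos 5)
Your proposal is correct and follows essentially the same route as the paper's proof: compute the $(h,s,a)$-entry of $R_\pi(\mu)$ as $-\sum_{a'}(\pi_h(a'|s)\mu_h(s)-\mu_h(s,a'))(\pi_h(a'|s)-\II\{a'=a\})$ via the block structure of $W^\pi$, then bound one factor by $1$ in absolute value and the other in an $\ell_1$ sense to get $2$. Rewriting the first factor as $\mu_h(s)(\pi_h(a'|s)-\bar\pi_h(a'|s))$ is only a cosmetic difference from the paper's direct bound $|\pi_h(a'|s)\mu_h(s)-\mu_h(s,a')|\leq1$.
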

\begin{proof}
    As one can observe using the definition of $R_\pi$ in Eq.~\eqref{eq: policy incentive reward} and $W^\pi$ in Eq.~\eqref{eq: policy reward matrix}, we have for any $h,s,a$,
    \begin{align*}
        &\left|(R_\pi(\mu))_{h,s,a}\right|
        =\left|(\mu^\top(W^\pi-I)^\top(W^\pi-I))_{h,s,a}\right|
        =\left|(W^\pi\mu-\mu)^\top(W^\pi-I)_{(h,s,a)}\right|\\
        &=\left|\sum_{a'}(\pi_h(a'|s)\mu_h(s)-\mu_h(s,a'))(\pi_h(a'|s)-\II\{a'=a\})\right|
        \leq\sum_{a'}\underbrace{|\pi_h(a'|s)\mu_h(s)-\mu_h(s,a')|}_{\leq 1}\cdot|\pi_h(a'|s)-\II\{a'=a\}|\\
        &\leq\sum_{a'\neq a}\pi_h(a'|s)+|\pi_h(a|s)-1|\leq2,
    \end{align*}
    where $(W^\pi-I)_{(h,s,a)}$ is the $(h,s,a)$-th column of $W^\pi-I$.
\end{proof}

\begin{proposition}
    We have for $R_{\z}$, as defined in Alg.~\ref{alg: incentive design}, and for $R_{\nz}$, as defined in Eq.~\eqref{eq: reward modification}, that for all iterations $t\in[T]$,
    \begin{align*}
        \|R_{\z}^t(\mu)\|_\infty\leq4
        \quad\text{and}\quad
        \|R_{\nz}^t(\mu)\|_\infty\leq2r_{\max}+4
        \quad\forall\mu\in\Delta_{\cS\times\cA}^H.
    \end{align*}
\end{proposition}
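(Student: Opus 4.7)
The plan is to bound each additive piece of the two steering rewards separately and combine them with the triangle inequality, using the previous proposition that $\|R_\pi(\mu)\|_\infty\leq 2$ for any $\pi\in\Pi$ and any $\mu\in\Delta_{\cS\times\cA}^H$. (Note that the proof of that previous proposition actually only used that entries of $\mu$ lie in $[0,1]$ and that $\pi_h(\cdot|s)$ is a probability distribution, so it applies to any density, not just those in $\Psi$.)

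\emph{Step 1: Bound for $R_{\z}^t$.} By definition,
\[
R_{\z}^t(\mu)=R_{\pi_*^{k(t)}}(\mu)+\|R_{\pi_*^{k(t)}}(\mu)\|_\infty\bm1.
\]
Applying the triangle inequality componentwise and using $\|R_{\pi_*^{k(t)}}(\mu)\|_\infty\leq 2$ gives $\|R_{\z}^t(\mu)\|_\infty\leq 2\|R_{\pi_*^{k(t)}}(\mu)\|_\infty\leq 4$.

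\emph{Step 2: Bound for $R_{\nz}^t$.} Recall
\[
R_{\nz}^t(\mu)=R_{\pi_*^{k(t)}}(\mu)-\bar r^t(\mu)+w_{\hat\cR^t}(\mu)+\bigl(r_{\max}+\|R_{\pi_*^{k(t)}}(\mu)\|_\infty\bigr)\bm1.
\]
I will bound each component at an arbitrary index $(h,s,a)$ from above and below. By Assump.~\ref{assump:realizability}, every $r\in\cR$ satisfies $r_h(\cdot)\in[0,r_{\max}]$, so $\bar r^t(\mu)\in[0,r_{\max}]^{HSA}$ and $w_{\hat\cR^t}(\mu)\in[0,r_{\max}]^{HSA}$. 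Combining these with $|R_{\pi_*^{k(t)}}(\mu)_{h,s,a}|\leq\|R_{\pi_*^{k(t)}}(\mu)\|_\infty\leq 2$ yields
\[
0 \leq R_{\pi_*^{k(t)}}(\mu)_{h,s,a}+\|R_{\pi_*^{k(t)}}(\mu)\|_\infty
\leq R_{\nz}^t(\mu)_{h,s,a}
\leq 2\|R_{\pi_*^{k(t)}}(\mu)\|_\infty+2r_{\max}
\leq 4+2r_{\max},
\]
so $\|R_{\nz}^t(\mu)\|_\infty\leq 2r_{\max}+4$, as desired.

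The argument is elementary, but two small points are worth attention: first, the constant shift in $R_{\nz}^t$ is exactly what is needed to absorb the negative contributions from $R_{\pi_*^{k(t)}}$ and $-\bar r^t$ and keep the result nonnegative (matching the limited-liability requirement in Procedure~\ref{alg: agent-mediator interaction}); second, one should avoid double-counting in the upper bound by noting that the $R_{\pi_*^{k(t)}}(\mu)_{h,s,a}$ and $\|R_{\pi_*^{k(t)}}(\mu)\|_\infty$ contributions together give at most $2\|R_{\pi_*^{k(t)}}(\mu)\|_\infty$, not a larger quantity. There is no genuine technical obstacle here; the proposition is essentially a sanity check that $R_{\max}=\cO(1+r_{\max})$ so that Assump.~\ref{ass: adaptive regret} applies with the claimed normalization.
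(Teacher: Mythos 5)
Your proof is correct and follows essentially the same route as the paper: both bound $R_{\z}^t$ by $2\|R_{\pi_*^{k(t)}}(\mu)\|_\infty\leq 4$ via the earlier proposition, and both handle $R_{\nz}^t$ by splitting off the $\bar r^t - w_{\hat\cR^t}$ piece (each entrywise in $[0,r_{\max}]$) and the $r_{\max}\bm1$ shift to get $4+2r_{\max}$. Your additional observation that the entries are in fact nonnegative is a correct bonus beyond what the statement requires.
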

\begin{proof}
    We have $\|R_{\z}^t(\mu)\|_\infty=\|R_{\pi_*^{k(t)}}(\mu)+\|R_{\pi_*^{k(t)}}(\mu)\|_\infty\bm1\|_\infty\leq2\|R_{\pi_*^{k(t)}}(\mu)\|_\infty$, which is at most $4$, by Prop.~\ref{prop: bounded policy steering reward}.

    By the definition of $w_{\hat\cR^t}$, we know that the elements of $w_{\hat\cR^t}(\mu)$ are bounded in $[0,r_{\max}]$ for any $\mu$. Therefore, and by Prop.~\ref{prop: bounded policy steering reward},
    \begin{align*}
        \left\|R_{\nz}^t(\mu)\right\|_\infty
        &=\left\|R_{\pi_*^{k(t)}}(\mu)-(\bar r^{t}(\mu)-w_{\hat\cR^{t}}(\mu))+(r_{\max}+\|R_{\pi_*^{k(t)}}(\mu)\|_{\infty})\bm1\right\|_\infty\\
        &\leq
        \left\|R_{\pi_*^{k(t)}}(\mu)+\|R_{\pi_*^{k(t)}}(\mu)\|_{\infty}\bm1\right\|_\infty
        +\left\|\bar r^{t}(\mu)-w_{\hat\cR^{t}}(\mu)\right\|_\infty
        +\left\|r_{\max}\bm1\right\|_\infty\\
        &\leq4+2r_{\max}.
    \end{align*}
\end{proof}

\newpage
\section{STATE-ACTION DENSITY}

\subsection{$\Psi_M$ Is Convex}
\begin{lemma}\label{lem: state-action distribution set}
    \begin{align*}
        \Psi_M=\{\sad:\sad\geq0,\sum_{a'}\sad_{h+1}(s,a')=\sum_{s',a'}\PP_{M,h}(s|s',a')\sad_h(s',a')\forall h,s,\sum_{a'}\sad_1(s,a')=\sd_1(s)\}
    \end{align*}
\end{lemma}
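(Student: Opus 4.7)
\textbf{Proof plan for Lemma~\ref{lem: state-action distribution set}.}
The plan is to prove the set equality by showing both inclusions. Call the right-hand side set $\Psi'_M$, consisting of non-negative $\mu \in \RR^{HSA}$ satisfying the initial condition $\sum_{a'}\mu_1(s,a')=\sd_1(s)$ and the flow-conservation constraint $\sum_{a'}\mu_{h+1}(s,a')=\sum_{s',a'}\PP_{M,h}(s|s',a')\mu_h(s',a')$ for every $h,s$.

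For the inclusion $\Psi_M \subseteq \Psi'_M$, I would take any policy $\pi \in \Pi$ and verify directly from the recursive definition of $\mu^\pi_{M,h}$ given in the preliminaries that the initial and flow-conservation constraints hold: non-negativity is immediate, the initial condition follows from $\mu^\pi_{M,1}(s,a)=\sd_1(s)\pi_1(a|s)$ by summing over $a$, and the flow-conservation follows by summing the recursion $\mu^\pi_{M,h+1}(s,a)=\pi_{h+1}(a|s)\sum_{s',a'}\PP_{M,h}(s|s',a')\mu^\pi_{M,h}(s',a')$ over $a$, since $\sum_a \pi_{h+1}(a|s)=1$.

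For the reverse inclusion $\Psi'_M \subseteq \Psi_M$, given any $\mu \in \Psi'_M$, the plan is to construct a policy $\bpi$ that realizes $\mu$, using precisely the convention from the preliminaries: $\bpi_h(a|s):=\mu_h(s,a)/\mu_h(s)$ when $\mu_h(s):=\sum_{a'}\mu_h(s,a')>0$, and $\bpi_h(a|s):=1/A$ otherwise. I would then prove by induction on $h$ that $\mu^{\bpi}_{M,h}=\mu_h$. The base case $h=1$ holds because $\mu^{\bpi}_{M,1}(s,a) = \sd_1(s)\bpi_1(a|s)$, and by the initial condition $\sd_1(s)=\mu_1(s)$, so $\mu^{\bpi}_{M,1}(s,a) = \mu_1(s)\cdot\mu_1(s,a)/\mu_1(s)=\mu_1(s,a)$ when $\mu_1(s)>0$, while both sides vanish when $\mu_1(s)=0$. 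For the inductive step, using the induction hypothesis and the flow-conservation constraint,
\begin{align*}
    \mu^{\bpi}_{M,h+1}(s,a)
    &= \bpi_{h+1}(a|s)\sum_{s',a'}\PP_{M,h}(s|s',a')\mu^{\bpi}_{M,h}(s',a') \\
    &= \bpi_{h+1}(a|s)\sum_{s',a'}\PP_{M,h}(s|s',a')\mu_h(s',a') \\
    &= \bpi_{h+1}(a|s)\cdot\mu_{h+1}(s).
\end{align*}
When $\mu_{h+1}(s)>0$, this equals $\mu_{h+1}(s,a)$ by definition of $\bpi$; when $\mu_{h+1}(s)=0$, both sides are zero (non-negativity of $\mu$ forces $\mu_{h+1}(s,a)=0$ for all $a$).

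The only subtlety I would flag is the treatment of states with $\mu_h(s)=0$, where the conditional $\bpi_h(\cdot|s)$ is defined arbitrarily; however such states contribute zero mass to the propagation, so the choice is immaterial and the induction goes through cleanly. No deep tool is needed beyond the definitions, so this is a routine verification rather than a step with a real obstacle.
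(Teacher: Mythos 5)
Your proposal is correct and follows essentially the same route as the paper's own proof: verify the constraints directly for the forward inclusion, and for the reverse inclusion construct the induced policy $\bpi_h(a|s)=\mu_h(s,a)/\mu_h(s)$ (with the $1/A$ convention on zero-mass states) and prove $\mu^{\bpi}_{M,h}=\mu_h$ by induction, handling the $\mu_{h+1}(s)=0$ case via non-negativity exactly as the paper does.
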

\begin{proof}
    We abbreviate $\PP=\PP_M$ and $\sad=\sad_M$, since the model is fixed throughout. For $\sad\in\Psi_M$, it is easy to see that the conditions on the right-hand side are fulfilled. The other direction is more involved. Suppose $\tilde\sad$ fulfills $\tilde\sad\geq0$ and for all $s,h$, $\sum_{a'}\tilde\sad_{h+1}(s,a')=\sum_{s',a'}\PP_h(s|s',a')\tilde\sad_h(s',a')$ as well as $\sum_{a'}\tilde\sad_1(s,a')=\sd_1(s)$. Now, define $\pi$ such that for all $s,a,h$,
    \begin{align*}
        \pi_h(a|s)=\begin{cases}\frac{\tilde\sad_h(s,a)}{\sum_{a'}\tilde\sad_h(s,a')},&\text{if }\sum_{a'}\tilde\sad_h(s,a')\neq0\\1/A,&\text{else}\end{cases}.
    \end{align*}
    Clearly, $\pi_h(a|s)\geq0$ and $\sum_a\pi_h(a|s)=1$, which means $\pi\in\Pi$. First of all,
    \begin{align*}
        \sad_1^{\pi}(s,a)=\pi_1(a|s)\sd_1(s)=\frac{\tilde\sad_1(s,a)}{\sum_{a'}\tilde\sad_1(s,a')}\sd_1(s)=\tilde\sad_1(s,a).
    \end{align*}
    By induction, for all $h\geq1$ we have if $\sum_{a'}\tilde\sad_{h+1}(s,a')\neq0$,
    \begin{align*}
        \sad_{h+1}^{\pi}(s,a)
        &=\sum_{s',a'}\pi_{h+1}(a|s)\PP_h(s|s',a')\sad_h^{\pi}(s',a')\\
        &=\pi_{h+1}(a|s)\sum_{s',a'}\PP_h(s|s',a')\tilde\sad_h(s',a')\\
        &=\frac{\tilde\sad_{h+1}(s,a)}{\sum_{a'}\tilde\sad_{h+1}(s,a')}\sum_{a'}\tilde\sad_{h+1}(s,a')=\tilde\sad_{h+1}(s,a),
    \end{align*}
    and in case $\sum_{a'}\tilde\sad_{h+1}(s,a')=0$, we know that $\tilde\sad_{h+1}(s,a)=0$ and therefore
    \begin{align*}
        \sad_{h+1}^{\pi}(s,a)
        &=\sum_{s',a'}\pi_{h+1}(a|s)\PP_h(s|s',a')\sad_h^{\pi}(s',a')\\
        &=\pi_{h+1}(a|s)\sum_{s',a'}\PP_h(s|s',a')\tilde\sad_h(s',a')\\
        &=1/A\sum_{a'}\tilde\sad_{h+1}(s,a')=0=\tilde\sad_{h+1}(s,a).
    \end{align*}
    We can conclude that $\tilde\sad=\sad^{\pi}$ and thus $\tilde\sad\in\Psi_M$.
\end{proof}

\begin{lemma}\label{lem: state-action distribution constraints}
    \begin{align*}
        \Psi_M=\{\sad:\sad\geq0,B\sad=b\},
    \end{align*}
    where
    \begin{align*}
        B=\left(\begin{matrix}D&&&&\\-\PP_{M,1}^\top&D&&&\\&-\PP_{M,2}^\top&D&&\\&&...&&\\&&&-\PP_{M,H-1}^\top&D\end{matrix}\right),\quad b=\left(\begin{matrix}\sd_1\\0\\\vdots\\0\end{matrix}\right),
    \end{align*}
    $D:=I_S\otimes\bm1_A^\top$ ($\otimes$ is the tensor product) and $\PP_M$ is viewed as a matrix such that $(\PP_{M,h})_{(s,a),s'}=\PP_{M,h}(s'|s,a)$. An immediate consequence of this formulation is that $\Psi_M$ is convex.
\end{lemma}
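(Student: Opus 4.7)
\textbf{Proof plan for Lemma~\ref{lem: state-action distribution constraints}.}
The plan is to show that the linear system $B\mu = b$ (together with $\mu \geq 0$) is just a compact reformulation of the constraints that already appear in Lemma~\ref{lem: state-action distribution set}, and then derive convexity as a free corollary. Concretely, by Lemma~\ref{lem: state-action distribution set},
\[
\Psi_M=\Bigl\{\mu\geq 0 \,:\, \textstyle\sum_{a'}\mu_1(s,a')=\sd_1(s)\ \forall s,\ \textstyle\sum_{a'}\mu_{h+1}(s,a')=\sum_{s',a'}\PP_{M,h}(s|s',a')\mu_h(s',a')\ \forall h,s\Bigr\},
\]
so it suffices to verify that $B\mu = b$ encodes exactly these two families of scalar identities.

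First I would unpack the action of $D := I_S\otimes \bm{1}_A^\top$ on a layer $\mu_h \in \RR^{SA}$, indexed as $\mu_h(s,a)$. By the definition of the Kronecker product, $D$ is an $S\times SA$ matrix whose $s$-th row has a $1$ in every entry $(s,a)$ and $0$ elsewhere, so $(D\mu_h)_s = \sum_{a}\mu_h(s,a)$, i.e.\ $D$ takes the state marginal. Next, viewing $\PP_{M,h}$ as an $SA\times S$ matrix with $(\PP_{M,h})_{(s',a'),s} = \PP_{M,h}(s|s',a')$, the transpose satisfies $(\PP_{M,h}^\top \mu_h)_s = \sum_{s',a'}\PP_{M,h}(s|s',a')\mu_h(s',a')$, i.e.\ it gives the one-step pushed-forward state marginal.

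Now I would read off $B\mu=b$ block by block. The top block gives $D\mu_1 = \sd_1$, which is exactly $\sum_a \mu_1(s,a) = \sd_1(s)$ for all $s$. For $h=1,\dots,H-1$, the $(h{+}1)$-th block gives $-\PP_{M,h}^\top \mu_h + D\mu_{h+1} = 0$, which reads $\sum_a \mu_{h+1}(s,a) = \sum_{s',a'}\PP_{M,h}(s|s',a')\mu_h(s',a')$ for every $s$. These are precisely the initial and flow constraints appearing in Lemma~\ref{lem: state-action distribution set}, so the two descriptions of $\Psi_M$ coincide.

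For the convexity claim I would observe that $\{\mu : B\mu = b\}$ is an affine subspace of $\RR^{HSA}$ and $\{\mu : \mu \geq 0\}$ is the nonnegative orthant (a convex cone), and $\Psi_M$ is their intersection, hence convex. I do not anticipate any real obstacle here: the lemma is essentially a bookkeeping statement, and the only place one can slip is in matching the indexing conventions of $D$ and $\PP_{M,h}^\top$ to the scalar constraints, which the computations above resolve explicitly.
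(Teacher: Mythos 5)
Your proof is correct and follows essentially the same route as the paper: read off the blocks of $B\mu=b$ as $D\mu_1=\sd_1$ and $D\mu_{h+1}=\PP_{M,h}^\top\mu_h$, identify these with the constraints of Lemma~\ref{lem: state-action distribution set}, and conclude convexity from the intersection of an affine set with the nonnegative orthant. Your version just spells out the indexing of $D$ and $\PP_{M,h}^\top$ more explicitly than the paper does.
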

\begin{proof}
    This result is simply a reformulation of Lemma~\ref{lem: state-action distribution set}. We can rewrite the condition $\sum_{a'}\sad_{h+1}(s,a')=\sum_{s',a'}\PP_{M,h}(s|s',a')\sad_h(s',a')$ as $D\sad_{h+1}=\PP_{M,h}^\top\sad_h$. The condition $\sum_{a'}\sad_1(s,a')=\sd_1(s)$ can be written as $D\sad_1(\cdot,\cdot)=\sd_1$.
\end{proof}

\subsection{Inequalities}

\begin{restatable}{lemma}{LemLipschitzOfSad}\label{lem: Lipschitzness of state-action density function}
    For any model $M$ and any $\pi,\tilde\pi\in\Pi$,
    \begin{align*}
        \|\sad_M^{\pi}-\sad_M^{\tilde\pi}\|_1\leq H\sum_{h,s}\sd_{M,h}^\pi(s)\|\pi_h(\cdot|s)-\tilde\pi_h(\cdot|s)\|_1,
    \end{align*}
    where $\sd_{M,h}^\pi(s)=\sum_a\sad_{M,h}^{\pi}(s,a)$.
\end{restatable}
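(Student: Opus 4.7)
The plan is to establish this via a telescoping argument that expands the $\ell_1$ difference step by step and exploits the recursive structure of state-action densities. Define the shorthands $\Delta_h := \sum_{s,a}|\sad_{M,h}^{\pi}(s,a) - \sad_{M,h}^{\tilde\pi}(s,a)|$ and $\delta_h := \sum_s |\sd_{M,h}^{\pi}(s) - \sd_{M,h}^{\tilde\pi}(s)|$, together with $\epsilon_h := \sum_s \sd_{M,h}^{\pi}(s)\|\pi_h(\cdot|s) - \tilde\pi_h(\cdot|s)\|_1$. The target inequality is simply $\sum_h \Delta_h \leq H\sum_h \epsilon_h$.

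The first key step is a careful add-subtract decomposition at each step. Writing $\sad_{M,h}^{\pi}(s,a) = \pi_h(a|s)\sd_{M,h}^{\pi}(s)$ and similarly for $\tilde\pi$, I would split the difference asymmetrically as
\begin{align*}
\sad_{M,h}^{\pi}(s,a) - \sad_{M,h}^{\tilde\pi}(s,a) = \tilde\pi_h(a|s)\bigl(\sd_{M,h}^{\pi}(s) - \sd_{M,h}^{\tilde\pi}(s)\bigr) + \sd_{M,h}^{\pi}(s)\bigl(\pi_h(a|s) - \tilde\pi_h(a|s)\bigr).
\end{align*}
The asymmetry of this decomposition is exactly what ensures the marginal $\sd_{M,h}^{\pi}$ (rather than $\sd_{M,h}^{\tilde\pi}$) shows up on the right-hand side of the claim. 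Summing the absolute value over $a$ using the triangle inequality and $\sum_a \tilde\pi_h(a|s) = 1$ yields $\sum_a|\sad_{M,h}^{\pi}(s,a) - \sad_{M,h}^{\tilde\pi}(s,a)| \leq |\sd_{M,h}^{\pi}(s) - \sd_{M,h}^{\tilde\pi}(s)| + \sd_{M,h}^{\pi}(s)\|\pi_h(\cdot|s)-\tilde\pi_h(\cdot|s)\|_1$, which after summing over $s$ gives $\Delta_h \leq \delta_h + \epsilon_h$.

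Next I would use the transition dynamics to relate $\delta_{h+1}$ to $\Delta_h$. Since $\sd_{M,h+1}^{\pi}(s) = \sum_{s',a'} \PP_{M,h}(s|s',a')\sad_{M,h}^{\pi}(s',a')$ and similarly for $\tilde\pi$, the triangle inequality combined with $\sum_s \PP_{M,h}(s|s',a') = 1$ gives $\delta_{h+1} \leq \Delta_h$. Iterating the two inequalities $\delta_{h+1} \leq \Delta_h \leq \delta_h + \epsilon_h$, and noting $\delta_1 = 0$ (since both policies start at the same initial state distribution $\sd_1$), yields $\delta_h \leq \sum_{h'<h}\epsilon_{h'}$, and hence $\Delta_h \leq \sum_{h'\leq h}\epsilon_{h'}$.

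The final step is a double-summation/swap-of-order argument: $\sum_{h=1}^H \Delta_h \leq \sum_{h=1}^H \sum_{h'\leq h}\epsilon_{h'} = \sum_{h'=1}^H (H-h'+1)\epsilon_{h'} \leq H\sum_{h'=1}^H \epsilon_{h'}$, which is exactly the desired bound after expanding the definition of $\epsilon_{h'}$. There is no real obstacle here; the only subtlety worth highlighting is picking the correct side in the add-subtract step so that the factor $\sd_{M,h}^{\pi}(s)$ (and not $\sd_{M,h}^{\tilde\pi}(s)$) ends up multiplying the policy difference, as demanded by the statement.
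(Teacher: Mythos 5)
Your proof is correct and follows essentially the same route as the paper's: the same asymmetric add--subtract (so that $\sd_{M,h}^{\pi}$, not $\sd_{M,h}^{\tilde\pi}$, multiplies the policy difference) yielding the recursion $\Delta_{h+1}\le\Delta_h+\epsilon_{h+1}$, followed by induction and the double-sum bound by $H$. The only cosmetic difference is that you route the recursion through the state-marginal gap $\delta_h$, whereas the paper performs the decomposition directly inside the one-step density update.
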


\begin{proof}
    Since the model $M$ is fixed throughout, we abbreviate $\sad=\sad_M$ and $\PP=\PP_M$. First of all, $\|\sad_1^{\pi}-\sad_1^{\tilde\pi}\|_1=\sum_{s,a}\sd_1(s)|\pi_1(a|s)-\tilde\pi_1(a|s)|$. Furthermore, for any $h$,
\begin{align*}
    \|\sad_{h+1}^{\pi}-\sad_{h+1}^{\tilde\pi}\|_1
    &=\sum_{s,a}|\sad_{h+1}^{\pi}(s,a)-\sad_{h+1}^{\tilde\pi}(s,a)|\\
    &=\sum_{s,a}\left|\pi_{h+1}(a|s)\sum_{s',a'}\sad_h^{\pi}(s',a')\PP_h(s|s',a')-\tilde\pi_{h+1}(a|s)\sum_{s',a'}\sad_h^{\tilde\pi}(s',a')\PP_h(s|s',a')\right|\\
    &\leq\sum_{s,a}\left|\pi_{h+1}(a|s)-\tilde\pi_{h+1}(a|s)\right|\sum_{s',a'}\sad_h^{\pi}(s',a')\PP_h(s|s',a')\\
    &~+\sum_{s,a}\tilde\pi_{h+1}(a|s)\sum_{s',a'}\left|\sad_h^{\pi}(s',a')-\sad_h^{\tilde\pi}(s',a')\right|\PP_h(s|s',a')\\
    &=\sum_{s,a}\sd_{h+1}^\pi(s)\left|\pi_{h+1}(a|s)-\tilde\pi_{h+1}(a|s)\right|
    +\sum_{s',a'}\left|\sad_h^{\pi}(s',a')-\sad_h^{\tilde\pi}(s',a')\right|\sum_{s,a}\tilde\pi_{h+1}(a|s)\PP_h(s|s',a')\\
    &=\sum_s\sd_{h+1}^\pi(s)\|\pi_{h+1}(\cdot|s)-\tilde\pi_{h+1}(\cdot|s)\|_1+\|\sad_h^{\pi}-\sad_h^{\tilde\pi}\|_1.
\end{align*}
By induction,
\begin{align*}
    \|\sad_h^{\pi}-\sad_h^{\tilde\pi}\|_1\leq\sum_{h'=1}^h\sum_s\sd_{h'}^\pi(s)\|\pi_{h'}(\cdot|s)-\tilde\pi_{h'}(\cdot|s)\|_1.
\end{align*}
Finally,
\begin{align*}
    \|\sad^{\pi}-\sad^{\tilde\pi}\|_1
    &\leq\sum_{h=1}^H\sum_{h'=1}^h\sum_s\sd_{h'}^\pi(s)\|\pi_{h'}(\cdot|s)-\tilde\pi_{h'}(\cdot|s)\|_1\\
    &\leq H\sum_{h=1}^H\sum_s\sd_h^\pi(s)\|\pi_h(\cdot|s)-\tilde\pi_h(\cdot|s)\|_1.
\end{align*}
\end{proof}

\LemDensityDiffUBSpecial*
\begin{proof}
    Note that $\Psi_M$ is a convex set and $\bmu_M \in \Psi_M$. By definition, we have $\bmu_M = \mu_M^{\bpi}$. By applying Lem.~\ref{lem: Lipschitzness of state-action density function} for model policy $\bpi$ and $\pi$ in $M$, we finish the proof.
\end{proof}

\begin{lemma}\label{lem: state-action density difference}
    Consider any $\pi\in\Pi$ and models $M,\tilde M$, who are the same except with different transition functions $\PP,\tilde\PP$ respectively. Then,
    \begin{align*}
        \|\sad_{\tilde{M}}^{\pi}-\sad_M^{\pi}\|_1\leq H\sum_{h=1}^{H-1}\sum_{s,a}\sad_{M,h}^{\pi}(s,a)\|\tilde\PP_h(\cdot|s,a)-\PP_h(\cdot|s,a)\|_1.
    \end{align*}
\end{lemma}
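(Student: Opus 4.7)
The plan is to prove the per-step bound
\[
\|\sad_{\tilde{M},h+1}^{\pi}-\sad_{M,h+1}^{\pi}\|_1
\leq \|\sad_{\tilde{M},h}^{\pi}-\sad_{M,h}^{\pi}\|_1
+ \sum_{s,a}\sad_{M,h}^{\pi}(s,a)\|\tilde\PP_h(\cdot|s,a)-\PP_h(\cdot|s,a)\|_1
\]
by a telescoping argument, and then aggregate across $h$. This is exactly the style of argument used in Lemma~\ref{lem: Lipschitzness of state-action density function}, but with the perturbation now coming from the transition rather than from the policy.

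First I would set up induction on $h$. The base case is trivial: $\sad_{M,1}^{\pi}(s,a)=\sd_1(s)\pi_1(a|s)=\sad_{\tilde M,1}^{\pi}(s,a)$, so $\|\sad_{\tilde{M},1}^{\pi}-\sad_{M,1}^{\pi}\|_1=0$. For the inductive step, I would expand the recursion
\[
\sad_{\cdot,h+1}^{\pi}(s,a)=\pi_{h+1}(a|s)\sum_{s',a'}\PP_{\cdot,h}(s|s',a')\sad_{\cdot,h}^{\pi}(s',a'),
\]
and add and subtract the hybrid term $\pi_{h+1}(a|s)\sum_{s',a'}\tilde\PP_h(s|s',a')\sad_{M,h}^{\pi}(s',a')$. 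Taking absolute values, summing over $(s,a)$, and using that $\pi_{h+1}(a|s)$ and $\tilde\PP_h(\cdot|s',a')$ are probability distributions (so they sum to $1$ when marginalized out), the two resulting pieces collapse exactly to the two summands on the right-hand side above.

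Unrolling the recursion and using the vacuous base case yields
\[
\|\sad_{\tilde{M},h}^{\pi}-\sad_{M,h}^{\pi}\|_1
\leq \sum_{h'=1}^{h-1}\sum_{s,a}\sad_{M,h'}^{\pi}(s,a)\|\tilde\PP_{h'}(\cdot|s,a)-\PP_{h'}(\cdot|s,a)\|_1.
\]
Finally, summing this bound over $h=1,\dots,H$ and pulling the $h'$ sum outside gives a coefficient of at most $H$ on each term, yielding the claim.

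I don't anticipate any real obstacle here; the only mildly delicate point is the bookkeeping in the hybrid decomposition, specifically making sure that the ``propagated'' error is measured in the already-perturbed model $\tilde M$'s step so that $\tilde\PP_h(\cdot|s',a')$ (and not $\PP_h(\cdot|s',a')$) is what sums to one when we bound $\sum_{s,a}\tilde\PP_h(s|s',a')\pi_{h+1}(a|s)=1$. Choosing the hybrid as $\pi_{h+1}(a|s)\sum_{s',a'}\tilde\PP_h(s|s',a')\sad_{M,h}^{\pi}(s',a')$ — i.e.\ keeping $\tilde\PP$ and exchanging $\sad$ — makes this bookkeeping clean, and the rest is a routine induction and a final sum.
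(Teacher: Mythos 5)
Your proposal is correct and follows essentially the same argument as the paper: the same hybrid decomposition (keeping $\tilde\PP$ and swapping the density, i.e.\ adding and subtracting $\tilde\PP_h\,\sad_{M,h}^{\pi}$), the same one-step recursion, induction on $h$, and the final summation over $h$ with the factor $H$. The bookkeeping point you flag about which kernel marginalizes to one is exactly the step the paper's proof relies on, so there is nothing further to add.
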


\begin{proof}
Recall the definition of the state-action density function:
\begin{align*}
    \sad_{M,h+1}^{\pi}(s,a)=\sum_{s',a'}\pi_{h+1}(a|s)\PP_{M,h}(s|s',a')\sad_{M,h}^{\pi}(s',a')\quad\text{and}\quad\sad_1^{\pi}(s,a)=\pi_1(a|s)\sd_1(s).
\end{align*}

We abbreviate $\tilde\sad=\sad_{\tilde{M}}^{\pi},\sad=\sad_M^{\pi}$. Since $\sd_1$ is the same for $M$ and $\tilde{M}$, $\sum_{s,a}|\tilde\sad_1(s,a)-\sad_1(s,a)|=0$. Furthermore, for all $h$,
\begin{align*}
    &\|\tilde\sad_{h+1}-\sad_{h+1}\|_1=\sum_{s,a}\left|\tilde\sad_{h+1}(s,a)-\sad_{h+1}(s,a)\right|\\
    &=\sum_{s,a}\sum_{s',a'}\pi_{h+1}(a|s)\left|\tilde\PP_h(s|s',a')\tilde\sad_h(s',a')-\PP_h(s|s',a')\sad_h(s',a')\right|\\
    &=\sum_{s,a,s'}\left|\tilde\PP_h(s'|s,a)\tilde\sad_h(s,a)-\PP_h(s'|s,a)\sad_h(s,a)\right|\\
    &\leq\sum_{s,a,s'}\tilde\PP_h(s'|s,a)\left|\tilde\sad_h(s,a)-\sad_h(s,a)\right|+\sad_h(s,a)\left|\tilde\PP_h(s'|s,a)-\PP_h(s'|s,a)\right|\\
    &=\sum_{s,a}\left|\tilde\sad_h(s,a)-\sad_h(s,a)\right|+\sum_{s,a}\sad_h(s,a)\sum_{s'}\left|\tilde\PP_h(s'|s,a)-\PP_h(s'|s,a)\right|\\
    &=\|\tilde\sad_h-\sad_h\|_1+\sum_{s,a}\sad_h(s,a)\|\tilde\PP_{h'}(\cdot|s,a)-\PP_{h'}(\cdot|s,a)\|_1.
\end{align*}
Using induction on $h$, we obtain $\|\tilde\sad_h-\sad_h\|_1\leq\sum_{h'=1}^{h-1}\sum_{s,a}\sad_{h'}(s,a)\|\tilde\PP_{h'}(\cdot|s,a)-\PP_{h'}(\cdot|s,a)\|_1$. Thus,
\begin{align*}
    \|\tilde\sad-\sad\|_1
    &\leq\sum_{h=1}^H\sum_{h'=1}^{h-1}\sum_{s,a}\sad_{h'}(s,a)\|\tilde\PP_{h'}(\cdot|s,a)-\PP_{h'}(\cdot|s,a)\|_1\\
    &\leq H\sum_{h=1}^{H-1}\sum_{s,a}\sad_h(s,a)\|\tilde\PP_h(\cdot|s,a)-\PP_h(\cdot|s,a)\|_1.
\end{align*}
\end{proof}

\newpage

\section{PROOFS OF RESULTS IN SECTION~\ref{chapter: steering}}
\ThmUtilityRegretKnownModel*
\begin{proof}
    We abbreviate $\bmu^t=\bmu^t_{M^*}$. By Assumption~\ref{ass: adaptive regret}, $\sum_{t=1}^T\|\bmu^{t}-\sad^*\|_2^2=\sum_{t=1}^T\langle R^t(\bmu^{t}),\sad^*-\bmu^{t}\rangle\leq\adareg(T)$. Thus,
    \begin{align*}
        \sum_{t=1}^T\|\bmu^{t}-\sad^*\|_1^2
        \leq HSA\sum_{t=1}^T\|\bmu^{t}-\sad^*\|_2^2
        \leq HSA\adareg(T).
    \end{align*}
    By Assumption~\ref{assump:Lipschitz} and Jensen's inequality,
    \begin{align*}
        \max_\sad\sum_{t=1}^TU(\sad)-U(\bmu^{t})
        \leq L_U\cdot\sum_{t=1}^T\|\sad^*-\bmu^{t}\|_1
        \leq L_U\cdot\sqrt{HSAT\adareg(T)}.
    \end{align*}

    The steering cost can be bounded similarly.
    \begin{align*}
        \sum_{t=1}^T\langle R^t(\bmu^{t}),\bmu^{t}\rangle
        &=\sum_{t=1}^TH\|\sad^*-\bmu^{t}\|_\infty+\langle\sad^*-\bmu^{t},\bmu^{t}\rangle
        \leq2H\sum_{t=1}^T\|\sad^*-\bmu^{t}\|_\infty\\
        &\leq2H\sum_{t=1}^T\|\sad^*-\bmu^{t}\|_2
        \leq2H\sqrt{T\adareg(T)}.
    \end{align*}

    Given that $\adareg$ is sub-linear in $T$, we finish the proof.
\end{proof}

\ThmRegretPolicyReward*
\begin{proof}
    The bound for the steering gap can be shown by first using the $L_U$-Lipschitzness of $U$ and then applying Lem.~\ref{lem:general_version_density_gap} under Assump.~\ref{ass: adaptive regret}, where $\pi_*^t=\pi$ for all $t$.
    The calculation of the steering cost is the same as in the proof of Theorem \ref{thm: incentive designer regret} with $K=1$.
\end{proof}

\newpage

\section{PROOF OF THEOREM~\ref{thm: incentive designer regret}}\label{subsec: proof of ID regret thm}

\begin{lemma}\label{lem:general_version_density_gap}
    Let $(\pi_*^t)_{t=1}^T$ be a sequence of policies. We abbreviate $\bmu^t=\bmu^t_{M^*},\mu^{\pi_*^t}=\mu^{\pi_*^t}_{M^*}$. Then,
    \begin{align*}
        \frac1H\sum_{t=1}^T\|\bmu^{t} - \mu^{\pi_*^t}\|_1 \leq & \sum_{t=1}^T\sum_{h=1}^H\sum_{s\in\cS}\bmu^{t}_h(s)\Vert\bpi^{t}_h(\cdot|s)-\pi_{*,h}^t(\cdot|s)\Vert_1\\
        \leq&\sqrt{HSAT\sum_{t=1}^T\left\langle R_{\pi_*^t}(\bmu^{t}),\sad^{\pi_*^t}-\bmu^{t}\right\rangle},
    \end{align*}
    where $\bpi^{t}$ is the (population) policy which induces $\bmu^{t}=\sad^{\bpi^t}$.
\end{lemma}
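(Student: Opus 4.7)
The proof splits cleanly into two parts, one for each inequality. For the first inequality, I would simply apply Lem.~\ref{lem:density_diff_upper_bound_specialized} at every iteration $t \in [T]$, taking the true model $M = M^*$, the target policy $\pi = \pi_*^t$, and the population $\bmu_{M^*}^t$ (whose induced policy is $\bpi^t$). Summing the resulting inequalities over $t$ and dividing by $H$ yields the left inequality immediately.

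For the second inequality, the key identity is Eq.~\eqref{eq:connection}, which in our notation becomes
\begin{align*}
    \langle R_{\pi_*^t}(\bmu^{t}),\,\mu^{\pi_*^t}-\bmu^{t}\rangle
    = \sum_{h,s}(\bmu_h^t(s))^2\,\|\pi_{*,h}^t(\cdot|s)-\bpi_h^t(\cdot|s)\|_2^2.
\end{align*}
So the plan is to upper bound the middle quantity $\sum_{t,h,s}\bmu_h^t(s)\|\bpi_h^t(\cdot|s)-\pi_{*,h}^t(\cdot|s)\|_1$ by an expression involving $\sum_{t,h,s}(\bmu_h^t(s))^2\|\pi_{*,h}^t(\cdot|s)-\bpi_h^t(\cdot|s)\|_2^2$ together with a dimensional prefactor $\sqrt{HSAT}$.

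The natural route is two applications of Cauchy-Schwarz (plus the standard $\|v\|_1 \le \sqrt{A}\,\|v\|_2$ for $v\in\RR^A$). First I would apply Cauchy-Schwarz over the index set $(t,h,s)$, which has cardinality $HST$, pairing the constant factor $1$ against $\bmu_h^t(s)\|\bpi_h^t(\cdot|s)-\pi_{*,h}^t(\cdot|s)\|_1$:
\begin{align*}
\sum_{t,h,s}\bmu_h^t(s)\|\bpi_h^t(\cdot|s)-\pi_{*,h}^t(\cdot|s)\|_1
\le \sqrt{HST}\cdot\sqrt{\sum_{t,h,s}(\bmu_h^t(s))^2\|\bpi_h^t(\cdot|s)-\pi_{*,h}^t(\cdot|s)\|_1^2}.
\end{align*}
Then I would use $\|\cdot\|_1^2 \le A\|\cdot\|_2^2$ inside the sum to convert $\|\cdot\|_1^2$ into $A\|\cdot\|_2^2$, producing an extra $\sqrt{A}$ factor and giving the overall $\sqrt{HSAT}$. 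Substituting the identity above finishes the proof.

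The calculations are essentially routine; the only thing to be careful about is the bookkeeping of the prefactor, ensuring the Cauchy-Schwarz is applied over the correct $(t,h,s)$ index set (not including $a$) so that the final dimensional constant is exactly $HSAT$ as claimed.
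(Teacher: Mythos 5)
Your proposal is correct and follows essentially the same route as the paper: the first inequality via Lemma~\ref{lem:density_diff_upper_bound_specialized} (which is just Lemma~\ref{lem: Lipschitzness of state-action density function} specialized to $\bpi^t$ and $\pi_*^t$), and the second via the identity in Eq.~\eqref{eq:connection} combined with Cauchy--Schwarz over $(t,h,s)$ and the $\|\cdot\|_1\le\sqrt{A}\|\cdot\|_2$ conversion. The only (immaterial) difference is that you apply Cauchy--Schwarz before the $\ell_1$-to-$\ell_2$ conversion whereas the paper does it in the opposite order; both yield the same $\sqrt{HSAT}$ prefactor.
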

\begin{proof}
    The first inequality follows from Lemma~\ref{lem: Lipschitzness of state-action density function}. We can write
    \begin{align*}
        &\sum_{t=1}^T\left\langle R_{\pi^{t}_*}(\bmu^{t}),\sad^{\pi^{t}_*}-\bmu^{t}\right\rangle
        =\sum_{t=1}^T\left\|(W^{\pi^{t}_*}-I)\bmu^{t}\right\|_2^2\\
        &=\sum_{t=1}^T\sum_{h,s,a}\left(\pi^{t}_{*,h}(a|s)\sum_{a'}\bmu^{t}_h(s,a')-\bmu^{t}_h(s,a)\right)^2\\
        &=\sum_{t=1}^T\sum_{h,s,a}(\bmu^{t}_h(s))^2\left(\pi^{t}_{*,h}(a|s)-\bpi^{t}_h(a|s)\right)^2\\
        &=\sum_{t=1}^T\sum_{h,s}(\bmu^{t}_h(s))^2\left\|\pi^{t}_{*,h}(\cdot|s)-\bpi^{t}_h(\cdot|s)\right\|_2^2.
    \end{align*}
    Furthermore, by Jensen's inequality,
    \begin{align*}
        &\sum_{t=1}^T\sum_{h,s}\bmu^{t}_h(s)\|\bpi^{t}_h(\cdot|s)-\pi^{t}_{*,h}(\cdot|s)\|_1
        \leq\sqrt{A}\sum_{t=1}^T\sum_{h,s}\bmu^{t}_h(s)\|\bpi^{t}_h(\cdot|s)-\pi^{t}_{*,h}(\cdot|s)\|_2\\
        &\leq\sqrt{HSAT\sum_{t=1}^T\sum_{h,s}(\bmu^{t}_h(s))^2\|\bpi^{t}_h(\cdot|s)-\pi^{t}_{*,h}(\cdot|s)\|_2^2}\\
        &=\sqrt{HSAT\sum_{t=1}^T\left\langle R_{\pi^{t}_*}(\bmu^{t}),\sad^{\pi^{t}_*}-\bmu^{t}\right\rangle}.
    \end{align*}
\end{proof}

\begin{lemma}\label{lem: transition estimate error bound}
    For any $0<\delta<1$, with probability at least $1-\delta$,
    \begin{align*}
        \|\PP_{\hat{M}^k,h}(\cdot|s,a)-\PP^*_h(\cdot|s,a)\|_1\leq2\epsilon_k(h,s,a)
    \end{align*}
    for all $t,h,s,a$, where $\epsilon_k(h,s,a):=\sqrt{\frac{2S\ln(THSA/\delta)}{\max\{1,N_k(h,s,a)\}}}$.
\end{lemma}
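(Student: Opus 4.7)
The plan is to obtain the bound by a triangle inequality argument that combines (i) the definition of the confidence set $\cP^k$, which already constrains how far $\PP_{\hat M^k,h}(\cdot|s,a)$ can lie from the empirical estimate $\bar\PP^k_h(\cdot|s,a)$, with (ii) a high-probability concentration bound that says the empirical estimate $\bar\PP^k_h(\cdot|s,a)$ is within $\epsilon_k(h,s,a)$ of the true $\PP^*_h(\cdot|s,a)$. Since line~\ref{line:opt_policy} of Alg.~\ref{alg: incentive design} selects $\hat M^k$ so that $\PP_{\hat M^k}\in\cP^k$, the construction in Eq.~\eqref{eq: update transition confidence set} immediately gives $\|\PP_{\hat M^k,h}(\cdot|s,a)-\bar\PP^k_h(\cdot|s,a)\|_1\leq\epsilon_k(h,s,a)$ deterministically. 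The work is therefore entirely on the second piece.

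For the concentration step, I would fix a tuple $(h,s,a)$ and a value $n\in\{1,\dots,T\}$, and condition on the event that $N_k(h,s,a)=n$. Although $(s,a)$ may be reached at step $h$ under non-i.i.d.\ policies across iterations, the successor state is always drawn from $\PP^*_h(\cdot|s,a)$, so the successor samples form a conditionally i.i.d.\ collection from $\PP^*_h(\cdot|s,a)$ of size $n$, and $\bar\PP^k_h(\cdot|s,a)$ is their empirical distribution. The standard Bretagnolle--Huber--Carol / Weissman inequality for empirical distributions on an $S$-alphabet then yields
\begin{align*}
    \Pr\!\left(\|\bar\PP^k_h(\cdot|s,a)-\PP^*_h(\cdot|s,a)\|_1\geq\sqrt{\tfrac{2S\ln(THSA/\delta)}{n}}\right)\leq\tfrac{\delta}{THSA}.
\end{align*}
A union bound over $n\in\{1,\dots,T\}$ and over all $(h,s,a)$ (with a suitable absorption of the extra $\log T$ into the log term, or by noting that only $\cO(HSA\log_2 T)$ distinct values of $N_k(h,s,a)$ are actually queried thanks to the doubling schedule of the episode switches) then gives the uniform statement with failure probability at most $\delta$. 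Combining with the first step via triangle inequality produces the factor of $2$.

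The main subtlety I expect is handling the fact that $N_k(h,s,a)$ is random and depends on the past data and steering rewards, so the conditioning argument must be done carefully; this is what forces the union bound over all possible sample counts (or, equivalently, a martingale formulation where one argues that $\bar\PP^k_h(\cdot|s,a)$ concentrates around $\PP^*_h(\cdot|s,a)$ at every stopping time at which a new sample at $(h,s,a)$ is collected). Everything else is bookkeeping: matching the $\ln(THSA/\delta)$ in the definition of $\epsilon_k$ to the union bound, and observing that the deterministic containment $\PP_{\hat M^k}\in\cP^k$ plus the concentration of $\bar\PP^k_h(\cdot|s,a)$ suffice to produce the claimed $2\epsilon_k(h,s,a)$ bound uniformly over all episodes $k$, steps $h$, and state-action pairs $(s,a)$.
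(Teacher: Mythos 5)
Your proposal is correct and follows essentially the same route as the paper's proof: the deterministic containment $\PP_{\hat M^k}\in\cP^k$ from Eq.~\eqref{eq: update transition confidence set}, Weissman's inequality for the $\ell_1$-concentration of the empirical transition estimate (with the $T$ in $\ln(THSA/\delta)$ absorbing the union over possible sample counts), and a triangle inequality yielding the factor of $2$. Your explicit handling of the randomness of $N_k(h,s,a)$ via a union bound over $n\in\{1,\dots,T\}$ is a slightly more careful rendering of the same argument the paper compresses into ``a union bound over all $k,h,s,a$.''
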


\begin{proof}
Since $\PP_{\hat{M}^k}\in\cP^{k}$, we have $\|\PP_{\hat{M}^k,h}(\cdot|s,a)-\bar\PP^{k}_h(\cdot|s,a)\|_1\leq\epsilon_k(h,s,a)$ for all $k,h,s,a$. By \eqref{eq: update transition confidence set} and Theorem 2.1 of \citet{Weissman2003InequalitiesFT},
\begin{align*}
    \Pr\left[\|\bar\PP^{k}_h(\cdot|s,a)-\PP^*_h(\cdot|s,a)\|_1>\epsilon\right]\leq(2^S-2)e^{-N_k(h,s,a)\epsilon^2/2}.
\end{align*}
Plugging in $\epsilon_k(h,s,a)$ for $\epsilon$ bounds this probability with $\delta/(THSA)$. The triangle inequality and a union bound over all $k,h,s,a$ imply the result.
\end{proof}

\begin{lemma}\label{lem: sum of transition estimate error bound}
    For any $0<\delta<1$ and respective $\epsilon_k(h,s,a)$,
    \begin{align*}
        \sum_{t=1}^T\sum_{h=1}^{H-1}\epsilon_{k(t)}(h,s^{t}_h,a^{t}_h)\leq3HS\sqrt{2\ln(THSA/\delta)AT}.
    \end{align*}
\end{lemma}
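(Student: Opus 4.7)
The plan is to factor out the constant $\sqrt{2S\ln(THSA/\delta)}$ and then regroup the double sum by $(h,s,a)$: for a fixed $h$, partition the $T$ iterations according to the realized state-action pair $(s_h^t,a_h^t)$ so that
\[
\sum_{t=1}^T \epsilon_{k(t)}(h,s_h^t,a_h^t) = \sqrt{2S\ln(THSA/\delta)}\sum_{s,a}\sum_{\substack{t\,:\,(s_h^t,a_h^t)=(s,a)}}\frac{1}{\sqrt{\max\{1,N_{k(t)}(h,s,a)\}}}.
\]
The task reduces to bounding the inner sum by roughly $\sqrt{C_T(h,s,a)}$, where $C_T(h,s,a)$ denotes the total number of visits to $(h,s,a)$ over the $T$ rounds.

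The key step is translating $N_{k(t)}(h,s,a)$ into the running visit count $c_t(h,s,a)$. Here the low-switching rule of Alg.~\ref{alg: incentive design} is essential: within any episode $k$, we have $n_k(h,s,a)< N_k(h,s,a)$ for every $(h,s,a)$ up until the step that triggers a switch, so at any time $t$ with $k(t)=k$ the cumulative count satisfies $c_t(h,s,a)=N_k(h,s,a)+n_k^{(t)}(h,s,a)\leq 2N_k(h,s,a)$, provided $N_k(h,s,a)\geq 1$. Combined with the $\max\{1,\cdot\}$, this yields $\max\{1,N_{k(t)}(h,s,a)\}\geq c_t(h,s,a)/2$ for all but possibly the first visit. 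Applying the standard bound $\sum_{i=1}^n 1/\sqrt{i}\leq 2\sqrt{n}$ then gives
\[
\sum_{t\,:\,(s_h^t,a_h^t)=(s,a)}\frac{1}{\sqrt{\max\{1,N_{k(t)}(h,s,a)\}}}\;\leq\; 1+\sqrt{2}\sum_{i=1}^{C_T(h,s,a)}\frac{1}{\sqrt{i}}\;\leq\; 3\sqrt{C_T(h,s,a)}.
\]

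Finally, for each $h$, Cauchy-Schwarz together with $\sum_{s,a}C_T(h,s,a)=T$ gives $\sum_{s,a}\sqrt{C_T(h,s,a)}\leq\sqrt{SA\cdot T}$, so summing over $h\in[H-1]$ and multiplying back the $\sqrt{2S\ln(THSA/\delta)}$ factor produces $3(H-1)S\sqrt{2AT\ln(THSA/\delta)}\leq 3HS\sqrt{2AT\ln(THSA/\delta)}$, which matches the claim. The main obstacle I anticipate is making the low-switching inequality $c_t(h,s,a)\leq 2N_{k(t)}(h,s,a)$ fully rigorous across the boundary cases (namely, the step on which the trigger condition $n_k(h,s,a)\geq N_k(h,s,a)$ is first met, and the initial episode where $N_1=0$); once those edge cases are absorbed into the $\max\{1,\cdot\}$ and an additive constant, everything else is a standard UCRL-style calculation.
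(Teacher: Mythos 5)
Your proposal is correct and follows essentially the same route as the paper: the paper also reduces the sum to $\sum_{k,h,s,a} n_k(h,s,a)/\sqrt{\max\{1,N_k(h,s,a)\}}$, invokes Lemma~19 of \citet{jaksch10a} (which is exactly the doubling-count argument $c_t \le 2N_{k(t)}$ that you re-derive inline from the low-switching rule), and finishes with Cauchy--Schwarz over $(h,s,a)$ to get $3H\sqrt{SAT}$ before multiplying back the $\sqrt{2S\ln(THSA/\delta)}$ factor. The only cosmetic differences are that you group by $(h,s,a)$ before applying the pigeonhole bound and unpack the cited lemma rather than citing it.
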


\begin{proof}
We can define $n_k(h,s,a):=\sum_{t=T_{k-1}+1}^{T_k}\II\{s^{t}_h=s,a^{t}_h=a\}$. Clearly, $N_k(h,s,a)=\sum_{k'<k}n_k(h,s,a)$. The condition in line 5 of the algorithm ensures that $n_k(h,s,a)\leq N_k(h,s,a)$ for all $k,h,s,a$. Thus, we can use Lemma 19 in \citet{jaksch10a} and Jensen's inequality,
\begin{align*}
    &\sum_{t=1}^T\sum_{h=1}^{H-1}\frac{1}{\sqrt{\max\{1,N_{k(t)}(h,s^{t}_h,a^{t}_h)\}}}
    =\sum_{k=1}^K\sum_{t=T_{k-1}+1}^{T_k}\sum_{h=1}^{H-1}\frac{1}{\sqrt{\max\{1,N_k(h,s^{t}_h,a^{t}_h)\}}}\\
    &=\sum_{k=1}^K\sum_{h=1}^{H-1}\sum_{s,a}\sum_{t=T_{k-1}+1}^{T_k}\frac{\II\{s^{t}_h=s,a^{t}_h=a\}}{\sqrt{\max\{1,N_k(h,s,a)\}}}
    =\sum_{k=1}^K\sum_{h=1}^{H-1}\sum_{s,a}\frac{n_k(h,s,a)}{\sqrt{\max\{1,N_k(h,s,a)\}}}\\
    &\leq3\sum_{h=1}^{H-1}\sum_{s,a}\sqrt{N_K(h,s,a)+n_K(h,s,a)}
    \leq3\sqrt{HSA\sum_{h=1}^{H-1}\sum_{s,a}(N_K(h,s,a)+n_K(h,s,a))}\\
    &=3\sqrt{HSA\cdot HT}=3H\sqrt{SAT}.
\end{align*}
Now, using the definition of $\epsilon_k(h,s,a)$,
\begin{align*}
    \sum_{t=1}^T\sum_{h=1}^{H-1}\epsilon_{k(t)}(h,s^{t}_h,a^{t}_h)
    &=\sum_{t=1}^T\sum_{h=1}^{H-1}\sqrt{\frac{2S\ln(THSA/\delta)}{\max\{1,N_{k(t)}(h,s^{t}_h,a^{t}_h)\}}}\\
    &\leq\sqrt{2S\ln(THSA/\delta)}\cdot3H\sqrt{SAT}=3HS\sqrt{2\ln(THSA/\delta)AT}.
\end{align*}
\end{proof}

\begin{restatable}{lemma}{LemConcBoundOfSad}\label{lem: concentration bound of state-action density}
    Let $(\bpi^{t})_{t=1}^T$ be the policy sequence of the population and $(\hat{M}^{k})_{k=1}^K$ the sequence of the corresponding model estimates. We abbreviate $\bmu^t=\bmu^t_{M^*},\hat\sad^{t}=\sad_{\hat{M}^{k(t)}}^{\bpi^{t}}$. With probability at least $1-2\delta$,
    \begin{align*}
        \sum_{t=1}^T\left\Vert\hat\sad^{t}-\bmu^{t}\right\Vert_1\leq12H^2S\sqrt{\ln(THSA/\delta)AT}.
    \end{align*}
\end{restatable}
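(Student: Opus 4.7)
The plan is to chain three ingredients --- the deterministic density-perturbation bound (Lemma~\ref{lem: state-action density difference}), the transition confidence-set bound (Lemma~\ref{lem: transition estimate error bound}), and the empirical sum bound (Lemma~\ref{lem: sum of transition estimate error bound}) --- bridging the last two by a martingale concentration. For the first step, observe that $\hat\sad^t = \sad^{\bpi^t}_{\hat M^{k(t)}}$ and $\bmu^t = \sad^{\bpi^t}_{M^*}$ correspond to the \emph{same} policy $\bpi^t$ evaluated under two different transition functions, so Lemma~\ref{lem: state-action density difference} gives
\begin{align*}
\|\hat\sad^t - \bmu^t\|_1 \leq H\sum_{h=1}^{H-1}\sum_{s,a} \bmu^t_h(s,a)\,\|\PP_{\hat M^{k(t)},h}(\cdot|s,a) - \PP^*_h(\cdot|s,a)\|_1.
\end{align*}
On the high-probability event of Lemma~\ref{lem: transition estimate error bound} (probability at least $1-\delta$), the inner $\ell_1$-gap is bounded by $2\epsilon_{k(t)}(h,s,a)$ uniformly in $t,h,s,a$. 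Summing over $t$ thus reduces the target to controlling $2H\sum_{t,h,s,a}\bmu^t_h(s,a)\,\epsilon_{k(t)}(h,s,a)$.

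Second, I would bridge this weighted expectation-type sum to the empirical sum $\sum_{t,h}\epsilon_{k(t)}(h,s^t_h,a^t_h)$ that Lemma~\ref{lem: sum of transition estimate error bound} already controls. Let $\cF_{t-1}$ contain all randomness up to the start of iteration $t$, \emph{before} the iteration-$t$ trajectory is drawn; this includes the policies $\pi^{n,t}$, the population density $\bmu^t$, and the episode index $k(t)$, each of which is determined before the trajectory is sampled. Since the logged agent index is drawn uniformly and independently, the conditional distribution of $(s^t_h,a^t_h)$ given $\cF_{t-1}$ is exactly $\bmu^t_h$. Setting $Y_t := \sum_{h=1}^{H-1}\epsilon_{k(t)}(h,s^t_h,a^t_h)$, we then obtain $\EE[Y_t\mid\cF_{t-1}] = \sum_{h,s,a}\bmu^t_h(s,a)\,\epsilon_{k(t)}(h,s,a)$, and $|Y_t|\leq (H-1)\sqrt{2S\ln(THSA/\delta)}$ thanks to the $\max\{1,\cdot\}$ floor inside $\epsilon_k$. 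Applying Azuma--Hoeffding to the martingale-difference sequence $\{Y_t - \EE[Y_t\mid\cF_{t-1}]\}$ gives, with probability at least $1-\delta$, a concentration slack of order $H\sqrt{ST\ln(THSA/\delta)\ln(1/\delta)}$ between the weighted and empirical sums.

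Third, Lemma~\ref{lem: sum of transition estimate error bound} yields $\sum_t Y_t \leq 3HS\sqrt{2\ln(THSA/\delta)AT}$; combining this with the $2H$ prefactor from the first step, absorbing the lower-order Azuma slack, and taking a union bound over the two $1-\delta$ events delivers the stated $1-2\delta$ guarantee with the claimed constant $12$. The main obstacle is verifying the filtration setup of the second step: one must confirm that Alg.~\ref{alg: incentive design} determines $k(t)$ --- and hence $\epsilon_{k(t)}$ --- \emph{strictly before} the iteration-$t$ trajectory is drawn, so that $\{Y_t - \EE[Y_t\mid\cF_{t-1}]\}$ is a bona-fide martingale-difference sequence. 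Inspection of the algorithm confirms this (the episode counter is only advanced after observing the current trajectory), after which everything reduces to tracking constants.
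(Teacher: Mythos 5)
Your proposal is correct and follows essentially the same route as the paper's proof: Lemma~\ref{lem: state-action density difference} to reduce to transition errors, a martingale (Azuma--Hoeffding) argument to pass from the $\bmu^t$-weighted sum to the empirical sum along the logged trajectory, and Lemma~\ref{lem: sum of transition estimate error bound} for the empirical sum. The only (immaterial) difference is that you invoke the confidence-set event of Lemma~\ref{lem: transition estimate error bound} before the martingale step, so your increments are bounded by $O(H\sqrt{S\ln(THSA/\delta)})$ rather than the paper's deterministic bound of $2$ on the $\ell_1$-distances; the resulting slack is still lower order, as you note.
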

\begin{proof}
    The proof is based on \citet{rosenberg2019online}.
    Let $(s^{t}_h,a^{t}_h)_{h=1}^H$ be the trajectory sampled in the $t$-th game. We define $\xi_k(h,s,a):=\|\PP_{\hat{M}^k,h}(\cdot|s,a)-\PP^*_h(\cdot|s,a)\|_1$. By Lemma~\ref{lem: state-action density difference},
\begin{align*}
    &\sum_{t=1}^T\left\|\hat\sad^{t}-\bmu^{t}\right\|_1
    \leq H\sum_{t=1}^T\sum_{h=1}^{H-1}\sum_{s,a}\bmu_h^{t}(s,a)\xi_{k(t)}(h,s,a)\\
    &=H\sum_{t=1}^T\sum_{h=1}^{H-1}\xi_{k(t)}(h,s^{t}_h,a^{t}_h)\\
    &~+H\sum_{t=1}^T\sum_{h=1}^{H-1}\underset{=:Y_t(h)}{\underbrace{\left(
        \sum_{s,a}\bmu^{t}_h(s,a)\xi_{k(t)}(h,s,a)-\sum_{s,a}\II\{s^{t}_h=s,a^{t}_h=a\}\xi_{k(t)}(h,s,a)\right)}},
\end{align*}
where $(Y_t(h))_t$ is a martingale difference sequence w.r.t.\ the trajectories sampled and with $|Y_t(h)|\leq\max_{s,a}\xi_{k(t)}(h,s,a)\leq2$.
In the following, we bound the first and second term above with high probability.

The first term can be bounded using Lemma~\ref{lem: transition estimate error bound}~and~\ref{lem: sum of transition estimate error bound}, such that we have, with probability at least $1-\delta$,
\begin{align*}
    H\sum_{t=1}^T\sum_{h=1}^{H-1}\xi_{k(t)}(h,s^{t}_h,a^{t}_h)
    \leq 2H\sum_{t=1}^T\sum_{h=1}^{H-1}\epsilon_{k(t)}(h,s^{t}_h,a^{t}_h)
    \leq 2H\cdot3H\sqrt{2S\ln(THSA/\delta)\cdot SAT}.
\end{align*}

By the Hoeffding-Azuma inequality, we have for a fixed $h$ that with probability at least $1-\delta/H$,
\begin{align*}
    \sum_{t=1}^TY_t(h)\leq2\sqrt{2T\ln(H/\delta)}.
\end{align*}
Thus, by the union bound over all $h$, the second term is at most $2H^2\sqrt{2T\ln(H/\delta)}$ with probability at least $1-\delta$.

Finally, by union bound over the events used to bound the first and second term, we have with probability at least $1-2\delta$ that
\begin{align*}
    \sum_{t=1}^T\left\|\hat\sad^{t}-\bmu^{t}\right\|_1
    &\leq2H^2\sqrt{2T\ln(H/\delta)}+6H^2\sqrt{2S\ln(THSA/\delta)\cdot SAT}\\
    &\leq12H^2S\sqrt{\ln(THSA/\delta)AT}.
\end{align*}

\end{proof}

\ThmSteeringGapNoReward*
\begin{proof}
We first establish the upper bound for steering gap and then investigate the steering cost.
\paragraph{Proof for Steering Gap}
We denote with $k(t)$ the episode index at the $t$-th game and denote $\pi^*=\argmax_\pi U(\sad_{M^*}^\pi)$. Furthermore, we abbreviate $\bmu^t=\bmu^t_{M^*},\sad_*^{k}=\sad_{M^*}^{\pi_*^{k}},\hat\sad^{t}=\sad_{\hat{M}^{k(t)}}^{\bpi^{t}}$ and $\hat\sad^{k}_*=\sad_{\hat{M}^{k}}^{\pi_*^{k}}$.
Consider a fixed $t$ and $k=k(t)$. We can decompose the steering gap term of round $t$ as follows:
\begin{align*}
    U(\sad^{\pi^*})-U(\bmu^{t})
    =\left(U(\sad^{\pi^*})-U(\hat\sad^{k}_*)\right)
    +\left(U(\hat\sad^{k}_*)-U(\bmu^{t})\right)
\end{align*}
The first term can be bounded by 0 using the optimism of the algorithm. We use the $L_U$-Lipschitzness of $U$ and the triangle inequality to further decompose the second term.
\begin{align*}
    U(\hat\sad^{k}_*)-U(\bmu^{t})
    \leq L_U\|\hat\sad^{k}_*-\bmu^{t}\|_1
    \leq L_U\|\hat\sad^{k}_*-\hat\sad^{t}\|_1+L_U\|\hat\sad^{t}-\bmu^{t}\|_1.
\end{align*}

Applying Lemma~\ref{lem: Lipschitzness of state-action density function}, we get
\begin{align*}
    &\|\hat\sad^{k}_*-\hat\sad^{t}\|_1
    \leq H\sum_{h,s}\hat\sd_h^{t}(s)\|\pi^{k}_{*,h}(\cdot|s)-\bpi^{t}_h(\cdot|s)\|_1\\
    &\leq H\sum_{h,s}\bmu_h^{t}(s)\cdot\|\pi^{k}_{*,h}(\cdot|s)-\bpi^{t}_h(\cdot|s)\|_1
    +H\underset{(*)}{\underbrace{\sum_{h,s}|\hat\sd_h^{t}(s)-\bmu_h^{t}(s)|\cdot\|\pi^{k}_{*,h}(\cdot|s)-\bpi^{t}_h(\cdot|s)\|_1}},
\end{align*}
where the second term can be bounded with
\begin{align*}
    (*)\leq2\sum_{h,s}|\hat\sd_h^{t}(s)-\bmu_h^{t}(s)|
    \leq2\sum_{h,s}\left|\sum_a\hat\sad_h^{t}(s,a)-\sum_a\bmu_h^{t}(s,a)\right|
    \leq2\|\hat\sad^{t}-\bmu^{t}\|_1.
\end{align*}

Putting it all together we now arrive at
\begin{align*}
    U(\sad^{\pi^*})-U(\bmu^{t})
    \leq L_UH\sum_{h,s}\bmu^{t}_h(s)\|\pi^{k}_{*,h}(\cdot|s)-\bpi^{t}_h(\cdot|s)\|_1
    +L_U(2H+1)\|\hat\sad^{t}-\bmu^{t}\|_1.
\end{align*}

By summing over $t$,
\begin{align*}
    \sum_{t=1}^TU(\sad^{\pi^*})-U(\bmu^{t})
    \leq L_UH\underset{\Delta_{\text{pop}}}{\underbrace{\sum_{t=1}^T\sum_{h,s}\bmu^t_h(s)\|\pi^{k(t)}_{*,h}(\cdot|s)-\bpi^{t}_h(\cdot|s)\|_1}}
    +L_U(2H+1)\underset{\Delta_{\text{est}}}{\underbrace{\sum_{t=1}^T\|\hat\sad^{t}-\bmu^{t}\|_1}}.
\end{align*}

Using Lemma~\ref{lem: concentration bound of state-action density}, the estimation error term $\Delta_{\text{est}}$ can bounded by $12H^2S\sqrt{\ln(THSA/\delta)AT}$ with probability at least $1-2\delta$.

To bound the population convergence term $\Delta_{\text{pop}}$, we can use Lemma~\ref{lem:general_version_density_gap}:
\begin{align*}
    \sum_{t=1}^T\sum_{h,s}\bmu^t_h(s)\|\pi^{k(t)}_{*,h}(\cdot|s)-\bpi^{t}_h(\cdot|s)\|_1
    \leq
    \vphantom{\underbrace{\sum_t^T}_{\texttt{AgentReg}}}
    \sqrt{
    \vphantom{\sum_t^T}
    \smash[b]{HSAT\! \underbrace{\sum_{t=1}^T\langle R_{\pi_*^{k(t)}}(\bmu^{t}),\sad_*^{k(t)}-\bmu^{t}\rangle}_{\texttt{AgentReg}}\,}
    }
\end{align*}

Furthermore, it can be easily seen that $\texttt{AgentReg}$ is
\begin{align*}
\sum_{t=1}^T\langle R_{\z}^{t}(\bmu^{t}),\sad_*^{k(t)}-\bmu^{t}\rangle
    &=\sum_{k=1}^K\underset{\leq\adareg(T)}{\underbrace{\sum_{t=T_{k-1}+1}^{T_k}\langle R_{\z}^{t}(\bmu^{t}),\sad_*^{k}-\bmu^{t}\rangle}}\leq K\cdot\adareg(T).
\end{align*}
Finally, to bound the number of episodes $K$, note that $K$ is also the number of times the condition in line 5 of the algorithm has been true. For each $(h,s,a)$, this condition can be true at most $\log_2T$ times. Thus, $K\leq HSA\log_2T$.

\paragraph{Proof for Steering Costs}
    Note that for any reward function $R$,
    \begin{align*}
        \langle R(\sad)+\|R(\sad)\|_\infty\bm1,\sad\rangle
        =H\|R(\sad)\|_\infty+\langle R(\sad),\sad\rangle
        \leq 2H\|R(\sad)\|_\infty.
    \end{align*}

    Let $\pi^*=\pi_*^{k}$ for some $k$. Recall that $R_{\pi^*}(\sad)=-((W^{\pi^*}-I)\sad)^\top(W^{\pi^*}-I)$. By looking at the definition of $W^{\pi^*}$ in (\ref{eq: policy reward matrix}), we see that
    \begin{align*}
        \left\|(W^{\pi^*}-I)^\top\right\|_\infty=\max_{h,s,a}\sum_{a'\neq a}|\pi_h(a'|s)|+|\pi_h(a|s)-1|\leq2,
    \end{align*}
    where the $\|\cdot\|_\infty$-matrix norm is defined as $\|M\|_\infty=\max_i\sum_j|M_{ij}|$.
    Using this, we can bound
    \begin{align*}
        \left\|R_{\pi^*}(\sad)\right\|_\infty
        &=\left\|((W^{\pi^*}-I)\sad)^\top(W^{\pi^*}-I)\right\|_\infty=\left\|(W^{\pi^*}-I)^\top(W^{\pi^*}-I)\sad\right\|_\infty\\
        &\leq\left\|(W^{\pi^*}-I)^\top\right\|_\infty\cdot\left\|(W^{\pi^*}-I)\sad\right\|_\infty
        \leq2\left\|(W^{\pi^*}-I)\sad\right\|_2.
    \end{align*}

    Finally, using Jensen's inequality and the fact that the agent regret is bounded by $K\adareg(T)$, our steering cost can be bounded by
    \begin{align*}
        &\sum_{t=1}^T\left\langle R_{\z}^{t}(\bmu^{t}),\bmu^{t}\right\rangle
        =\sum_{t=1}^T\left\langle R_{\pi_*^{k(t)}}(\bmu^{t})+\|R_{\pi_*^{k(t)}}(\bmu^{t})\|_\infty\bm1,\bmu^{t}\right\rangle\\
        &\leq4H\sum_{t=1}^T\left\|(W^{\pi^{k(t)}_*}-I)\bmu^{t}\right\|_2
        \leq4H\sqrt{T\sum_{t=1}^T\left\|(W^{\pi^{k(t)}_*}-I)\bmu^{t}\right\|_2^2}\\
        &\leq4H\sqrt{T\sum_{t=1}^T\langle R_{\z}^{t}(\bmu^{t}),\sad_*^{k(t)}-\bmu^{t}\rangle}
        \leq4H\sqrt{TK\adareg(T)}
    \end{align*}
\end{proof}

\newpage
\section{ELUDER DIMENSION}\label{appx:eluder}

\subsection{Example Function Classes}\label{appx:example_function_class}

Here, we list some bounds of the eluder dimension for different function classes that are commonly considered. We see that in all these cases, the eluder dimension can be bounded logarithmically in $T$, if $\epsilon=T^{-1}$.

\begin{proposition}[Linear functions, \citet{Eluder}]
    Let $\cF=\{f|f(x)=\theta^\top\phi(x),\theta\in\RR^d,\|\theta\|_2\leq C_\theta,\|\phi(x)\|_2\leq C_\phi\}$.
    \begin{align*}
        \dim_E(\cF,\epsilon)\leq 3d\frac{e}{e-1}\ln\left(3+3\left(\frac{2C_\theta}{\epsilon}\right)^2\right)+1.
    \end{align*}
\end{proposition}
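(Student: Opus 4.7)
The plan is to reduce the statement to the standard elliptical-potential (log-determinant) argument that underlies linear bandit analyses. For any two candidates $f(x)=\theta^\top\phi(x)$ and $\tilde f(x)=\tilde\theta^\top\phi(x)$ in $\cF$, their pointwise difference is linear in $w:=\theta-\tilde\theta$, with $\|w\|_2\leq 2C_\theta$. The condition that $x_i$ be $\epsilon$-independent of $\{x_1,\dots,x_{i-1}\}$ therefore translates into: there exists $w\in\RR^d$ with $\|w\|_2\leq 2C_\theta$ such that $\sum_{j<i}(w^\top\phi(x_j))^2\leq\epsilon^2$ while $|w^\top\phi(x_i)|>\epsilon$. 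The entire argument will be carried out in $\RR^d$ on these witness vectors $w$ rather than on functions.

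The next step introduces a regularized Gram matrix $V_i:=\lambda I+\sum_{j\leq i}\phi(x_j)\phi(x_j)^\top$ with $\lambda:=\epsilon^2/(2C_\theta)^2$. For any $\epsilon$-independent witness $w$ one has $w^\top V_{i-1} w\leq\lambda\|w\|_2^2+\epsilon^2\leq 2\epsilon^2$, so by Cauchy--Schwarz in the $V_{i-1}$-norm, $\epsilon<|w^\top\phi(x_i)|\leq\|w\|_{V_{i-1}}\cdot\|\phi(x_i)\|_{V_{i-1}^{-1}}\leq\epsilon\sqrt{2}\cdot\|\phi(x_i)\|_{V_{i-1}^{-1}}$. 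This lower-bounds $\phi(x_i)^\top V_{i-1}^{-1}\phi(x_i)\geq 1/2$, so every $\epsilon$-independent index contributes a constant amount to the potential measured in $V_{i-1}^{-1}$.

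The final step combines this pointwise lower bound with two classical inequalities: (i) the telescoping potential identity $\log\det(V_J)-\log\det(\lambda I)=\sum_{i=1}^{J}\log(1+\phi(x_i)^\top V_{i-1}^{-1}\phi(x_i))$, together with the elementary bound $x\geq\tfrac{e-1}{e}\log(1+x)$ on $[0,1]$; and (ii) the volumetric upper bound $\log\det(V_J/(\lambda I))\leq d\log(1+JC_\phi^2/(d\lambda))$, obtained by the AM--GM inequality on the eigenvalues of $V_J$. Equating the two sides, plugging in $\lambda=\epsilon^2/(2C_\theta)^2$, and solving for $J$ produces a bound of the form $J\leq 3d\tfrac{e}{e-1}\ln\!\bigl(3+3(2C_\theta/\epsilon)^2\bigr)+1$, as claimed.

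The main obstacle will be extracting the precise numerical constants ($3$ and $e/(e-1)$) from what is otherwise a standard chain of inequalities. These constants require a careful split of the sequence into indices where $\phi(x_i)^\top V_{i-1}^{-1}\phi(x_i)\geq 1$ (handled directly by $\log\det$, whence the additive ``$+1$'') and indices where the potential lies in $[1/2,1)$ (handled via the tight linearization $x\geq\tfrac{e-1}{e}\log(1+x)$, whence the factor $e/(e-1)$), as well as picking $\lambda$ so that the volumetric bound aligns cleanly with the $\epsilon$ scale inside the logarithm. The qualitative $d\log(1/\epsilon)$ behavior is immediate from the log-determinant machinery; the polished constants are pure bookkeeping.
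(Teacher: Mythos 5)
The paper does not actually prove this proposition: it is imported verbatim from \citet{Eluder} as an example of a function class with logarithmic Eluder dimension, so the only meaningful comparison is with the proof in that source. Your outline is exactly that proof: pass to witness vectors $w=\theta-\tilde\theta$ with $\|w\|_2\le 2C_\theta$, set $\lambda=\epsilon^2/(2C_\theta)^2$ so that $\epsilon$-independence forces $\phi(x_i)^\top V_{i-1}^{-1}\phi(x_i)\ge 1/2$, and then play the telescoping $\log\det$ lower bound against the AM--GM trace upper bound. The structure is correct and complete.

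Two caveats on the endgame. First, your volumetric bound yields $\log\det(V_J\lambda^{-1})\le d\log\bigl(1+J C_\phi^2(2C_\theta)^2/(d\epsilon^2)\bigr)$, so the quantity that naturally lands inside the logarithm is $2C_\theta C_\phi/\epsilon$, not $2C_\theta/\epsilon$; the proposition as printed in the paper silently normalizes $C_\phi\le 1$ (the source carries the feature norm), and your writeup should either do the same explicitly or keep $C_\phi$ in the bound. Second, the constants $3$ and $e/(e-1)$ do not fall out of the linearization $x\ge\tfrac{e-1}{e}\log(1+x)$ in the way you describe: after using $\log(1+w_i)\ge\log(3/2)$ for every independent index, you are left with a self-referential inequality of the form $(3/2)^{J/d}\le 1+\alpha J/d$ with $\alpha=(2C_\theta C_\phi/\epsilon)^2$, and the stated constants come from solving that inequality for $J/d$ (this is where the additive $+1$ and the factor $e/(e-1)$ actually originate). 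This is still bookkeeping, as you say, but it is a different piece of bookkeeping than the index split you propose, and if you carry out your split literally you will get a bound of the same order with different constants. Neither point affects the validity of the $O\bigl(d\log(C_\theta C_\phi/\epsilon)\bigr)$ conclusion, which is all the paper uses.
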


\begin{proposition}[Quadratic functions, \citet{EluderVector}]
    Let $\cF=\{f|f(x)=\phi(x)^\top\theta\phi(x),\theta\in\RR^{p\times p},\phi\in\RR^p,\|\theta\|_2\leq C_\theta,\|\phi\|_2\leq C_\phi\}$.
    \begin{align*}
        \dim_E(\cF,\epsilon)\leq p(4p-1)\frac{e}{e-1}\log\left(\left(1+\left(\frac{2pC_\phi^2C_\theta}{\epsilon}\right)^2\right)(4p-1)\right)+1.
    \end{align*}
\end{proposition}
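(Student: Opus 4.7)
The plan is to reduce this quadratic class to a linear one by observing that a quadratic form in $\phi(x)$ is linear in the outer-product feature $\phi(x)\phi(x)^\top$, then to invoke the preceding proposition on the Eluder dimension of linear functions.

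First, I would note the identity
\begin{align*}
    f(x) = \phi(x)^\top \theta \phi(x) = \langle \theta, \phi(x)\phi(x)^\top \rangle_F = \mathrm{vec}(\theta)^\top \mathrm{vec}(\phi(x)\phi(x)^\top),
\end{align*}
which exhibits each $f \in \cF$ as linear in the parameter $\mathrm{vec}(\theta) \in \RR^{p^2}$ with induced feature $\psi(x) := \mathrm{vec}(\phi(x)\phi(x)^\top) \in \RR^{p^2}$. Since $\epsilon$-independence (and hence $\dim_E$) depends only on the values the functions take on the domain, $\dim_E(\cF,\epsilon)$ equals the Eluder dimension of this linear class evaluated at the feature set $\{\psi(x) : x \in \cX\}$.

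Second, I would translate the norm bounds. Interpreting $\|\theta\|_2$ as the spectral norm, $\|\mathrm{vec}(\theta)\|_2 = \|\theta\|_F \leq \sqrt{p}\, C_\theta$; for the feature, $\|\psi(x)\|_2 = \|\phi(x)\phi(x)^\top\|_F = \|\phi(x)\|_2^2 \leq C_\phi^2$. Substituting effective dimension $d = p^2$ together with these effective diameter and feature bounds into the linear-case proposition already yields an Eluder-dimension bound of the same qualitative shape as the one we are after, namely $\cO\bigl(p^2 \log(1 + p\, C_\theta^2 C_\phi^4 / \epsilon^2)\bigr)$.

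The main obstacle is to obtain the sharp coefficients $p(4p-1) \cdot \tfrac{e}{e-1}$ in front and $2p\, C_\phi^2 C_\theta$ inside the logarithm, rather than the looser $3p^2$ and $2\sqrt{p}\, C_\theta C_\phi^2$ produced by the blunt vectorization. I expect this sharpening to come from running the ellipsoidal potential argument $\log\det(\lambda I + \sum_{i<t} \psi(x_i)\psi(x_i)^\top)$ directly in matrix space rather than via vectorization, exploiting two sources of slack that the naive reduction discards: (i) $\theta$ can be symmetrized without altering $f$, so the effective dimension should be reduced from $p^2$ toward $p(p+1)/2$, and (ii) the rank-one structure of $\phi(x)\phi(x)^\top$ lets one replace $\|\theta\|_F$ in the key Cauchy–Schwarz step by $\|\theta\|_{\mathrm{op}}$, trading a $\sqrt{p}$ for a cleaner factor of $p$ together with one extra dimension factor. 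Tracking the constants carefully through the Russo–Van Roy potential argument under these two refinements should yield exactly the stated bound.
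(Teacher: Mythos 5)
First, note that the paper itself offers no proof of this proposition: it is imported verbatim, with citation, from \citet{EluderVector}, so the only question is whether your argument stands on its own. Your first two steps do: writing $f(x)=\langle\mathrm{vec}(\theta),\mathrm{vec}(\phi(x)\phi(x)^\top)\rangle$ realizes $\cF$ as a linear class in $\RR^{p^2}$, the Eluder dimension depends only on the functions as maps on the domain, and the norm translations $\|\mathrm{vec}(\theta)\|_2=\|\theta\|_F\le\sqrt{p}\,C_\theta$ and $\|\mathrm{vec}(\phi(x)\phi(x)^\top)\|_2=\|\phi(x)\|_2^2\le C_\phi^2$ are correct. This reduction is in fact the standard route to the cited result.

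The genuine gap is your final paragraph, which is not a proof but a conjecture --- and one premised on a misreading of the constants. You treat $p(4p-1)\frac{e}{e-1}$ and $2pC_\phi^2C_\theta$ as \emph{sharper} than what vectorization yields and propose refinements (symmetrization, operator-norm Cauchy--Schwarz) to recover them; but those refinements can only \emph{decrease} a bound, while the stated constants are \emph{larger} than yours. Concretely, applying the linear-functions proposition in its scale-invariant form (only the product of the parameter and feature bounds can matter, here $\sqrt{p}\,C_\theta\cdot C_\phi^2$) gives
\begin{align*}
    \dim_E(\cF,\epsilon)\le 3p^2\,\frac{e}{e-1}\,\ln\left(3+3\left(\frac{2\sqrt{p}\,C_\theta C_\phi^2}{\epsilon}\right)^2\right)+1,
\end{align*}
and this already implies the proposition a fortiori: $3p^2\le p(4p-1)$ for all $p\ge1$, and $3(1+u^2)\le(4p-1)(1+v^2)$ whenever $u\le v$ and $4p-1\ge3$, which holds with $u=2\sqrt{p}\,C_\theta C_\phi^2/\epsilon$ and $v=2p\,C_\theta C_\phi^2/\epsilon$. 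So no matrix-space potential argument is needed, and the proposed symmetrization --- which would push the effective dimension down toward $p(p+1)/2$ --- could not possibly ``yield exactly'' a factor $p(4p-1)>p^2$; that direction is incoherent. Replace the speculative last paragraph with this one-line domination of constants and your argument is complete.
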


\begin{proposition}[Generalized linear functions, \citet{Eluder}]
    Let $g$ be strictly increasing, differentiable and have derivatives bounded in $[\underline{h},\overline{h}]$ with $\overline{h}>\underline{h}>0$. Let $r=\overline{h}/\underline{h}$ and $\cF=\{f|f(x)=g(\theta^\top\phi(x)),\theta\in\RR^d,\|\theta\|_2\leq C_\theta,\|\phi\|_2\leq C_\phi\}$.
    \begin{align*}
        \dim_E(\cF,\epsilon)\leq3dr^2\frac{e}{e-1}\log\left(3r^2+3r^2\left(\frac{2C_\theta\overline{h}}{\epsilon}\right)^2\right)+1.
    \end{align*}
\end{proposition}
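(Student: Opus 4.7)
The strategy is to reduce the problem to the linear case via the mean value theorem, and then run an ellipsoidal potential argument with the regularization constant chosen so that the constants match the stated bound. Fix an $\epsilon$-independent sequence $\{x_1,\dots,x_J\}\subseteq\cX$ of maximal length $J=\dim_E(\cF,\epsilon)$. By the definition of $\epsilon$-independence, for each index $i$ there exist witnesses $\theta_i,\tilde\theta_i$ with $\|\theta_i\|_2,\|\tilde\theta_i\|_2\leq C_\theta$ such that, setting $\beta_i:=\theta_i-\tilde\theta_i$ (so $\|\beta_i\|_2\leq 2C_\theta$),
\begin{equation*}
\sum_{j<i}\bigl(g(\theta_i^\top\phi(x_j))-g(\tilde\theta_i^\top\phi(x_j))\bigr)^2\leq\epsilon^2,\qquad \bigl|g(\theta_i^\top\phi(x_i))-g(\tilde\theta_i^\top\phi(x_i))\bigr|>\epsilon.
\end{equation*}
Applying the mean value theorem to $g$ and using $g'\in[\underline h,\overline h]$ converts these into the purely linear conditions
\begin{equation*}
\sum_{j<i}(\beta_i^\top\phi(x_j))^2\leq\frac{\epsilon^2}{\underline h^{\,2}}=\frac{r^2\epsilon^2}{\overline h^{\,2}},\qquad |\beta_i^\top\phi(x_i)|>\frac{\epsilon}{\overline h}.
\end{equation*}

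The next step is to run the ellipsoidal potential argument of \citet{Eluder} on these linear witnesses. For a regularization parameter $\lambda>0$ to be chosen, define $V_i:=\lambda I+\sum_{j<i}\phi(x_j)\phi(x_j)^\top$. By Cauchy--Schwarz in the $V_i$-inner product,
\begin{equation*}
\frac{\epsilon^2}{\overline h^{\,2}}<(\beta_i^\top\phi(x_i))^2\leq \|\beta_i\|_{V_i}^{2}\cdot\|\phi(x_i)\|_{V_i^{-1}}^{2}\leq\Bigl(4\lambda C_\theta^{\,2}+\tfrac{r^2\epsilon^2}{\overline h^{\,2}}\Bigr)\|\phi(x_i)\|_{V_i^{-1}}^{2}.
\end{equation*}
Choosing $\lambda=\dfrac{r^2\epsilon^2}{4C_\theta^{\,2}\overline h^{\,2}}$ balances the two summands and gives the per-step lower bound $\|\phi(x_i)\|_{V_i^{-1}}^{2}\geq\tfrac{1}{2r^2}$.

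The remainder is standard determinant accounting. Summing $\log(1+\|\phi(x_i)\|_{V_i^{-1}}^{2})$ across $i$ telescopes to $\log(\det V_{J+1}/\det V_1)$, which is bounded above via AM--GM by $d\log\bigl(1+JC_\phi^{\,2}/(d\lambda)\bigr)$. Combined with the lower bound and the inequality $\log(1+u)\geq\frac{e-1}{e}\min\{u,1\}$ applied at $u=\tfrac{1}{2r^2}$, one obtains
\begin{equation*}
J\cdot\frac{e-1}{e}\cdot\frac{1}{2r^2+1}\leq d\log\!\Bigl(1+\tfrac{JC_\phi^{\,2}}{d\lambda}\Bigr),
\end{equation*}
after which inverting this transcendental inequality (using $x/\log(1+x)$ monotonicity, as in the linear proof) and substituting the chosen $\lambda$ back in yields precisely the stated bound $3dr^2\frac{e}{e-1}\log\bigl(3r^2+3r^2(2C_\theta\overline h/\epsilon)^2\bigr)+1$. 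I expect the main obstacle to be the bookkeeping of constants in the final step: the numerical factors $3$ and $3r^2$ inside the logarithm arise from the specific choice of $\lambda$ and from loosening $2r^2+1\leq 3r^2$ (valid since $r\geq 1$), and matching them exactly requires carefully tracking each inequality rather than being content with an $\tilde\cO$ bound. All prior steps are routine applications of MVT and the elliptical potential lemma.
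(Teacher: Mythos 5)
Your overall route is the right one — and it is, in fact, the only "proof" available to compare against, since the paper does not prove this proposition at all but quotes it from \citet{Eluder}; your proposal is a reconstruction of the cited argument. The reduction via the mean value theorem, the Cauchy--Schwarz step in the $V_i$-inner product, the choice $\lambda=r^2\epsilon^2/(4C_\theta^2\overline{h}^2)$ giving $\|\phi(x_i)\|_{V_i^{-1}}^2>\tfrac{1}{2r^2}$, and the determinant/trace accounting are all correct. The genuine gap is the final step, which you dismiss as constant bookkeeping. After you loosen $2r^2+1\le 3r^2$, your inequality reads $J\le\alpha\log(1+\beta J)$ with $\alpha=3r^2\tfrac{e}{e-1}d$ and $\beta=C_\phi^2/(d\lambda)$, and you assert this "inverts" to $J\le\alpha\log\bigl(3r^2+3r^2(2C_\theta\overline{h}/\epsilon)^2\bigr)+1$, i.e.\ a bound with the \emph{same} coefficient $\alpha$ in front of the logarithm. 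No inversion can do this: the largest $J$ satisfying $J\le\alpha\log(1+\beta J)$ grows like $\alpha\bigl(\log(\alpha\beta)+\log\log(\alpha\beta)\bigr)$, which exceeds $\alpha\log(3r^2+3r^2P)+1$ for large $P:=(2C_\theta\overline{h}/\epsilon)^2$. Concretely, take $d=1$, $r=C_\phi=1$, $P=e^{100}$: then $\alpha\approx 4.75$, and $J=504$ satisfies $J\le\alpha\log(1+PJ)$, while the stated bound is $\approx 481$. So your chain of inequalities is consistent with values of $J$ that violate the conclusion; the last implication is false with your constants. (This does not make the proposition false — it means your derivation cannot reach it.)

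The repair is to \emph{not} give away the slack before inverting, and here your substitution of $\tfrac{1}{2r^2+1}$ for $\tfrac{1}{2r^2}$ is not harmless: it is inconsistent with the inequality $\log(1+u)\ge\tfrac{e-1}{e}u$ you invoke at $u=\tfrac{1}{2r^2}$, and it is fatal, because at $r=1$ it makes the per-step coefficient $(2r^2+1)\tfrac{e}{e-1}d$ coincide exactly with the target coefficient $3r^2\tfrac{e}{e-1}d$. Keeping $\tfrac{1}{2r^2}$ gives $J\le a\log(1+\beta J)$ with $a=2r^2\tfrac{e}{e-1}d$, leaving a multiplicative factor $3/2$ between $a$ and the target coefficient; that slack is precisely what absorbs the $J$-dependence inside the logarithm. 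By concavity of $J\mapsto a\log(1+\beta J)-J$, it suffices to verify that $J_0:=3r^2\tfrac{e}{e-1}d\log\bigl(3r^2(1+P)\bigr)+1$ fails the inequality, which (using $C_\phi\le1$) reduces to the explicit claim $1+P+3\tfrac{e}{e-1}P\log\bigl(3r^2(1+P)\bigr)\le\bigl(3r^2(1+P)\bigr)^{3/2}$; this holds for all $P\ge0$, $r\ge1$, but only barely (about $39.4$ vs.\ $41.6$ at $r=1$, $P=3$), which is exactly why this step is the crux of matching the stated constants rather than routine bookkeeping. One last point: your argument naturally produces $C_\phi^2$ inside the logarithm while the statement has no $C_\phi$; the statement (like the paper's linear-case proposition) implicitly normalizes $C_\phi\le 1$, and you should say so, since for $C_\phi>1$ neither your bound nor the proposition as written is correct.
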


\begin{remark}[Bounding $\beta_T$]\label{rem: bounding beta}
If we assume that the functions in $\cR$ are parametrized by parameters in some set $\Theta\subset\RR^d$ with constant diameter and the functions are $L$-Lipschitz in that parameter, we have $N(\cR,\alpha,\Vert\cdot\Vert_\infty)\leq N(\Theta,\alpha/L,\Vert\cdot\Vert_\infty)\leq\left(1+\cO(L/\alpha)\right)^d$.
Then, we might choose $\alpha=T^{-1}$ such that
\begin{align*}
    \beta_T=8\sigma^2\log(N(\cR,\alpha,\Vert\cdot\Vert_\infty)/\delta)
    +2\alpha T(8r_{\max}+\sqrt{8\sigma^2\ln(4T^2/\delta)})
\end{align*}
can also be bounded logarithmically in $T$.
\end{remark}

\subsection{Bounding The Width Of The Confidence Set}

\paragraph{Notations and Definitions}
Here, we introduce some notation used in this section.
We define the width function $w_\cF(x)=\sup_{\underline{f},\overline{f}\in\cF}|\underline{f}(x)-\overline{f}(x)|$.
Throughout this section, we use the notation $x_{H_t+h}$ with $H_t=(t-1)H$ to describe elements of a sequence $x_1,...,x_{HT}$. The idea behind it is that we can later define $x_{H_t+h}=(h,s_h^{t},a_h^{t},\bmu^{t}_{M^*,h})$ and apply the results in this section to our setting.
Furthermore, for any function $g$ we write $\|g\|_{2,E_t}^2=\sum_{i=1}^{t-1}\sum_{h=1}^Hg^2(x_{H_t+h})$.

\begin{lemma}[Proposition 3 of \citet{Eluder}]\label{lem: eluder bound number of large widths}
    If $(\beta_t)_{t\in\NN}$ is a positive non-decreasing sequence, $(\hat f_t)_t$ some function sequence and $\cF_t:=\{f\in\cF:\|f-\hat f_t\|_{2,E_t}\leq\sqrt{\beta_t}\}$ then with probability 1, for all $T\in\NN$,
    \begin{align*}
        \sum_{t=1}^T\sum_{h=1}^H\II\{w_{\cF_t}(x_{H_t+h})>\epsilon\}
        \leq\left(\frac{4\beta_T}{\epsilon^2}+H\right)\dim_E(\cF,\epsilon)
    \end{align*}
    for all $T\in\NN$ and $\epsilon>0$.
\end{lemma}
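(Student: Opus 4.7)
The plan is to adapt the classical Russo--Van Roy argument used to prove Proposition 3 in the original Eluder paper. The key idea is that each point with large width provides a certificate — a pair of functions in the current confidence set whose values disagree at that point — and the shrinking structure of the confidence sets bounds the ``total disagreement budget'' these certificates can consume along the sequence.

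First I would extract, for each wide point $x_{H_t+h}$ (i.e.\ one where $w_{\cF_t}(x_{H_t+h}) > \epsilon$), a pair $\underline f_{t,h}, \overline f_{t,h} \in \cF_t$ with $\overline f_{t,h}(x_{H_t+h}) - \underline f_{t,h}(x_{H_t+h}) > \epsilon$. Because both functions belong to $\cF_t = \{f : \|f - \hat f_t\|_{2,E_t} \leq \sqrt{\beta_t}\}$, the triangle inequality in the seminorm $\|\cdot\|_{2,E_t}$ gives
\[
\|\overline f_{t,h} - \underline f_{t,h}\|_{2,E_t}^2 = \sum_{i<t}\sum_{h'=1}^H (\overline f_{t,h}(x_{H_i+h'}) - \underline f_{t,h}(x_{H_i+h'}))^2 \leq 4\beta_t \leq 4\beta_T.
\]

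Second, I would concatenate the wide points in chronological order into a sequence $z_1,\ldots,z_L$ and, for each $z_k$ coming from episode $t$, count the maximum number $N_k$ of pairwise disjoint subsequences of $\{z_j : j < k,\ z_j \text{ belongs to episode } < t\}$ on which $z_k$ is $\epsilon$-dependent. Using the contrapositive of the $\epsilon$-independence definition, any such dependent subsequence $\{z_{j_1},\ldots,z_{j_m}\}$ forces $\sum_{\ell} (\overline f_{t,h}(z_{j_\ell}) - \underline f_{t,h}(z_{j_\ell}))^2 > \epsilon^2$, since otherwise the certificate pair $(\overline f_{t,h},\underline f_{t,h})$ would witness $\epsilon$-independence. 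Summing over the $N_k$ disjoint dependent subsequences and combining with the $4\beta_T$ bound above yields $N_k \epsilon^2 \leq 4\beta_T$, i.e.\ $N_k \leq 4\beta_T/\epsilon^2$.

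Third, I would carry out a greedy/pigeonhole argument: process the $z_k$'s in order and assign each to the first bucket $B_j$ whose current content plus $z_k$ remains $\epsilon$-independent; open a new bucket if no such $B_j$ exists. By construction every $B_j$ is $\epsilon$-independent, so the number of buckets is at most $\dim_E(\cF,\epsilon)$. A $z_k$ placed in bucket $B_j$ when $|B_j|$ already equals some value $m$ must have been $\epsilon$-dependent on the current contents of the first $j-1$ buckets, contributing $j-1$ disjoint dependent subsequences from prior episodes. Accounting for same-episode wide points (for which $\cF_t$ has not yet been updated, so they do not contribute to the ``past data'' and must be handled separately), each bucket can hold at most $N_k + H = 4\beta_T/\epsilon^2 + H$ points, giving $L \leq (4\beta_T/\epsilon^2 + H)\dim_E(\cF,\epsilon)$.

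The main technical obstacle will be the third step, in particular bookkeeping the within-episode structure: the confidence set $\cF_t$ is constant over the $H$ points of episode $t$, so a wide $z_k$ cannot be counted as ``dependent on'' another wide point from the same episode via the squared-difference bound. This is precisely what forces the additive $H$ in the final bound instead of the cleaner $+1$ that appears in the original single-observation-per-round Eluder lemma. I would verify that the greedy assignment still works by showing that at most $H$ same-episode wide points can end up in any single bucket, and that cross-episode dependencies are bounded by $N_k$.
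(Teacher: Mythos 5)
Your overall strategy is the same as the paper's (both are the standard Proposition-3-of-Russo--Van-Roy argument adapted to $H$ observations per round), and your first two steps are correct: the certificate pair $(\underline f,\overline f)$ with squared disagreement budget $4\beta_T$ along the past data, and the conclusion that a wide point can be $\epsilon$-dependent on fewer than $4\beta_T/\epsilon^2$ disjoint subsequences of strictly-earlier-episode points, are exactly what the paper establishes.

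However, your third step has the pigeonhole transposed, and as written it would fail. From ``every $B_j$ is $\epsilon$-independent'' you conclude ``the number of buckets is at most $\dim_E(\cF,\epsilon)$'' --- that is a non sequitur: $\epsilon$-independence of a bucket bounds the \emph{size of that bucket} by $\dim_E(\cF,\epsilon)$, not the number of buckets. Likewise, ``each bucket can hold at most $4\beta_T/\epsilon^2+H$ points'' does not follow from anything you set up. The correct accounting (which is what the paper does, phrased with a fixed number of buckets rather than a greedy) is the transpose of yours: each bucket, being an $\epsilon$-independent sequence, has size at most $d:=\dim_E(\cF,\epsilon)$, so $L$ wide points force you to open at least $L/d$ buckets; the element that opens bucket number $J$ is $\epsilon$-dependent on the $J-1$ disjoint subsequences $B_1,\dots,B_{J-1}$ of its predecessors, of which at least $J-1-(H-1)=J-H$ consist entirely of points from strictly earlier episodes (at most $H-1$ predecessors share its episode, and each lies in at most one bucket); combining with your step two gives $J-H\le 4\beta_T/\epsilon^2$, hence $L/d\le J\le 4\beta_T/\epsilon^2+H$ and $L\le(4\beta_T/\epsilon^2+H)\,d$. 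So the final bound and all the ingredients are right, but the roles of $\dim_E(\cF,\epsilon)$ and $4\beta_T/\epsilon^2+H$ in your bucket argument must be swapped for the proof to go through.
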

\begin{proof}
    First we show that for any $\tau=H_t+h<TH$, if $w_{\cF_t}(x_\tau)>\epsilon$ then $x_\tau$ is $(\cF,\epsilon)$-dependent on fewer than $4\beta_T/\epsilon^2$ disjoint subsequences of $(x_1,...,x_{H_t})$. Suppose $w_{\cF_t}(x_\tau)>\epsilon$. Then, there are $f,\tilde f\in\cF_t$ such that $|f(x_\tau)-\tilde f(x_\tau)|>\epsilon$. Furthermore, let $(x_{i_1},...,x_{i_k})$ be a subsequence of $(x_1,...,x_{H_t})$ on which $x_\tau$ is $(\cF,\epsilon)$-dependent. This implies, by definition, that $\sum_{j=1}^k(f(x_{i_j})-\tilde f(x_{i_j}))^2>\epsilon^2$. If $x_\tau$ is $(\cF,\epsilon)$-dependent on $K$ disjoint subsequences of $(x_1,...,x_{H_t})$ then we must have
    \begin{align*}
        \|f-\tilde f\|_{2,E_t}^2=\sum_{i=1}^{t-1}\sum_{h=1}^H(f(x_{H_i+h})-\tilde f(x_{H_i+h}))^2
        \geq\sum_{l=1}^K\sum_{j=1}^{k_l}(f(x_{i^l_j})-\tilde f(x_{i^l_j}))^2
        >K\epsilon^2.
    \end{align*}
    By the triangle inequality, $\|f-\tilde f\|_{2,E_t}\leq\|f-\hat f_t\|_{2,E_t}+\|\tilde f-\hat f_t\|_{2,E_t}\leq2\sqrt{\beta_t}\leq2\sqrt{\beta_T}$. Combining these two inequalities, we get $K<4\beta_T/\epsilon^2$.

    Next, we show that in any sequence $(y_1,...,y_l)$ there is an element $y_j$ which is $(\cF,\epsilon)$-dependent on at least $l/d-1$ disjoint subsequences of $(y_1,...,y_{j-1})$, where $d=\dim_E(\cF,\epsilon)$. Let $K$ be an integer with $Kd+1\leq l\leq Kd+d$. We will construct $K$ disjoint subsequences $B_1,...,B_K$. First, $B_i=(y_i)$ for all $i\in[K]$. If $y_{K+1}$ is already $(\cF,\epsilon)$-dependent on $B_1,...,B_K$, we are done. Otherwise, select a $B_i$ of which $y_{K+1}$ is $(\cF,\epsilon)$-independent and append $y_{K+1}$ to $B_i$. We repeat this for $y_{K+2},y_{K+3},...$ until we find $y_j$ that is $(\cF,\epsilon)$-dependent on each subsequence or until we have reached $y_l$. In the latter case, each element of a subsequence $B_i$ is independent of its predecessors and hence $|B_i|=d$. Then, $y_l$ must be $(\cF,\epsilon)$-dependent on each subsequence, by definition of the eluder dimension. In both cases we find an element in $(y_1,...,y_l)$ that is $(\cF,\epsilon)$-dependent on $K\geq t/d-1$ disjoint subsequences.

    Finally, let $(y_1,...,y_l)=(x_{i_1},...,x_{i_l})$ be a subsequence of $(x_1,...,x_{TH})$ consisting of all elements $x_{H_t+h}$ for which $w_{\cF_t}(x_{H_t+h})>\epsilon$. From before, we know there is some $y_j$ that is $(\cF,\epsilon)$-dependent on at least $l/d-1$ disjoint subsequences of $(y_1,...,y_{j-1})$. Let $t,h$ be such that $y_j=x_{H_t+h}$. Note that in $(y_1,...,y_{j-1})$ there are at most $H-1$ elements $y_i=x_{H_t+h'}$ for some $h'<h$. From this follows that $y_j=x_{H_t+h}$ is $(\cF,\epsilon)$-dependent on at least $l/d-1-(H-1)=l/d-H$ disjoint subsequences of $(y_1,...,y_{j-H})\subseteq(x_1,...,x_{H_t})$. Now, as we have also shown, $x_{H_t+h}$ is $(\cF,\epsilon)$-dependent on fewer than $4\beta_T/\epsilon^2$ disjoint subsequences of $(x_1,...,x_{H_t})$. Combining these two bounds, we get $l/d-H\leq4\beta_T/\epsilon^2$, and therefore $l\leq(4\beta_T/\epsilon^2+H)d$.
\end{proof}

\begin{restatable}[Variant of Lemma 2 in \citet{Eluder}]{lemma}{LemEluderConfSetBound}\label{lem: eluder confidence set bound}
    Let $(\beta_t)_{t\in\NN}$ be a positive non-decreasing sequence, $(\hat f_t)_t$ some function sequence and $\cF_t:=\{f\in\cF:\Vert f-\hat f_t\Vert_{2,E_t}\leq\sqrt{\beta_t}\}$. Let $w_{\cF}(x)\leq C$ for all $x$. Then, for all $T\in\NN$ and $\epsilon>0$,
    \begin{align*}
        \sum_{t=1}^T\sum_{h=1}^Hw_{\cF_t}(x_{H_t+h})
        \leq\epsilon HT+CH\dim_E(\cF,\epsilon)\\
        +4\sqrt{\beta_TH\dim_E(\cF,\epsilon)T}.
    \end{align*}
\end{restatable}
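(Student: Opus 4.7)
The plan is to follow the classical Russo–Van Roy argument for turning a ``few large widths'' counting lemma into a bound on the cumulative width, adapted to our $H$-indexed setting where each of the $T$ rounds contributes $H$ domain points.

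First I would reindex the double sum as a single sequence. Write $(w_j)_{j=1}^{HT}$ for the values $w_{\cF_t}(x_{H_t+h})$ listed in some order, and then reorder so that $w_1 \geq w_2 \geq \cdots \geq w_{HT}$. The target sum is then just $\sum_{j=1}^{HT} w_j$. I split this into the indices where $w_j \leq \epsilon$ and those where $w_j > \epsilon$; the former contribute at most $\epsilon H T$, which produces the first term in the bound. Everything else will come from controlling the ``large'' widths.

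Next I would use Lemma~\ref{lem: eluder bound number of large widths} as a pigeonhole device. For any $\alpha > \epsilon$, the number of indices $j$ with $w_j > \alpha$ is at most $(4\beta_T/\alpha^2 + H)\dim_E(\cF,\alpha) \leq (4\beta_T/\alpha^2 + H)\dim_E(\cF,\epsilon)$, using monotonicity of the eluder dimension in its second argument. Applying this with $\alpha = w_j$ (for a $j$ where $w_j > \epsilon$) gives $j \leq (4\beta_T/w_j^2 + H)\dim_E(\cF,\epsilon)$, which rearranges to
\begin{equation*}
w_j \;\leq\; \sqrt{\frac{4\beta_T\,\dim_E(\cF,\epsilon)}{\,j - H\dim_E(\cF,\epsilon)\,}}
\qquad\text{whenever } j > H\dim_E(\cF,\epsilon) \text{ and } w_j > \epsilon.
\end{equation*}
For the first $H\dim_E(\cF,\epsilon)$ indices I would fall back on the trivial bound $w_j \leq C$, contributing at most $CH\dim_E(\cF,\epsilon)$ in total, which accounts for the second term.

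Finally I would sum the tail: by the display above,
\begin{equation*}
\sum_{j=H\dim_E(\cF,\epsilon)+1}^{HT} w_j
\;\leq\; \sum_{k=1}^{HT} \sqrt{\frac{4\beta_T\,\dim_E(\cF,\epsilon)}{k}}
\;\leq\; 2\sqrt{4\beta_T\,\dim_E(\cF,\epsilon)}\cdot \sqrt{HT}
\;=\;4\sqrt{\beta_T\,\dim_E(\cF,\epsilon)\,HT},
\end{equation*}
using $\sum_{k=1}^{n} k^{-1/2} \leq 2\sqrt{n}$. Combining the three pieces yields exactly the claimed bound. The main subtlety I expect is just careful bookkeeping: making sure that the counting lemma is invoked with a threshold strictly between $\epsilon$ and $w_j$ (or verifying the edge case $w_j = \epsilon$ separately), and correctly handling the first $H\dim_E(\cF,\epsilon)$ terms where the square-root bound becomes vacuous and the trivial bound $C$ must be used instead. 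No genuinely hard step is involved once these indices are separated.
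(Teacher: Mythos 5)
Your proposal is correct and follows essentially the same route as the paper's proof: reorder the $HT$ widths decreasingly, bound the sub-$\epsilon$ widths by $\epsilon HT$, bound the first $H\dim_E(\cF,\epsilon)$ large widths by $C$, invert the counting bound of Lemma~\ref{lem: eluder bound number of large widths} to get $w_{i_k}\leq\sqrt{4\beta_T\dim_E(\cF,\epsilon)/(k-H\dim_E(\cF,\epsilon))}$, and sum the resulting $k^{-1/2}$ tail. Your explicit appeal to monotonicity of the eluder dimension in its second argument is in fact a slightly cleaner justification of the inversion step than the paper's phrasing.
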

\begin{proof}
    We abbreviate $w_{H_t+h}=w_{\cF_t}(x_{H_t+h})$ and $d=\dim_E(\cF,\epsilon)$. Let $w_{i_1}\geq...\geq w_{i_{HT}}$. Using this ordering of the sequence, $w_{i_k}>\epsilon$ implies that $\sum_{j=1}^T\II\{w_j>\epsilon\}\geq k$. By Lemma~\ref{lem: eluder bound number of large widths}, this would mean $k\leq(4\beta_T/\epsilon^2+H)d$ or, equivalently, $\epsilon<\sqrt{4\beta_Td/(k-Hd)}$. Now, since $w_{i_k}>\epsilon$ implies $\epsilon<\sqrt{4\beta_Td/(k-Hd)}$, this means that $w_{i_k}<\sqrt{4\beta_Td/(k-Hd)}$.

    In the following, we bound the first and largest widths $w_{i_1},...,w_{i_{Hd}}$ by $C$ and the remaining widths (larger than $\epsilon$) by the previously established bound.
    \begin{align*}
        \sum_{t=1}^T\sum_{h=1}^Hw_{H_t+h}
        &=\sum_{k=1}^{HT}\II\{w_k\leq\epsilon\}w_k+\sum_{k=1}^{HT}\II\{w_k>\epsilon\}w_k
        \leq\epsilon HT+\sum_{k=1}^{HT}\II\{w_k>\epsilon\}w_k\\
        &\leq\epsilon HT+HdC+\sum_{k=Hd+1}^{HT}\II\{w_{i_k}>\epsilon\}w_{k_t}\\
        &\leq\epsilon HT+HdC+\sum_{k=Hd+1}^{HT}\sqrt{4\beta_Td/(k-Hd)}\\
        &\leq\epsilon HT+HdC+\sqrt{4d\beta_T}\int_0^{HT}\frac{1}{\sqrt{x}}dx\\
        &=\epsilon HT+HdC+4\sqrt{d\beta_THT}
    \end{align*}
\end{proof}

\newpage

\section{PROOF OF THEOREM~\ref{thm: incentive designer regret unknown reward}}\label{appx:proof_non_zero_rew}
\subsection{Algorithm Details}\label{appx:alg_nonzero_reward}
We present our full algorithm for the unknown reward setting in Alg.~\ref{alg: incentive design unknown reward}.
\begin{algorithm}
\caption{Steering reward design for Scenario 2}
\label{alg: incentive design unknown reward}
\begin{algorithmic}[1]
    \State Initialize $\cP^{1}:=$ set of all possible transition functions, $\pi_*^{1}$ (arbitrarily), $k=1,T_0=0$.
    \For {$t=1,...,T$}
        \State Update $\hat\cR^{t}$ as in (\ref{eq: reward confidence set}).
        \State Choose $R_{\nz}^{t}$ as in (\ref{eq: reward modification}).
        \State Agents play $t$-th game with $r^*+R_{\nz}^{t}$.
        \State Obtain trajectory $((s^{t}_h,a^{t}_h,r^t_h))_{h=1}^H$.
        \If {$\exists(h,s,a),~s.t.~n_k(h,s,a)\geq N_k(h,s,a)$}\label{line:if-condition}
            \State Update $\cP^{k+1}$ as in (\ref{eq: update transition confidence set}).
            \State $T_k\gets t$; $k\gets k+1$.
            \State
            $
            \pi_*^{k},\hat{M}^{k}\gets \argmax_{\pi\in\Pi,\hat{M}:\PP_{\hat M}\in\cP^{k}} U(\sad^\pi_{\hat{M}}).
            $\label{line:opt_policy}
        \EndIf
    \EndFor
\end{algorithmic}
\end{algorithm}

\subsection{Missing Proofs}
\begin{lemma}[Proposition 2 in \citet{Eluder}]\label{lem: true unknown reward in confidence set}
    Let $N(\cR,\alpha,\Vert\cdot\Vert_\infty)$ be the $\alpha$-covering number of $\cR$ w.r.t.\ the $\Vert\cdot\Vert_\infty$-norm. Let $\delta>0,\alpha>0$, and for each $t$, $\beta_t=8\sigma^2\log(N(\cR,\alpha,\Vert\cdot\Vert_\infty)/\delta)
        +2\alpha t(8r_{\max}+\sqrt{8\sigma^2\ln(4t^2/\delta)})$.
    With probability at least $1-2\delta$, $r^*\in\bigcap_{t=1}^\infty\hat\cR^{t}$.
\end{lemma}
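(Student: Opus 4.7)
\textbf{Proof Proposal for Lemma~\ref{lem: true unknown reward in confidence set}.}

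The plan is to follow the standard self-normalized concentration argument for least-squares estimators with sub-Gaussian noise, as in Russo–Van Roy. The starting point is the optimality of $\bar r^{t}$. Writing $r^i_h = r^*_h(s^i_h,a^i_h,\bmu^i_{M^*,h}) + \xi^i_h$ and expanding the squared loss, one obtains for any $f \in \cR$ the identity
\begin{equation*}
    \sum_{i=1}^{t-1}\sum_{h=1}^H \bigl((f - r^i)(x^i_h)\bigr)^2 - \bigl((r^* - r^i)(x^i_h)\bigr)^2 = \|f - r^*\|_{2,E_t}^2 - 2\sum_{i,h} \xi^i_h (f - r^*)(x^i_h),
\end{equation*}
where I write $x^i_h := (h,s^i_h,a^i_h,\bmu^i_{M^*,h})$. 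Applying this with $f = \bar r^{t}$ and using $\bar r^{t} \in \argmin_{\hat r \in \cR} \sum_{i,h}(\hat r(x^i_h) - r^i_h)^2$, the LHS is $\leq 0$, so
\begin{equation*}
    \|\bar r^{t} - r^*\|_{2,E_t}^2 \leq 2\sum_{i,h}\xi^i_h (\bar r^{t} - r^*)(x^i_h).
\end{equation*}

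Next I would control the noise term uniformly over $\cR$. For a \emph{fixed} $f \in \cR$, the sequence $\xi^i_h(f - r^*)(x^i_h)$ is a martingale difference sequence (with respect to the natural filtration generated by the trajectories and noises up through iteration $i$ and step $h$), whose conditional sub-Gaussian parameter is $\sigma |(f-r^*)(x^i_h)|$. A standard self-normalized / Azuma-type inequality then gives
\begin{equation*}
    \sum_{i,h}\xi^i_h (f-r^*)(x^i_h) \leq \sqrt{2\sigma^2 \|f - r^*\|_{2,E_t}^2 \ln(1/\delta')}
\end{equation*}
with probability $\geq 1-\delta'$, for each fixed $f$ and $t$. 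To upgrade this to a uniform bound over $\cR$, I take an $\alpha$-cover $\cR^\alpha$ of $\cR$ in the $\|\cdot\|_\infty$ norm with $|\cR^\alpha| = N(\cR,\alpha,\|\cdot\|_\infty)$, apply the above bound to every $f \in \cR^\alpha$ via union bound with $\delta' = \delta/(t^2 N(\cR,\alpha,\|\cdot\|_\infty))$, and then union bound over all $t \geq 1$ using $\sum_t 1/t^2 \leq 2$. The covering error transfers to arbitrary $\hat r \in \cR$ with an additional term of order $\alpha t(r_{\max} + \sigma\sqrt{\ln(t^2/\delta)})$, which comes from bounding $|\xi^i_h \cdot \alpha|$ deterministically after conditioning on the sub-Gaussian noise being $\cO(\sigma\sqrt{\ln(1/\delta)})$ on a good event.

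Combining these ingredients yields, with probability at least $1-2\delta$, simultaneously for all $t$,
\begin{equation*}
    \|\bar r^{t} - r^*\|_{2,E_t}^2 \leq 2\sqrt{2\sigma^2 \|\bar r^{t} - r^*\|_{2,E_t}^2 \ln(N(\cR,\alpha,\|\cdot\|_\infty)/\delta)} + 2\alpha t(8r_{\max} + \sqrt{8\sigma^2 \ln(4t^2/\delta)}).
\end{equation*}
Solving this quadratic-type inequality in $\|\bar r^{t} - r^*\|_{2,E_t}$ and absorbing constants produces $\|\bar r^{t} - r^*\|_{2,E_t}^2 \leq \beta_t$ with the stated $\beta_t$, which exactly says $r^* \in \hat\cR^{t}$ for all $t$. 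The main obstacle is the uniform control of the noise cross term over the (possibly infinite, non-parametric) class $\cR$: handling this requires the $\alpha$-covering trick and a careful choice of the failure probabilities so that the additive covering error enters $\beta_t$ in the form stated, while the leading log-cardinality term dominates. All other steps are routine once the least-squares optimality inequality and the sub-Gaussian Azuma bound are in hand.
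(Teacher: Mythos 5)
The paper gives no proof of this lemma, citing it directly as Proposition 2 of \citet{Eluder}, and your argument is a correct reconstruction of the standard proof of that cited result: the least-squares optimality identity, the self-normalized martingale bound on the noise cross term, the $\alpha$-covering union bound with the discretization error absorbed into the second term of $\beta_t$, and the final quadratic inequality. Your approach matches the source up to routine constant bookkeeping, so there is nothing to flag.
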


\begin{lemma}\label{lem: expected confidence width bound}
    We abbreviate $\bmu^t=\bmu^t_{M^*}$. With probability at least $1-\delta$,
    \begin{align*}
        \sum_{t=1}^T\langle w_{\hat\cR^{t}}(\bmu^{t}),\bmu^{t}\rangle
        \leq3\sum_{t=1}^T\sum_{h=1}^Hw_{\hat\cR^{t}}(h,s_h^{t},a_h^{t},\bmu^{t})+r_{\max}H\ln(1/\delta).
    \end{align*}
\end{lemma}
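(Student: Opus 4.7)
\textbf{Proof plan for Lem.~\ref{lem: expected confidence width bound}.} The approach is to interpret the LHS as a conditional expectation of the trajectory-based RHS and then apply a multiplicative Chernoff / Freedman-style bound for sums of bounded non-negative random variables. Let $\mathcal{F}^{t-1}$ denote the $\sigma$-algebra generated by all observations through the agents' policy choices at the start of round $t$; then both the confidence set $\hat\cR^t$ (built from data up to round $t-1$) and the population density $\bmu^t=\frac1N\sum_n\mu^{\pi^{n,t}}$ are $\mathcal{F}^{t-1}$-measurable, and hence so are the width function $w_{\hat\cR^t}(\cdot,\cdot,\cdot,\bmu^t)$ and the vector $\langle w_{\hat\cR^t}(\bmu^t),\bmu^t\rangle$.

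The first step is to observe, by the sampling protocol in Procedure~\ref{alg: agent-mediator interaction}, that the trajectory $(s_h^t,a_h^t)_{h=1}^H$ is drawn by picking $n\sim\unif[N]$ and rolling out $\pi^{n,t}$ in $M^*$. Marginalizing over $n$, the distribution of $(s_h^t,a_h^t)$ at each horizon step $h$ is exactly $\frac1N\sum_n\mu^{\pi^{n,t}}_{M^*,h}=\bmu^t_h$. Defining $Y_t:=\sum_{h=1}^H w_{\hat\cR^t}(h,s_h^t,a_h^t,\bmu^t)\in[0,r_{\max}H]$, linearity of conditional expectation then gives $\mathbb{E}[Y_t\mid\mathcal{F}^{t-1}]=\sum_{h,s,a}w_{\hat\cR^t}(h,s,a,\bmu^t)\bmu^t_h(s,a)=\langle w_{\hat\cR^t}(\bmu^t),\bmu^t\rangle$. (Note that within a single round the samples at different $h$ are correlated through the shared $n$, but this correlation affects only the variance, not the mean.)

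The second step is the concentration argument. Using convexity of $x\mapsto e^{-\alpha\lambda x}$ on $[0,b]$ with $b=r_{\max}H$, one has the single-step MGF bound
\begin{equation*}
    \mathbb{E}\bigl[e^{-\alpha\lambda Y_t}\mid\mathcal{F}^{t-1}\bigr]\leq\exp\!\left(-\tfrac{1-e^{-\alpha\lambda b}}{b}\,\mathbb{E}[Y_t\mid\mathcal{F}^{t-1}]\right).
\end{equation*}
Choosing $\alpha=3$ and $\lambda$ at the largest value satisfying $\lambda b\leq 1-e^{-\alpha\lambda b}$ (so $\lambda\asymp 1/b$), the process $\mathcal{M}_T:=\exp\!\left(\lambda\sum_{t\leq T}\bigl(\mathbb{E}[Y_t\mid\mathcal{F}^{t-1}]-3Y_t\bigr)\right)$ is a non-negative supermartingale with $\mathcal{M}_0=1$. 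Applying Ville's / Markov's inequality yields, with probability at least $1-\delta$, $\sum_t\mathbb{E}[Y_t\mid\mathcal{F}^{t-1}]\leq 3\sum_tY_t+\tfrac{\log(1/\delta)}{\lambda}$, which combined with Step 1 gives the stated inequality (absorbing the absolute constant from $\lambda b\asymp 1$ into the $r_{\max}H\ln(1/\delta)$ term).

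The main obstacle is purely technical rather than conceptual: one has to set up the filtration so that $\bmu^t$ and $\hat\cR^t$ act as deterministic within round $t$ while only the trajectory is random, and one has to tune the pair $(\alpha,\lambda)$ to match the stated constants $(3,1)$. The identity in Step 1 is the crucial link that turns the inequality into a standard one-sided concentration for bounded non-negative martingale differences; given that identity, the exponential supermartingale in Step 2 is the canonical tool.
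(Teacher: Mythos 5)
Your proposal is correct and follows essentially the same route as the paper: the paper likewise identifies $\langle w_{\hat\cR^{t}}(\bmu^{t}),\bmu^{t}\rangle$ as the conditional expectation of $\sum_h w_{\hat\cR^{t}}(h,s_h^{t},a_h^{t},\bmu^{t})$ given the round-$t$ information, notes the bound $0\leq X_t\leq r_{\max}H$, and then invokes a black-box multiplicative concentration lemma (Lemma D.4 of \citet{huang_statistical_2023}) for bounded non-negative adapted sequences, which is exactly the exponential-supermartingale argument you sketch in your second step. The only difference is that you re-derive that concentration inequality rather than citing it (and your tuning of $(\alpha,\lambda)$ may not reproduce the constants $(3,1)$ exactly, but this is absorbed by the cited lemma and is immaterial to the downstream results).
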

\begin{proof}
    Note that $\langle w_{\hat\cR^{t}}(\bmu^{t}),\bmu^{t}\rangle=\EE_{(s_h,a_h)_{h=1}^H\sim\bmu^{t}}[\sum_{h=1}^Hw_{\hat\cR^{t}}(h,s_h,a_h,\bmu^{t})]=:Y_t$. Recall that $(s_h^{t},a_h^{t})_h\sim\bmu^{t}$ are the trajectories we gather from the population at step $t$. Therefore, we can define $X_t:=\sum_{h=1}^Hw_{\hat\cR^{t}}(h,s_h^{t},a_h^{t},\bmu^{t})$ with $\EE[X_t|\bmu^{t}]=Y_t$. By the assumption that $r^*$ is bounded in $[0,r_{\max}]$, we have that $w_{\hat\cR}(h,s,a,\sad)\leq r_{\max}$ for any $\sad,h,s,a$ and $\hat\cR\subseteq\cR$. Therefore, $0\leq X_t\leq r_{\max}H$. A direct application of Lemma D.4 from \citet{huang_statistical_2023} shows that with probability at least $1-\delta$,
    \begin{align*}
        \sum_{t=1}^TY_t\leq3\sum_{t=1}^TX_t+r_{\max}H\ln\frac1\delta.
    \end{align*}
\end{proof}

\begin{restatable}{lemma}{LemUnknownRewardAgentRegret}\label{lem: unknown reward agent regret}
    We abbreviate $\sad_*^{k}=\sad_{M^*}^{\pi_*^{k}},\bmu^t=\bmu^t_{M^*}$. If the true $r^*$ is contained in all $\hat\cR^{t}$, then, with probability at least $1-\delta$,
    \begin{align*}
        \sum_{t=1}^T\langle R_{\pi_*^{k(t)}}(\bmu^{t}),\sad_*^{k(t)}-\bmu^{t}\rangle
        \leq\sum_{t=1}^T\langle r^*(\bmu^{t})+R_{\nz}^{t}(\bmu^{t}),\sad_*^{k(t)}-\bmu^{t}\rangle
        +6\sum_{t=1}^T\sum_{h=1}^Hw_{\hat\cR^{t}}(h,s_h^{t},a_h^{t},\bmu^{t})+2r_{\max}H\ln\frac1\delta.
    \end{align*}
\end{restatable}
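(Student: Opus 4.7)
\textbf{Proof proposal for Lem.~\ref{lem: unknown reward agent regret}.} The strategy is to rewrite $R_{\pi_*^{k(t)}}(\bmu^{t})$ in terms of $R_{\nz}^{t}(\bmu^{t})$ via the definition in Eq.~\eqref{eq: reward modification}, isolate a residual governed by the reward estimation gap, and then invoke pessimism together with the concentration of Lem.~\ref{lem: expected confidence width bound}.

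Solving Eq.~\eqref{eq: reward modification} for $R_{\pi_*^{k(t)}}(\bmu^{t})$ expresses it as the sum of $R_{\nz}^{t}(\bmu^{t})$, the pessimistic estimate $\bar r^{t}(\bmu^{t}) - w_{\hat\cR^{t}}(\bmu^{t})$, and a scalar shift $c_t\bm1$. When paired against $\sad_*^{k(t)} - \bmu^{t}$, the shift vanishes: both vectors are state-action densities with total mass $H$, so $\langle \bm1, \sad_*^{k(t)} - \bmu^{t}\rangle = 0$. Adding and subtracting $r^*(\bmu^{t})$ then yields
$$\langle R_{\pi_*^{k(t)}}(\bmu^{t}), \sad_*^{k(t)} - \bmu^{t}\rangle = \langle r^*(\bmu^{t}) + R_{\nz}^{t}(\bmu^{t}), \sad_*^{k(t)} - \bmu^{t}\rangle + \langle v_t, \sad_*^{k(t)} - \bmu^{t}\rangle,$$
where $v_t := \bar r^{t}(\bmu^{t}) - w_{\hat\cR^{t}}(\bmu^{t}) - r^*(\bmu^{t})$.

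Under the lemma's hypothesis $r^* \in \hat\cR^{t}$ for every $t$, the definition of $w_{\hat\cR^{t}}$ yields the entrywise bounds $v_t \leq 0$ and $r^*(\bmu^{t}) - \bar r^{t}(\bmu^{t}) \leq w_{\hat\cR^{t}}(\bmu^{t})$. Nonnegativity of $\sad_*^{k(t)}$ makes $\langle v_t, \sad_*^{k(t)}\rangle \leq 0$, so the residual is controlled by
$$\langle v_t, \sad_*^{k(t)} - \bmu^{t}\rangle \leq -\langle v_t, \bmu^{t}\rangle = \langle r^*(\bmu^{t}) - \bar r^{t}(\bmu^{t}) + w_{\hat\cR^{t}}(\bmu^{t}), \bmu^{t}\rangle \leq 2\langle w_{\hat\cR^{t}}(\bmu^{t}), \bmu^{t}\rangle.$$
Summing over $t$ and applying Lem.~\ref{lem: expected confidence width bound} to translate $\sum_t \langle w_{\hat\cR^{t}}(\bmu^{t}), \bmu^{t}\rangle$ into the empirical sum over sampled trajectories contributes an extra factor of $3$ (producing the coefficient $6$) together with the additive $r_{\max} H \ln(1/\delta)$ term (doubled to $2r_{\max} H \ln(1/\delta)$), which is precisely the claimed bound.

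The main care point is the vanishing of the $\bm1$-shift, which relies on $\sad_*^{k(t)}$ and $\bmu^{t}$ being genuine state-action densities with equal total mass, together with correct sign tracking so that pessimism ($\bar r^{t} - w_{\hat\cR^{t}} \leq r^*$) points in the useful direction. Once these are verified, the remaining steps are routine algebra plus the two cited lemmas.
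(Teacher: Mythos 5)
Your proposal is correct and follows essentially the same route as the paper's proof: the same decomposition via Eq.~\eqref{eq: reward modification} (with the $\bm1$-shift vanishing against the mass-$H$ densities), the same use of pessimism to drop $\langle v_t,\sad_*^{k(t)}\rangle\leq 0$ and bound the remainder by $2\langle w_{\hat\cR^{t}}(\bmu^{t}),\bmu^{t}\rangle$, and the same application of Lem.~\ref{lem: expected confidence width bound} to obtain the factors $6$ and $2r_{\max}H\ln(1/\delta)$.
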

\begin{proof}
    Let $t\in[T]$ and $k=k(t)$. By Eq.~\eqref{eq: reward modification},
    \begin{align*}
        \langle R_{\pi_*^{k}}(\bmu^{t}),\sad_*^{k}-\bmu^{t}\rangle
        =\langle r^*(\bmu^{t})+R_{\nz}^{t}(\bmu^{t}),\sad_*^{k}-\bmu^{t}\rangle
        +\langle\bar r^{t}(\bmu^{t})-r^*(\bmu^{t})-w_{\hat\cR^{t}}(\bmu^{t}),\sad_*^{k}-\bmu^{t}\rangle.
    \end{align*}
    With that, we have already separated out the first term (agent regret). Using the assumption that $r^*\in\hat\cR^{t}$ for all $t$, we can bound the second term as follows.
    \begin{align*}
        &\langle\bar r^{t}(\bmu^{t})-r^*(\bmu^{t})-w_{\hat\cR^{t}}(\bmu^{t}),\sad_*^{k}-\bmu^{t}\rangle\\
        &=\langle r^*(\bmu^{t})-\bar r^{t}(\bmu^{t})+w_{\hat\cR^{t}}(\bmu^{t}),\bmu^{t}\rangle
        +\langle\underset{\leq w_{\hat\cR^{t}}(\bmu^{t})}{\underbrace{\bar r^{t}(\bmu^{t})-r^*(\bmu^{t})}}-w_{\hat\cR^{t}}(\bmu^{t}),\sad^{k}_*\rangle\\
        &\leq\langle\underset{\leq w_{\hat\cR^{t}}(\bmu^{t})}{\underbrace{r^*(\bmu^{t})-\bar r^{t}(\bmu^{t})}}+w_{\hat\cR^{t}}(\bmu^{t}),\bmu^{t}\rangle
        \leq2\langle w_{\hat\cR^{t}}(\bmu^{t}),\bmu^{t}\rangle
    \end{align*}
    Finally, we can bound $\sum_{t=1}^T\langle w_{\hat\cR^{t}}(\bmu^{t}),\bmu^{t}\rangle$ using Lemma~\ref{lem: expected confidence width bound}, which implies the result.
\end{proof}

\ThmUtilityRegretUnknownReward*
\begin{proof}
    We abbreviate $\sad_*^{k}=\sad_{M^*}^{\pi_*^{k}},\bmu^t=\bmu^t_{M^*}$. We can use the exact same arguments as in the proof of Theorem~\ref{thm: incentive designer regret}, up until the point where we have to bound
    \begin{align*}
        \texttt{AgentReg}
        =\sum_{t=1}^T\langle R_{\pi_*^{k(t)}}(\bmu^{t}),\sad_*^{k(t)}-\bmu^{t}\rangle.
    \end{align*}
    Combining Lemma~\ref{lem: true unknown reward in confidence set} and Lemma~\ref{lem: unknown reward agent regret} we have with probability at least $1-3\delta$ that
    \begin{align*}
        &\sum_{t=1}^T\langle R_{\pi_*^{k(t)}}(\bmu^{t}),\sad_*^{k(t)}-\bmu^{t}\rangle
        \leq
        \underset{\texttt{NewAgentReg}}{\underbrace{\sum_{t=1}^T\langle r^*(\bmu^t)+R_{\nz}^{t}(\bmu^{t}),\sad_*^{k(t)}-\bmu^{t}\rangle}}
        +6\sum_{t=1}^T\sum_{h=1}^Hw_{\hat\cR^{t}}(h,s_h^{t},a_h^{t},\bmu^{t})+2r_{\max}H\ln\frac1\delta.
    \end{align*}
    Now, we summarize $x_{H_t+h}=(h,s_h^{t},a_h^{t},\bmu^{t}_h)$, where $H_t=H(t-1)$, and with slight abuse of notation, we rewrite $\hat{r}(x_{H_t+h}) = \hat{r}_h(s_h^{t},a_h^{t},\bmu^{t}_h)$. With this rewriting of notation, we can apply Lemma~\ref{lem: eluder confidence set bound} with $\epsilon=T^{-1}$ to show that
    \begin{align*}
        \sum_{t=1}^T\sum_{h=1}^Hw_{\hat\cR^{t}}(h,s_h^{t},a_h^{t},\bmu^{t})
        \leq
        H+r_{\max}H\dim_E(\cR,T^{-1})+4\sqrt{\beta_TH\dim_E(\cR,T^{-1})T}.
    \end{align*}
    Combining the with the previous results, we get that with probability at least $1-3\delta$,
    \begin{align*}
        \sum_{t=1}^T\langle R_{\pi_*^{k(t)}}(\bmu^{t}),\sad_*^{k(t)}-\bmu^{t}\rangle
        &\leq
        \texttt{NewAgentReg}\\
        &\quad+\underset{=:D}{\underbrace{6\left(H+r_{\max}H\dim_E(\cR,T^{-1})+4\sqrt{\beta_TH\dim_E(\cR,T^{-1})T}\right)+2r_{\max}H\ln\frac1\delta}}.
    \end{align*}
    The new agent regret term \texttt{NewAgentReg} can be bounded in the same way as in the proof of Theorem~\ref{thm: incentive designer regret}:
    \begin{align*}
        \sum_{t=1}^T\langle r^*(\bmu^{t})+R_{\nz}^{t}(\bmu^{t}),\sad_*^{k(t)}-\bmu^{t}\rangle
        \leq\sum_{k=1}^K\sum_{t=T_{k-1}+1}^{T_k}\langle r^*(\bmu^{t})+R_{\nz}^{t}(\bmu^{t}),\sad_*^{k}-\bmu^{t}\rangle
        \leq K\adareg(T).
    \end{align*}

    For the steering cost, we have for any $\sad\in\Psi_M$,
    \begin{align*}
        C(\sad,R_{\nz}^{t})-C(\sad,r_{\max}\bm1-r^*)
        &=\langle r^*(\sad)-\bar r^{t}(\sad)+w_{\hat\cR^{t}}(\sad)
        +R_{\pi_*^{k(t)}}(\sad)+\|R_{\pi_*^{k(t)}}(\sad)\|_\infty\bm1,\sad\rangle\\
        &\leq2\langle w_{\hat\cR^{t}}(\sad),\sad\rangle
        +\langle R_{\pi_*^{k(t)}}(\sad)+\|R_{\pi_*^{k(t)}}(\sad)\|_\infty\bm1,\sad\rangle
    \end{align*}
    Then, summing over $t=1,...,T$,
    \begin{align*}
        C_T(\{\bmu^{t},R_{\nz}^{t}-(r_{\max}\bm1-r^*)\}_{t=1}^T)
        \leq2\sum_{t=1}^T\langle w_{\hat\cR^{t}}(\bmu^{t}),\bmu^{t}\rangle
        +\sum_{t=1}^T\langle R_{\pi_*^{k(t)}}(\bmu^{t})+\|R_{\pi_*^{k(t)}}(\bmu^{t})\|_\infty\bm1,\bmu^{t}\rangle.
    \end{align*}
    Using Lemma~\ref{lem: expected confidence width bound}, we can bound the first term by $2(3\sum_{t=1}^T\sum_{h=1}^Hw_{\hat\cR^{t}}(x_{H_t+h})+r_{\max}H\ln(1/\delta))$ with probability at least $1-\delta$. Using Lemma~\ref{lem: eluder confidence set bound} with $\epsilon=T^{-1}$, we can further bound this by $D=2r_{\max}H\ln(1/\delta)+6(H+r_{\max}H\dim_E(\cF,T^{-1})+4\sqrt{\beta_TH\dim_E(\cR,T^{-1})T})$.
    From the steering cost bound in Thm.~\ref{thm: incentive designer regret} follows that the second term is bounded by
    \begin{align*}
        4H\sqrt{T\sum_{t=1}^T\langle R_{\pi_*^{k(t)}}(\bmu^{t}),\sad_*^{k(t)}-\bmu^{t}\rangle},
    \end{align*}
    which is at most $4H\sqrt{T(K\adareg(T)+D)}$, as we have already shown in this proof.

    Lastly, we have to discuss the asymptotic bound for $D$. The term $D$ is dependent on the Eluder dimension of $\cR$ and $\beta_T$. In Appendix~\ref{appx:example_function_class}, we show several common function classes with $\dim_E(\cR,T^{-1})\in\tilde\cO(1)$. Furthermore, if we assume that the functions in $\cR$ are parametrized by parameters in some set $\Theta\subset\RR^d$ with constant diameter and $L$-Lipschitz in that parameter, we have $N(\cR,\alpha,\Vert\cdot\Vert_\infty)\leq N(\Theta,\alpha/L,\Vert\cdot\Vert_\infty)\leq\left(1+\cO(L/\alpha)\right)^d$
Then, we might choose $\alpha=T^{-1}$ such that $\beta_T$ can also be bounded logarithmically in $T$. In the cases where the Eluder dimension and $\beta_T$ are in $\tilde\cO(1)$, we have $D\in\tilde\cO(\sqrt{T})$ (ignoring other factors).
\end{proof}

\section{EXTENSION TO UNKNOWN UTILITY FUNCTION}\label{appx:unknown utility}
In this section, we generalize our previous results to the setting where the mediator does not have prior knowledge of the utility function $U$.
We consider non-zero intrinsic reward setting, i.e., Scenario 2 described in Section \ref{sec:objectives}.
Note that the results for Scenario 1 can be directly derived by setting $r^* = 0$.

\paragraph{Motivation for Unknown Utility Setting}
This setting makes sense, especially when $U$ partially depends on the agents' intrinsic rewards $r^*$.
As a motivating example, in financial markets, the government (mediator) gains benefits (utility $U$) from not only the impact on the society by the desired behaviors of the companies (the agents), but also the tax paid by them, which is directly related to the rewards $r^*$ received by agents.\footnote{Another way to interpret this scenario is that the mediator's utility $U = \alpha U_{\text{mediator}} + (1-\alpha) U_{\text{agents}; r^*}$ can be decomposed to a known function $U_{\text{mediator}}$ representing its intrinsic utility, and another unknown part $U_{\text{agents}; r^*}$, which reflects the agents' interests and depends on $r^*$. Here $\alpha$ serves as a parameter to trade-off the interests between two parties.
}
Due to the lack of knowledge of $r^*$, $U$ should only be partially revealed to the mediator.
This restricts the applicability of our methods to this setting.
However, if $U$ is unknown, we might infer it, for example, by estimating the true reward functions $r^*$ through the online interaction with the agents. We can also generalize this setting as follows.

We consider a general setting, where the mediator does not have prior knowledge on $U$, but it can observe samples from $U$, perturbed by $\sigma_U$-sub-Gaussian noise, and get access to a function class $\cU$ which contains $U$ and whose functions are bounded in $[0,U_{\max}]$.

\subsection{Algorithm}

We can use the standard technique described in \citet{Eluder} to handle this case.
We define
\begin{align}
    \bar{U}^k&=\argmin_{\hat{U}\in\cU}\sum_{t=1}^{T_k}(\hat{U}(\bmu^t)-U(\bmu^t))^2,\\
    \hat{\cU}^k&=\left\{\hat{U}\in\cU:\|\hat{U}-\bar{U}^k\|_{2,E_{T_k}}^2\leq\beta_{k}^U\right\},\label{eq: utility confidence set}
\end{align}
where $\beta_k^U:=8\sigma_U^2\log(N(\cU,\alpha,\Vert\cdot\Vert_\infty)/\delta)+2\alpha k(8U_{\max}+\sqrt{8\sigma_U^2\ln(4k^2/\delta)})$ and, e.g., $\alpha=T^{-1}$.

\begin{algorithm}
\caption{Steering reward design for Scenario 2 and unknown utility}
\label{alg: incentive design unknown reward and utility}
\begin{algorithmic}[1]
    \State Initialize $\cP^{1}:=$ set of all possible transition functions, $\pi_*^{1}$ (arbitrarily), $k=1,T_0=0$.
    \For {$t=1,...,T$}
        \State Update $\hat\cR^{t}$ as in (\ref{eq: reward confidence set}).
        \State Choose $R_{\nz}^{t}$ as in (\ref{eq: reward modification}).
        \State Agents play $t$-th game with $r^*+R_{\nz}^{t}$.
        \State Obtain trajectory $((s^{t}_h,a^{t}_h,r^t_h))_{h=1}^H$.
        \If {$\exists(h,s,a),~s.t.~n_k(h,s,a)\geq N_k(h,s,a)$ or $t-T_{k-1}\geq T_{\mathit{epoch}}$}\label{line:if-condition unknown utility}
            \State Update $\cP^{k+1}$ as in (\ref{eq: update transition confidence set}).
            \State $T_k\gets t$; $k\gets k+1$.
            \State Compute $\hat\cU^k$ as in \eqref{eq: utility confidence set}.\label{line: utility confidence set}
            \State
            $
            \hat{U}^{k},\pi_*^{k},\hat{M}^{k}\gets\argmax_{\hat{U}\in\hat{\cU}^{k},\pi\in\Pi,\hat{M}:\PP_{\hat M}\in\cP^{k}} \hat{U}(\sad^\pi_{\hat{M}}).
            $\label{line:opt_policy unknown utility}
        \EndIf
    \EndFor
\end{algorithmic}
\end{algorithm}

Algorithm~\ref{alg: incentive design unknown reward and utility} differs from Algorithm~\ref{alg: incentive design unknown reward} in the if-condition in line~\ref{line:if-condition unknown utility} as well as in lines \ref{line: utility confidence set} and \ref{line:opt_policy unknown utility}.

The if-condition in line \ref{line:if-condition} now includes the case $t-T_{k-1}\geq T_{\mathit{epoch}}$, where $T_{\mathit{epoch}}$ will be chosen later.
We need this to guarantee $T_k-T_{k-1}\leq T_{\mathit{epoch}}$ for all $k$ and thereby bound the estimation error of the utility function estimate. Intuitively, we need to keep the estimates of $U$ somewhat up to date to be able to bound the estimation error. Meanwhile, we cannot update the estimate in each round (or too often) since then we would also have to change $\pi_*^k$ in each round, which would lead to $K=T$.

Since we also need to estimate the utility function, we changed line~\ref{line:opt_policy unknown utility} to also compute an optimistic estimate of the utility using the definition in \eqref{eq: utility confidence set}.

\subsection{Analysis}

\begin{theorem}\label{thm: incentive designer regret unknown reward unknown utility}
    Under Assump.~\ref{assump:realizability},~\ref{ass: adaptive regret} and~\ref{assump:Lipschitz}, if we run Alg.~\ref{alg: incentive design unknown reward and utility} with $0<\delta<1$, then with probability at least $1-8\delta$, $K\leq T^{1/6}+HSA\log_2T$, and
    \begin{align*}
        \Delta_T(\{\bmu^{t}_{M^*}\}_{t=1}^T)
        &\leq L_U\sqrt{H^3SAT(K\adareg(T)+D)}+36L_UH^3S\sqrt{AT\ln(THSA/\delta)}\\
        &+\cO\left(T^{5/6}U_{\max}\dim_E(\cU,T^{-1})+\sqrt{\beta_K^U\dim_E(\cU,T^{-1})T}\right),\\
        C_T(\{\bmu^{t}_{M^*},R_{\nz}^{t} - & (r_{\max}\cdot \mathbf{1} - r^*)\}_{t=1}^T)\\
        =4H&\sqrt{T(K\adareg(T)+D)}+D,
    \end{align*}
    where $D = \tilde{O}(\sqrt{\beta_TH\dim_E(\cR,T^{-1})T}))$.
\end{theorem}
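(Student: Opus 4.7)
The plan is to extend the proof of Thm.~\ref{thm: incentive designer regret unknown reward} by layering a utility-estimation argument on top of the existing transition and reward analyses. First, I would bound the number of epochs $K$. The if-condition in line~\ref{line:if-condition unknown utility} fires for two reasons: the doubling-type trigger $n_k(h,s,a)\geq N_k(h,s,a)$, which as in Thm.~\ref{thm: incentive designer regret} can be activated at most $HSA\log_2 T$ times; and the new time-limit trigger $t-T_{k-1}\geq T_{\mathit{epoch}}$, which can fire at most $T/T_{\mathit{epoch}}$ times. Choosing $T_{\mathit{epoch}}=T^{5/6}$ yields exactly $K\leq T^{1/6}+HSA\log_2 T$.

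For the steering gap, I would apply the same high-probability argument as Lem.~\ref{lem: true unknown reward in confidence set} but to the utility class (so $U\in\bigcap_k\hat\cU^k$ with probability at least $1-2\delta$) and exploit the optimistic planning of line~\ref{line:opt_policy unknown utility} to obtain $\hat U^{k(t)}(\mu_{\hat M^{k(t)}}^{\pi_*^{k(t)}})\geq U(\mu_{M^*}^{\pi^*})$ on the good event. This gives the per-round decomposition
\begin{align*}
    U(\mu_{M^*}^{\pi^*})-U(\bmu^t)
    \leq\bigl[\hat U^{k(t)}(\mu_{\hat M^{k(t)}}^{\pi_*^{k(t)}})-\hat U^{k(t)}(\bmu^t)\bigr]+\bigl[\hat U^{k(t)}(\bmu^t)-U(\bmu^t)\bigr],
\end{align*}
where the first bracket, using that every $\hat U\in\cU$ inherits the Lipschitz constant $L_U$ from Assump.~\ref{assump:Lipschitz}, is controlled exactly as in the proof of Thm.~\ref{thm: incentive designer regret unknown reward}, reproducing the $L_U\sqrt{H^3SAT(K\adareg(T)+D)}$ and $36L_UH^3S\sqrt{AT\ln(THSA/\delta)}$ contributions with the enlarged $K$. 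The second bracket, summed over $t$, is the utility estimation error $\sum_t|\hat U^{k(t)}(\bmu^t)-U(\bmu^t)|\leq\sum_t w_{\hat\cU^{k(t)}}(\bmu^t)$. The steering cost analysis then carries over verbatim from Thm.~\ref{thm: incentive designer regret unknown reward}, since neither $R_{\nz}^t$ nor its agent-regret bookkeeping involves the utility estimate; the stated bound follows by plugging the enlarged $K$ into the same $4H\sqrt{T(K\adareg(T)+D)}+D$ formula.

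The main obstacle is controlling $\sum_t w_{\hat\cU^{k(t)}}(\bmu^t)$ under low-switching updates, since within any epoch $k$ the set $\hat\cU^k$ is fit only on the data up to $T_{k-1}$, so Lem.~\ref{lem: eluder confidence set bound} does not apply directly. I would threshold at $\epsilon=T^{-1}$ and, via an $\epsilon$-independence argument adapted to the low-switching setting, show that within any single epoch only $\tilde O(\dim_E(\cU,\epsilon))$ samples can exhibit width exceeding $\epsilon$ (each capped trivially by $U_{\max}$); aggregating across the $\tilde O(T^{1/6})$ time-limit-triggered epochs yields the $T^{5/6}U_{\max}\dim_E(\cU,T^{-1})$ term, while the small-width tail combines with a Cauchy--Schwarz step to produce $\sqrt{\beta_K^U\dim_E(\cU,T^{-1})T}$. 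The delicate point is balancing $T_{\mathit{epoch}}$ against this Eluder overhead: shorter epochs inflate $K$ (hurting the first block of the steering gap through $K\adareg(T)$), while longer epochs inflate the staleness cost, and the joint minimum sits at $T_{\mathit{epoch}}=T^{5/6}$, matching the rate in the statement.
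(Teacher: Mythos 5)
Your overall route is the same as the paper's: the same two-trigger bound on $K$ (doubling condition at most $HSA\log_2T$ times, time-limit condition at most $T/T_{\mathit{epoch}}$ times, with $T_{\mathit{epoch}}=T^{5/6}$), the same application of Lem.~\ref{lem: true unknown reward in confidence set} to $\cU$ plus optimism to eliminate the first term, the same reuse of the Thm.~\ref{thm: incentive designer regret unknown reward} machinery for the middle bracket with the enlarged $K$, a batched Eluder width bound for the utility estimation error, and an unchanged steering-cost analysis except for the new $K$. Your explicit observation that every $\hat U\in\cU$ must inherit the Lipschitz constant $L_U$ is a hypothesis the paper leaves implicit, and it is genuinely needed to bound $\hat U^{k}(\hat\sad^{k}_*)-\hat U^{k}(\bmu^t)$.

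The one step whose justification does not hold as written is your accounting for the $T^{5/6}U_{\max}\dim_E(\cU,T^{-1})$ term. It is not true that within a single epoch only $\tilde\cO(\dim_E(\cU,\epsilon))$ rounds can have width exceeding $\epsilon$: since $\hat\cU^k$ is frozen during the epoch, a single large-width direction can recur on every one of the up to $T_{\mathit{epoch}}$ rounds of that epoch. Moreover, even granting your premise, multiplying a per-epoch count of $\tilde\cO(\dim_E)$ by $\tilde\cO(T^{1/6})$ epochs would give $\tilde\cO(T^{1/6}U_{\max}\dim_E)$, not the claimed $T^{5/6}U_{\max}\dim_E$. The correct bookkeeping --- which is exactly what Lem.~\ref{lem: eluder bound number of large widths} and Lem.~\ref{lem: eluder confidence set bound} already encode via their batch parameter (the role played by $H$ there) --- is global: across the whole horizon at most $(4\beta_K^U/\epsilon^2+T_{\mathit{epoch}})\dim_E(\cU,\epsilon)$ rounds have width above $\epsilon$, because there are at most $\dim_E$ independent large-width directions and each can persist for at most one epoch of length $T_{\mathit{epoch}}$ before the confidence set is refreshed. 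Capping the top $T_{\mathit{epoch}}\dim_E$ widths by $U_{\max}$ is what yields $U_{\max}T_{\mathit{epoch}}\dim_E(\cU,T^{-1})=T^{5/6}U_{\max}\dim_E(\cU,T^{-1})$; the paper obtains this by invoking Lem.~\ref{lem: eluder confidence set bound} with batch size $T_{\mathit{epoch}}$ and $K$ batches together with $KT_{\mathit{epoch}}\leq T+HSAT_{\mathit{epoch}}\log_2T$, so no new independence argument is required --- you should route your argument through that lemma rather than the per-epoch count.
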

Comparing with Theorem~\ref{thm: incentive designer regret unknown reward}, we see that the steering gap has an additional term originating from the estimation of $U$. Furthermore, the bound of the number of epochs $K$ has an additional $T^{1/6}$. Similar to the discussion about $\cR$ in Section~\ref{appx:example_function_class}, we can also bound $\beta^U_K$ and $\dim_E(\cU,T^{-1})$ under suitable assumptions about $\cU$. If $\beta^U_K,\dim_E(\cU,T^{-1})\in\tilde\cO(1)$ and $\adareg(T)=\tilde\cO(\sqrt{T})$, both the steering cap and steering cost are in $\tilde\cO(T^{5/6})$ (ignoring all other constants).

\begin{proof}
We can adapt the proof of Theorem \ref{thm: incentive designer regret unknown reward} by choosing the following regret decomposition.
\begin{align*}
    U(\sad^{\pi^*})-U(\bmu^{t})
    =\left(U(\sad^{\pi^*})-\hat{U}^k(\hat\sad^{k}_*)\right)
    +\left(\hat{U}^k(\hat\sad^{k}_*)-\hat{U}^k(\bmu^{t})\right)
    +\left(\hat{U}^k(\bmu^{t})-U(\bmu^{t})\right)
\end{align*}
Using Lemma~\ref{lem: true unknown reward in confidence set} (and replacing $\cR$ by $\cU$ in the Lemma), we have $U\in\bigcap_{k=1}^K\hat\cU^k$ with probability at least $1-2\delta$. Thus, with probability at least $1-2\delta$, the first term can be bounded by 0 using optimism. The second term can be bounded in the same way as in the proof of Theorem \ref{thm: incentive designer regret unknown reward}. Summing over all $t$, the last term accumulates to
\begin{align*}
    \sum_{t=1}^T\hat{U}^k(\bmu^{t})-U(\bmu^{t})
    =\sum_{k=1}^K\sum_{t=T_{k-1}+1}^{T_k}\hat{U}^k(\bmu^{t})-U(\bmu^{t})
    \leq\sum_{k=1}^K\sum_{t=T_{k-1}+1}^{T_k}w_{\hat{\cU}^k}(\bmu^{t}).
\end{align*}
Using the fact that $T_k-T_{k-1}\leq T_{\mathit{epoch}}$ and Lemma \ref{lem: eluder confidence set bound} with $\epsilon=T^{-1}$, the sum above is at most
\begin{align*}
    \frac{KT_{\mathit{epoch}}}{T}+U_{\max}T_{\mathit{epoch}}\dim_E(\cU, T^{-1})+4\sqrt{\beta_T^UKT_{\mathit{epoch}}\dim_E(\cU,T^{-1})}.
\end{align*}
We also have to find a new bound for $K$. As before, we can enter the if block at most $HSA\log_2T$ times because of the first condition. In addition, we can enter the if block at most $T/T_{\mathit{epoch}}$ times due to the condition $T_k\geq T_{\mathit{epoch}}$. Therefore, $K\leq T/T_{\mathit{epoch}}+HSA\log_2T$ and $KT_{\mathit{epoch}}\leq T+HSAT_{\mathit{epoch}}\log_2T$.

Now, we set $T_{\mathit{epoch}}=T^{5/6}$. Then, $K\leq T^{1/6}+HSA\log_2T$ and
\begin{align*}
    \sum_{t=1}^T\hat{U}^k(\bmu^{t})-U(\bmu^{t})
    \leq\cO\left(T^{5/6}U_{\max}\dim_E(\cU,T^{-1})+\sqrt{\beta_T^U\dim_E(\cU,T^{-1})T}\right).
\end{align*}

Finally, the steering gap is the previous bound of Theorem~\ref{thm: incentive designer regret unknown reward} plus the above term.

With regard to the steering cost, the only change is the bound of $K$.

\end{proof}

\end{document}